\documentclass[journal]{IEEEtran}

%
\ifCLASSINFOpdf
\else
\fi

\usepackage{graphicx}
\usepackage{amsfonts,amssymb,amsmath,amsthm,ulem}
\normalem

\usepackage{algorithm}
\usepackage{algpseudocode}
\makeatletter

\makeatother

\usepackage{ulem}
\normalem

\usepackage{etoolbox}
\makeatletter
\AfterEndEnvironment{algorithm}{\let\@algcomment\relax}
\AtEndEnvironment{algorithm}{\kern2pt\hrule\relax\vskip3pt\@algcomment}
\let\@algcomment\relax
\newcommand\algcomment[1]{\def\@algcomment{\footnotesize#1}}

\renewcommand\fs@ruled{\def\@fs@cfont{\bfseries}\let\@fs@capt\floatc@ruled
  \def\@fs@pre{\hrule height.8pt depth0pt \kern2pt}%
  \def\@fs@post{}%
  \def\@fs@mid{\kern2pt\hrule\kern2pt}%
  \let\@fs@iftopcapt\iftrue}
\makeatother

\usepackage{upgreek}

\usepackage{tabularx} 
\setlength{\extrarowheight}{2pt} 

\usepackage{multirow}
\usepackage{booktabs}


\newcommand{\R}{{\mathbb{R}}}
\newcommand{\E}{{\mathbb{E}}}

\newcommand{\lb}{{\langle}}
\newcommand{\rb}{{\rangle}}
\newcommand{\W}{{\mathbf{W}}}
\newcommand{\Ht}{{\mathbf{H}}}
\newcommand{\Y}{{\mathbf{Y}}}
\newcommand{\V}{{\mathbf{V}}}
\newcommand{\Dd}{{\mathbf{D}}}
\newcommand{\Id}{{\mathbf{I}}}

\newcommand{\yd}{{\mathbf{y}}}

\newcommand{\bPhi}{{\bar{\bm{\Phi}}}}
\newcommand{\aPhi}{{\bm{\Phi}}}

\newcommand{\tT}{\intercal}

\newcommand{\X}{{\mathbf{X}}}
\newcommand{\x}{{\mathbf{x}}}
\newcommand{\h}{{\mathbf{h}}}

\newcommand{\LG}{{\mathbf{G}}}
\newcommand{\LH}{{\mathcal{H}}}
\newcommand{\LE}{{\mathbf{E}}}
\newcommand{\LM}{{\mathbf{M}}}
\newcommand{\dotr}[1]{#1^{\bullet}} 
\newcommand{\haPhi}{\hat{{\bm{\Phi}}}}
\newcommand{\hW}{{\hat{\mathbf{W}}}}
\newcommand{\hHt}{{\hat{\mathbf{H}}}}
\newcommand{\eps}{{\epsilon_0}}
\newcommand{\TS}{{U_S}}
\newcommand{\St}{\mathcal{O}}

\newcommand{\anti}{{ ( \mathrm{anti}  ) }}
\newcommand{\symm}{{ ( \mathrm{sym}  ) }}
\newcommand{\bfGamma}{{ \bm{\Gamma}  }}

\usepackage{xcolor}
\definecolor{darkgreen}{rgb}{0,0.4,0}

\newcommand{\stkout}[1]{\ifmmode\text{\sout{\ensuremath{#1}}}\else\sout{#1}\fi}

\usepackage{bm}

\usepackage{mathtools}

\DeclareMathOperator*{\argmin}{arg\,min}
\DeclareMathOperator*{\D}{Diag}

\newtheorem{lemm}{Lemma}

\newtheorem{thm}{Theorem}
\newtheorem{prop}{Proposition}

\newtheorem{rmq}{Remark}

\usepackage{hyperref}

\usepackage{tikz,pgfplots,pgfplotstable}   
\pgfplotsset{compat=1.7}
\usepgfplotslibrary{groupplots}

\hyphenation{op-tical net-works semi-conduc-tor}

\def\domain{\Upsilon}

\begin{document}

\title{Leveraging Joint-Diagonalization in Transform-Learning NMF}
\author{Sixin Zhang, \IEEEmembership{Member, IEEE}, Emmanuel Soubies, \IEEEmembership{Member, IEEE}, and C\'edric F\'evotte, \IEEEmembership{Fellow, IEEE}
\thanks{The authors are with IRIT, Universit\'e de Toulouse, CNRS, Toulouse, France. Email: firstname.lastname@irit.fr. This work was supported by the European Research Council (ERC FACTORY-CoG-6681839), the French National Research Agency (ANR ANITI-3IA) and the National Research Foundation, Prime Minister’s Office, Singapore under its Campus for Research Excellence and Technological Enterprise (CREATE) programme.
}
}


\maketitle
\begin{abstract}
Non-negative matrix factorization with transform learning (TL-NMF) is a recent idea that aims at learning data representations suited to NMF.
In this work, we relate TL-NMF to the classical matrix joint-diagonalization (JD) problem. We show that, when the number of data realizations is sufficiently large, TL-NMF can be replaced by a two-step approach---termed as JD+NMF---that  estimates the transform through JD, prior to NMF computation. In contrast, we found that when the number of data realizations is limited, not only is JD+NMF no longer equivalent to TL-NMF, but the inherent low-rank constraint of TL-NMF turns out to be an essential ingredient to learn meaningful transforms for NMF.     
\end{abstract}

\begin{IEEEkeywords} 
Transform learning, Nonnegative matrix factorization, Joint-diagonalization, Statistical estimation, Nonconvex optimization, Quasi-Newton method.
\end{IEEEkeywords}
\IEEEpeerreviewmaketitle


\section{Introduction}

\IEEEPARstart{L}{et} $\Y \in \R^{M \times N}$ be a random matrix that satisfies the following moment conditions: $\forall (m',m,n)$,
\begin{equation}\label{eq:GenModel}
\left\lbrace 
\begin{array}{ll}
    \E ( [\bPhi \Y]_{mn} ) = 0, & \text{(mean)} \\
    \E ( [\bPhi \Y]_{mn}^2 ) = [ \bar \W \bar \Ht]_{mn}, & \text{(variance)} \\
    \E ( [\bPhi \Y]_{mn}  [\bPhi \Y]_{m'n} ) = 0, & \text{(covariance)} 
\end{array}\right.
\end{equation}
where $\bPhi \in \R^{M \times M}$ is an orthogonal matrix and 
$\bar \W \in \R_+^{M \times \bar K}$, $\bar \Ht \in \R_+^{\bar K \times N}$ are two nonnegative matrices such that $[\bar \W \bar \Ht ]_{mn} > 0$. Here $\bar{K}$ denotes the non-negative rank of $\bar \W \bar \Ht$~\cite{Berman1994}. These fairly general conditions encompass, for instance, the popular Gaussian composite model (GCM)~\cite{Fevotte2009,Smaragdis2014}, which reads as
\begin{equation}\label{eq:GCM}
    [\bPhi \Y]_{mn} \underset{\text{i.i.d.}}{\sim} \mathcal{N}(0,[\bar  \W \bar \Ht]_{mn}), \; \forall (m,n),
\end{equation}
where $\mathcal{N}$ denotes the real-valued Gaussian distribution.

In this paper,  we are concerned with the problem of estimating the triplet $(\bPhi,\bar \W,\bar \Ht)$ from $S$ i.i.d. realizations of~$\Y$, which we denote by $\{\Y^{(s)}\}_{s=1}^S$ (this includes the most frequent scenario $S=1$). In the literature, this problem is referred to as transform-learning nonnegative matrix factorization (TL-NMF)~\cite{Fagot2018,Zhang2020On} and is also a special case of independent low-rank tensor analysis (ILRTA)~\cite{Yoshii18}. When the columns of $\Y$ are composed of overlapping segments of a temporal signal  and $\bPhi$ is a fixed frequency-transform matrix such as the DCT (we only consider real-valued transforms in this paper), the product $\bPhi \Y$ defines a short-time frequency transform. Its power values $[\bPhi \Y]_{mn}^2$ define a \emph{spectrogram}. Equation~\eqref{eq:GenModel} dictates that the spectrogram is given in expectation by the nonnegative matrix factorization (NMF) $\bar \W \bar \Ht$. The zero-mean assumption is valid for centered temporal signals such as audio signals. The zero-covariance assumption is a working assumption that reflects uncorrelation of the transform coefficients. The dictionary matrix $\bar \W$ captures meaningful spectral patterns while the activation matrix $\bar \Ht$ describes how the spectral samples are decomposed onto the dictionary.

NMF has found a large range of applications in audio signal processing, such as source separation or music transcription~\cite{Smaragdis2014,Vincent2018,Makino2018,Gillis2020}. 
In TL-NMF, the fixed transform assumption is relaxed and $\bPhi$ is estimated together with $\bar \W$ and $\bar \Ht$ \cite{Fagot2018}. This was shown to lead in some cases to more meaningful representations. For example, it was shown in \cite{Yoshii18} that  better source separation performance can be achieved with adaptive learned transform. In \cite{Zhang2020On}, it was shown that learning  $\bPhi$ allows to capture harmonic atoms with fundamental frequencies that precisely match the music notes in the data, when a DCT transform can only abide by a fixed frequency grid. TL-NMF was inspired by works about learning sparsifying transforms \cite{Ravishankar2013} and to a lesser extent independent component analysis (ICA) \cite{Hyvarinen2001,Comon2010}.

TL-NMF can be tackled through the resolution of the following optimization problem
\begin{equation}\label{eq:OptimCS}
    (\aPhi^\ast,\W^\ast,\Ht^\ast) \in \argmin_{\substack{\aPhi \in \St(M) \\ (\W,\Ht) \in F_{K}}} C_S(\aPhi,\W,\Ht),
\end{equation}
where $\St(M)$ denotes the real orthogonal matrix group on $\R^M$ and $F_{K} \subset \R^{M \times  K}_{+} \times \R^{K \times N}_{+}$  is a constraint set for the nonnegative matrices $\W$ and $\Ht$ which will be made explicit later. Here $K$ represents the factorization rank whose choice depends on the considered dataset and application~\cite{Gillis2020,Fu2019}.
The objective function $C_S$ in~\eqref{eq:OptimCS} is defined by
\begin{multline}\label{eq:CS}
C_S(\aPhi,\W,\Ht) = 
\sum_{m,n=1}^{M,N}   \frac{ \E_S ( [\aPhi \Y]_{mn}^{2}) + \eps 
}{  [\W \Ht]_{mn} + \eps  }  
 \\ +  \log  \left(  [\W \Ht]_{mn} + \eps \right) 
\end{multline}
where $\E_S ( [\aPhi \Y]_{mn}^{2})$ is a consistent estimator of  $\E ( [\aPhi \Y]_{mn}^{2})$ computed from the $S$ i.i.d. realizations $\{\Y^{(s)}\}_{s=1}^S$, and $\eps \geq 0$. 


The rationale behind the choice of Problem~\eqref{eq:OptimCS} is that, when
\begin{itemize}
    \item[i)] $S=\infty$ (i.e., $\E_S ( [\aPhi \Y]_{mn}^{2})=\E ( [\aPhi \Y]_{mn}^{2})$), and  $\eps=0$, 
    \item[ii)] $K \geq \bar K$ (i.e., the NMF rank $K$ in~\eqref{eq:OptimCS} is larger than $\bar K$, the true rank in model~\eqref{eq:GenModel}),
\end{itemize}
each global minimizer $(\aPhi^\ast,\W^\ast,\Ht^\ast)$ of $C_S$ is such that the rows of $\aPhi^\ast$ and $\bPhi$ span the same subspaces (we say that $\bPhi$ is identifiable by solving~\eqref{eq:OptimCS}) and $\W^\ast \Ht^\ast = \bar \W \bar \Ht$~\cite{Zhang2020On}. Moreover, under the GCM, $C_S$ corresponds to the expected negative log-likelihood of $\Y$ conditioned to $(\aPhi,\W,\Ht)$. 

\paragraph*{Contributions and Outline}

This identifiability result 
relies on a key property of $C_S$ that is, in the regime $S=\infty$ and $K \geq \bar K$,
\begin{equation}\label{eq:MarginWH}
 \forall \aPhi \in \St(M), \;    \min_{(\W,\Ht) \in F_{K}} C_S(\aPhi,\W,\Ht) = L_S(\aPhi), 
\end{equation}
where $L_S$ is related to the joint-diagonalization (JD) criterion derived in~\cite{PhamCardoso2001} (and formally defined in Section~\ref{sec:preiminaries}).

This suggests the following two-step 
alternative to Problem~\eqref{eq:OptimCS}
\begin{equation}\label{eq:OptimLS+IS}
\begin{array}{rl}
    \dotr{\aPhi} \in & \displaystyle \mkern-3mu \argmin_{\aPhi \in \St(M)} \, L_S(\aPhi), \\
    (\dotr{\W},\dotr{\Ht}) \in &  \displaystyle  \mkern-15mu  \argmin_{(\W,\Ht) \in F_{K}}  C_S(\dotr{\aPhi},\W,\Ht),
    \end{array}
\end{equation}
which we refer to as JD+NMF. 
Although it is straightforward from~\eqref{eq:MarginWH} that problems~\eqref{eq:OptimCS} and~\eqref{eq:OptimLS+IS} are equivalent in the ideal case  $S=\infty$ and $\eps =0$, their relation in the more practical situation $S< \infty$ and $\eps>0$ (usually required for numerical stability)
needs to be analyzed. In particular: 
\begin{itemize}
    \item How does \eqref{eq:OptimLS+IS} deviate from \eqref{eq:OptimCS} with respect to (w.r.t.) $S$? 
    \item Is one of these two formulations preferable from an optimization perspective? (quality of the reached local minimizer, convergence speed)
\end{itemize}

The main contribution of this work is to provide theoretical and numerical insights to these questions. In Section~\ref{sec:preiminaries}, we set out the basic assumptions and definitions required for our analysis and prove in Theorem~\ref{thm:ExistenceMinCs} that both  TL-NMF (Problem~\eqref{eq:OptimCS}) and JD+NMF (Problem~\eqref{eq:OptimLS+IS}) admit at least one solution. Then, Section~\ref{sec:closeness} is dedicated to the characterization of the closeness between the solution sets of these two problems. We first identify in Theorem~\ref{lem:DistSolSets} quantities that allow to  characterize  three regimes where the solution sets of TL-NMF and JD+NMF are i) disjoint, ii) partially intersecting, and iii)  equal, respectively. 
Then, by analyzing the asymptotic behavior (w.r.t. $S$) of these key quantities, we show that with high probability, 
the gaps between the two solution sets  converge to zero (in terms of the Itakua-Saito divergence \cite{Itakura1968,Fevotte2009}) as $S$ grows 
(Proposition~\ref{prop:Def_QS} and Theorem~\ref{thm:AssympCV}).
Under the GCM, we further prove in Theorem~\ref{thm:pgcmrate} that the rate of convergence is at least of the order of $O(1/S)$. In Section~\ref{sec:algo}, we adapt existing TL-NMF and JD minimization algorithms to Problems~\eqref{eq:OptimCS} and~\eqref{eq:OptimLS+IS} as they come with slightly different objective functions and constraint sets. Finally, in Section~\ref{sec:num}, we deploy these algorithms to illustrate and complement numerically our theoretical findings (tightness of the expected asymptotic behavior, behavior when $S$ is small).

\paragraph*{Notations}
For a matrix  $\X \in \R^{M \times N}$, we denote by $\x_n$, $\underline{\x}_m$ and $[\X]_{mn}$ (or $x_{mn}$) its $n$-th column, $m$-th row, and $(m,n)$-th element, respectively.  Moreover, we use the notation $|\X|^{\circ 2}$ for its element-wise modulus square. 
For a vector $\mathbf{y} \in \R^T$, $\mathbf{y} [t]$ denotes its $t$-th element and $\mathbf{y}^\intercal$ denotes its transpose.
We write $\D( \mathbf{y})$ for the diagonal matrix formed out of the vector~$\mathbf{y}$, and $\D( \mathbf{X})$ for the vector formed by the diagonal elements of $\mathbf{X}$. 
Finally, $X_n \overset{p}{\rightarrow} X$ stands for the convergence in probability of the sequence $\{X_n\}_n$ of random variables toward the  random variable $X$.


\section{Preliminaries}\label{sec:preiminaries}

\subsection{Definitions and Useful Results}


First, from now on, and throughout the paper, we set $\epsilon_0>0$. Then, let us provide
a  reformulation of the moment conditions in~\eqref{eq:GenModel} that will be relevant in our analysis.


\begin{prop}\label{prop:jdcond}
The moment conditions~\eqref{eq:GenModel} can be equivalently expressed as
\begin{align}
    \E ( \mathbf{y}_{n} )  &= \mathbf{0}, \label{zerocond}
\\
   \bPhi \bm{\Sigma}_n \bPhi^\intercal &=  
   \mathrm{Diag}(\bar  \W \bar \h_n), 
\label{jdcond}
\end{align}
   where $\bm{\Sigma}_n = \E  ( \mathbf{y}_{n} \mathbf{y}_{n}^\intercal) $ denotes the covariance matrix of the $n$-th column of $\Y$.
\end{prop}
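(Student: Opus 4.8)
The plan is to establish the equivalence entrywise, using only linearity of expectation and the identity $[\bPhi\Y]_{mn} = \underline{\bm{\phi}}_m^\intercal \yn$, where $\underline{\bm{\phi}}_m \in \R^M$ denotes the $m$-th row of $\bPhi$ viewed as a column vector, and $\yn$ the $n$-th column of $\Y$.

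First I would dispatch the mean condition. Since $\E([\bPhi\Y]_{mn}) = \underline{\bm{\phi}}_m^\intercal \E(\yn)$ and $\bPhi$ is orthogonal, its rows $\{\underline{\bm{\phi}}_m\}_{m=1}^M$ form a basis of $\R^M$; hence the scalar equations $\underline{\bm{\phi}}_m^\intercal \E(\yn) = 0$ for all $m$ hold if and only if $\E(\yn) = \mathbf{0}$. This is exactly the equivalence between the first line of~\eqref{eq:GenModel} and~\eqref{zerocond}, and it guarantees that, under either formulation, $\bm{\Sigma}_n = \E(\yn\yn^\intercal)$ is simultaneously the second-moment matrix and the covariance matrix of $\yn$.

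Next, for the second-order conditions, I would compute the generic entry of the congruence-transformed matrix: for all $m,m'$,
\[
[\bPhi\bm{\Sigma}_n\bPhi^\intercal]_{mm'} = \underline{\bm{\phi}}_m^\intercal\, \E(\yn\yn^\intercal)\, \underline{\bm{\phi}}_{m'} = \E\!\big([\bPhi\Y]_{mn}[\bPhi\Y]_{m'n}\big),
\]
again by linearity of expectation. Taking $m'=m$ identifies the diagonal of $\bPhi\bm{\Sigma}_n\bPhi^\intercal$ with the second moments $\E([\bPhi\Y]_{mn}^2)$, while for $m \neq m'$ the entries are the cross-moments $\E([\bPhi\Y]_{mn}[\bPhi\Y]_{m'n})$. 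On the other hand, $\mathrm{Diag}(\bar\W\bar\h_n)$ is by definition the diagonal matrix with $(m,m)$ entry $[\bar\W\bar\h_n]_m = [\bar\W\bar\Ht]_{mn}$ and vanishing off-diagonal entries. Therefore the single matrix identity~\eqref{jdcond} is equivalent, entrywise, to the conjunction of the variance condition (its diagonal part) and the covariance condition (its off-diagonal part) of~\eqref{eq:GenModel}. Combined with the previous paragraph, this yields the equivalence in both directions.

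The argument is essentially bookkeeping, so there is no genuine obstacle; the two points deserving a little care are (i) invoking the zero-mean property to legitimately identify $\E(\yn\yn^\intercal)$ with the covariance $\bm{\Sigma}_n$ appearing in the statement, so that the two reformulations are coupled through the same object, and (ii) keeping the column index $n$ consistent by recognizing $[\bar\W\bar\Ht]_{mn}$ as the $m$-th component of $\bar\W\bar\h_n$, so that the right-hand side of~\eqref{jdcond} indeed encodes the variance profile prescribed in~\eqref{eq:GenModel}.
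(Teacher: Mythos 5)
Your proof is correct and follows essentially the same route as the paper: reduce the mean condition to $\E(\yn)=\mathbf{0}$ via invertibility of $\bPhi$, and identify the diagonal and off-diagonal entries of $\bPhi\bm{\Sigma}_n\bPhi^\intercal$ with the variance and covariance conditions, respectively. Your write-up is in fact slightly more careful about the "if and only if" direction and about why $\bm{\Sigma}_n$ is both the second-moment and covariance matrix, but the substance is identical.
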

\begin{proof}
The mean condition in~\eqref{eq:GenModel} implies that for any $n$,  
$ \E (\bPhi \mathbf{y}_{n} ) = \mathbf{0}$, thus we have $\E ( \mathbf{y}_{n} ) = \mathbf{0}$
due to the invertibility of $\bPhi$. The variance
condition implies that the diagonal terms of
$ \bPhi \bm{\Sigma}_n \bPhi^\intercal  $ 
equal to the vector $\bar  \W \bar \h_n$. 
This is because, by definition, 
$[ \bPhi \bm{\Sigma}_n \bPhi^\intercal  ]_{mm} = [\bar \W \bar \Ht]_{mn} = ( \bar  \W \bar \h_n )[m]$. Similarly, the covariance condition 
implies that the off-diagonal terms of $ \bPhi \bm{\Sigma}_n \bPhi^\intercal  $ 
are zero. 
\end{proof}

Finally, to complete the formulation of Problems~\eqref{eq:OptimCS} and~\eqref{eq:OptimLS+IS}
we define below 
the constraint set $F_{K}$ as well as 
the empirical expectation operator  $\E_S (\cdot)$ used in the definition of $C_S$ in~\eqref{eq:OptimCS}.

\paragraph*{Constraint Set}
we define the constraint set $F_{K}$ as
\begin{multline}\label{eq:ConstraintSet}
    F_{K} = \{ (\W,\Ht) \in  \R^{M \times  K}_{+} \times \R^{K \times N}_{+} \; |  \; \forall k, \|\mathbf{w}_k\|_1  = 1 \},
\end{multline}
on which $C_S$ is well-defined (i.e., no singularity) when $\eps>0$. The normalization constraint on each column of $\W$ allows to break the scaling ambiguity between $\W$ and~$\Ht$. 
It is worth mentioning that there are other types ambiguities in NMF~\cite{Fu2018,Gillis2020}. These include permutations of the columns of $\W$ and rows of $\Ht$, or local nonnegativity-preserving rotations. Yet, considering the scaling ambiguity turns out to be an important ingredient to prove the existence of a solution to TL-NMF and JD+NMF (see the proof of Theorem~\ref{thm:ExistenceMinCs} in Appendix~\ref{apdx:Proof_ExistenceMinCs}). This motivates the definition of $F_K$ in~\eqref{eq:ConstraintSet}.

\paragraph*{Empirical Expectation}
 We consider the empirical expectation of $[\aPhi \Y]_{mn}^{2}$, that is
 \begin{equation}\label{eq:EmpiExpect}
     \E_S ( [\aPhi \Y]_{mn}^{2}) = \frac{1}{S} \sum_{s=1}^S [\aPhi \Y^{(s)}]_{mn}^{2}. 
 \end{equation}
 Moreover, we use the notation $\E_S(|\aPhi \Y|^{\circ 2})$ to refer to the point-wise empirical expectation, i.e., $[\E_S(|\aPhi \Y|^{\circ 2})]_{mn}=\E_S([\aPhi \Y]_{mn}^2)$.
Finally, we introduce the following empirical covariance matrix:
\begin{equation}\label{eq:EmpiCov}
\bm{\Sigma}_{n,S}   =    \frac{1}{S} \sum_{s=1}^S \yd_n^{(s)} (\yd_n^{(s)})^\intercal  .
\end{equation}




\subsection{From TL-NMF to JD+NMF}
For any integer $S>0$, one can easily verify
the following decomposition of the TL-NMF objective in~\eqref{eq:CS},
\begin{equation} \label{eq:decompCs}
    C_S (\aPhi, \W, \Ht) = L_S (\aPhi) + I_S (\aPhi, \W, \Ht),
\end{equation}
with $L_S$ and $I_S$ defined by
\begin{align}
      & L_S(\aPhi) = MN+ \sum_{m,n=1}^{M,N} \log ( \E_S ( [\aPhi \Y]_{mn}^{2} )  + \eps ) ,
\label{eq:jd} \\
&   I_S(\aPhi,\W,\Ht) = D_\eps (  \E_S ( | \aPhi \Y |^{ \circ 2} )   | \W \Ht    ) , \label{eq:isnmf}
\end{align}
where $D_\eps(\cdot|\cdot)$ denotes 
a regularized form of the Itakura-Saito (IS) divergence. 
For two non-negative matrices $\mathbf{A}$ and $\mathbf{B}$, it is defined by

\begin{equation}\label{eq:IsDivergence}
    D_\eps(\mathbf{A}|\mathbf{B}) = \sum_{m,n} 
    \left(  \frac{ [\mathbf{A}]_{mn} + \eps } { [\mathbf{B}]_{mn} + \eps }  -  \log \left( 
    \frac{ [\mathbf{A}]_{mn} + \eps } { [\mathbf{B}]_{mn} + \eps } \right) - 1 \right). 
\end{equation}



The term $I_S$ in~\eqref{eq:isnmf} is related to Itakura-Saito NMF (IS-NMF)~\cite{Fevotte2009}. Indeed, when $\aPhi$ is fixed, the minimization of $I_S$ with respect to $(\W,\Ht) \in F_{K}$ produces a NMF of $\E_S (| \aPhi \Y |^{\circ 2} )$ 
where the fit is measured by the IS divergence.

The term $L_S$ is related to joint-diagonalization (JD), as specified by Lemma~\ref{lemma:LinkCsLs}.
\begin{lemm}\label{lemma:LinkCsLs} 
For all $\aPhi \in \St(M)$, 
\begin{equation}\label{eq:LinkCsLs}
   L_{S} (\aPhi)  = MN + \sum_{n=1}^{N}   \log \det  \D ( \aPhi (  \bm{\Sigma}_{n,S} + \eps \mathbf{I} )
  \aPhi^\intercal ), 
\end{equation}
which corresponds, up to a constant term, to the JD criterion derived in~\cite{PhamCardoso2001}.
\end{lemm}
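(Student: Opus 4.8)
The plan is to start from the definition of $L_S$ in~\eqref{eq:jd} and reorganize the double sum over $m$ so as to recognize a log-determinant. Fix $\aPhi \in \St(M)$. For each column index $n$, I would write $\sum_{m=1}^M \log(\E_S([\aPhi\Y]_{mn}^2) + \eps)$ and show that the argument of the $m$-th logarithm is exactly the $m$-th diagonal entry of the matrix $\aPhi(\bm{\Sigma}_{n,S} + \eps\Id)\aPhi^\intercal$. This is the computational heart of the argument, and it is short: by~\eqref{eq:EmpiExpect} and~\eqref{eq:EmpiCov},
\begin{equation*}
\E_S([\aPhi\Y]_{mn}^2) = \frac{1}{S}\sum_{s=1}^S [\aPhi\yd_n^{(s)}]_m^2 = \frac{1}{S}\sum_{s=1}^S \underline{\aPhi}_m \yd_n^{(s)}(\yd_n^{(s)})^\intercal \underline{\aPhi}_m^\intercal = [\aPhi\bm{\Sigma}_{n,S}\aPhi^\intercal]_{mm},
\end{equation*}
and since $\aPhi\aPhi^\intercal = \Id$, adding $\eps$ gives $[\aPhi(\bm{\Sigma}_{n,S} + \eps\Id)\aPhi^\intercal]_{mm}$.

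Next I would sum the logarithms of these diagonal entries: $\sum_{m=1}^M \log [\aPhi(\bm{\Sigma}_{n,S}+\eps\Id)\aPhi^\intercal]_{mm} = \log\prod_{m=1}^M [\aPhi(\bm{\Sigma}_{n,S}+\eps\Id)\aPhi^\intercal]_{mm} = \log\det\D(\aPhi(\bm{\Sigma}_{n,S}+\eps\Id)\aPhi^\intercal)$, where the last equality is just the definition of the determinant of a diagonal matrix together with the operator $\D(\cdot)$ acting on a matrix (extracting the diagonal) composed with $\D(\cdot)$ acting on the resulting vector. Summing over $n$ and adding back the constant $MN$ yields~\eqref{eq:LinkCsLs}. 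I should be slightly careful that $\bm{\Sigma}_{n,S} + \eps\Id$ is positive definite (its diagonal entries are positive because $\eps > 0$), so all logarithms and the determinant are well-defined.

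Finally, to justify the last sentence of the lemma, I would recall the form of the Pham–Cardoso JD criterion from~\cite{PhamCardoso2001}: given a set of (here, $N$) symmetric positive-definite matrices $\{\bm{\Sigma}_{n,S} + \eps\Id\}_n$, it seeks an orthogonal $\aPhi$ minimizing $\sum_n \log\det\D(\aPhi(\bm{\Sigma}_{n,S}+\eps\Id)\aPhi^\intercal)$ (in the orthogonal case $\log\det(\aPhi(\bm{\Sigma}_{n,S}+\eps\Id)\aPhi^\intercal)$ is constant in $\aPhi$, so the usual normalization term contributes only a constant). Thus $L_S$ differs from that criterion only by the additive constant $MN - \sum_n \log\det(\bm{\Sigma}_{n,S}+\eps\Id)$, which is independent of $\aPhi$, establishing the claim. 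I do not anticipate a genuine obstacle here; the only thing to watch is bookkeeping of constants and making the identification with~\cite{PhamCardoso2001} precise, since that reference's criterion is often stated for general invertible (not necessarily orthogonal) transforms, and I would note explicitly that restricting to $\St(M)$ is what collapses the extra term to a constant.
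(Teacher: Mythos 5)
Your proposal is correct and follows essentially the same route as the paper's proof: identify $\E_S([\aPhi\Y]_{mn}^2)+\eps$ with the $m$-th diagonal entry of $\aPhi(\bm{\Sigma}_{n,S}+\eps\Id)\aPhi^\intercal$ via the definitions~\eqref{eq:EmpiExpect} and~\eqref{eq:EmpiCov} and the orthogonality of $\aPhi$, sum the logarithms into $\log\det\D(\cdot)$, and observe that the extra $-\log\det(\aPhi(\bm{\Sigma}_{n,S}+\eps\Id)\aPhi^\intercal)$ term in the criterion of~\cite{PhamCardoso2001} reduces to the constant $-\log\det(\bm{\Sigma}_{n,S}+\eps\Id)$ on $\St(M)$. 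No gaps.
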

\begin{proof}
See Appendix~\ref{apdx:Proof_LinkCsLs}.
\end{proof}

The minimization of $L_S$ over $\St(M)$ leads to an orthogonal matrix $\aPhi$ such that $\aPhi \bm{\Sigma}_{n,S} \aPhi^\intercal$ is as diagonal as possible.
The additional diagonal matrix $\eps  \mathbf{I}$ in \eqref{eq:LinkCsLs} can be interpreted (up to a constant normalization) as a linear shrinkage estimator for covariance matrix estimation \cite{Ledoit2004}.  

Hence, from the decomposition~\eqref{eq:decompCs}, TL-NMF can be interpreted as a trade-off between the JD of the  covariance matrices $\{\bm{\Sigma}_{n,S}\}_n$ and the NMF of the variance matrix $ \E_S (| \aPhi \Y |^{\circ 2} )$. In contrast, the alternative scheme~\eqref{eq:OptimLS+IS} attempts to achieve the same goal in a two-step fashion by first solving a JD problem to obtain $\dotr{\aPhi}$ and then solving an IS-NMF problem given $\dotr{\aPhi}$ to get $ (\dotr{\W},\dotr{\Ht})$.



\begin{rmq}\label{rmk:eps0}
From Lemma~\ref{lemma:LinkCsLs}, one can see that the ``regularization'' parameter   $\eps>0$
ensures that $L_S$ is continuous over $\St(M)$.
Indeed,  when $S < M$,  $\bm{\Sigma}_{n,S}$  have vanishing eigenvalues meaning that some diagonal entries of  $\aPhi   \bm{\Sigma}_{n,S}  \aPhi^\intercal$ may be  zero. Hence, taking $\eps>0$ ensures that all the log-terms of \eqref{eq:LinkCsLs} are non-degenerate.

\end{rmq}

\begin{rmq}\label{rmk:jd}
The JD criterion derived in \cite{PhamCardoso2001} and essentially given by \eqref{eq:LinkCsLs} does not assume $\aPhi$ to be explicitly orthogonal but merely non-singular. Many orthogonal JD algorithms were designed
in the early age of ICA when whitening was still 
a standard data pre-processing step. 
A seminal example is the Jacobi algorithm 
by Cardoso and Souloumiac \cite{cardoso1996jacobi} for JD with a least-squares criterion.
However, to ensure optimal one-step performance \cite{Car-PerfOrth1994}, 
the whitening step was eventually dropped in the ICA community. Non-orthogonal JD became the mainstream, and many algorithms ensued for various criteria (such as based on least-squares, maximum likelihood or information measures), see, e.g., \cite{pham2001joint,Yeredor2002,Ziehe2004,Absil2006,Souloumiac2009}. More recently, \cite{Bouchard2020} leverages Riemannian optimization to unify many existing methods and introduce new ones under various constraints for~$\aPhi$. 

In this work, we restrict $\aPhi$ to be orthogonal because TL-NMF 
aims to generalize short-time frequency transforms for which orthogonality is somehow natural or desired. 
Moreover when $S=\infty$, we showed that the row subspaces of 
$\bar{\aPhi}$ are identifiable \cite{Zhang2020On}. 
Such an analysis is more difficult in the general case (see \cite{Afsari2008,Kleinsteuber2013} for related discussions). Still, removing the orthogonality assumption of $\aPhi$ could be beneficial in practice 
and forms a relevant research direction
(see further discussion in Section~\ref{sec:conc}).
\end{rmq}

\section{Relationship between JD+NMF and TL-NMF}
\label{sec:closeness}

\subsection{Existence of a Solution}

Before analyzing the relation between TL-NMF and~JD+NMF,   it is worth checking that they both admit at least one solution.
To the best of our knowledge, this question has never been addressed in the TL-NMF literature, nor in NMF literature (i.e., existence of global minimizer(s) for $I_S$, see Lemma~\ref{lem:ExistenceMinIs}).

\begin{thm}\label{thm:ExistenceMinCs}
The solution sets  $\Omega^*$ of TL-NMF (Problem~\eqref{eq:OptimCS}) and $\dotr{\Omega}$ of JD+NMF (Problem~\eqref{eq:OptimLS+IS}) are nonempty and compact.
\end{thm}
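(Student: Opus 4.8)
The plan is to treat both problems with a single compactness argument, after isolating the one direction in which the feasible set fails to be compact. Since $\eps>0$, neither $C_S$ (see~\eqref{eq:CS}) nor $L_S$ has a singularity, so $\aPhi\mapsto L_S(\aPhi)$ is continuous on the compact group $\St(M)$, and $(\aPhi,\W,\Ht)\mapsto C_S(\aPhi,\W,\Ht)$ is continuous on $\St(M)\times F_K$; moreover $F_K$ is closed (a product of the compact simplices $\{\mathbf{w}\in\R_+^M:\|\mathbf{w}\|_1=1\}$ with $\R_+^{K\times N}$), and the only unbounded direction in $\St(M)\times F_K$ is the magnitude of $\Ht$. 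I would use two elementary bounds. First, $C_S$ is bounded below: with $t:=[\W\Ht]_{mn}+\eps\ge\eps$ and $a:=\E_S([\aPhi\Y]_{mn}^2)\ge 0$, each summand $(a+\eps)/t+\log t$ is minimized over $t\ge\eps$ at $t=a+\eps$ with value $1+\log(a+\eps)\ge 1+\log\eps$, hence $C_S\ge MN(1+\log\eps)>-\infty$. Second, $\E_S([\aPhi\Y]_{mn}^2)$ is bounded uniformly over $\aPhi\in\St(M)$: orthogonality gives $\sum_{m,n}[\aPhi\Y^{(s)}]_{mn}^2=\|\Y^{(s)}\|_F^2$, so $\E_S([\aPhi\Y]_{mn}^2)\le c:=\frac{1}{S}\sum_s\|\Y^{(s)}\|_F^2$ for all $(m,n)$ and all $\aPhi$.

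The crux is a coercivity estimate, and it is precisely here that the normalization built into $F_K$ matters. If $(\W,\Ht)\in F_K$ with $\|\Ht\|$ large, some entry $h_{kn}$ is large; since $\sum_m w_{mk}=1$ forces $\max_m w_{mk}\ge 1/M$, the corresponding row $m$ satisfies $[\W\Ht]_{mn}\ge w_{mk}h_{kn}\ge h_{kn}/M$, so the term $\log([\W\Ht]_{mn}+\eps)$ diverges to $+\infty$ while, by the lower bound above, every other summand of $C_S$ remains $\ge 1+\log\eps$. Hence for every $\alpha$ the set $\{(\aPhi,\W,\Ht)\in\St(M)\times F_K:C_S\le\alpha\}$ is bounded --- a sequence in it with $\|\Ht\|\to\infty$ would, after passing to a subsequence with convergent $\W$'s and a fixed triple $(m,k,n)$ of indices, force $C_S\to\infty$ --- and it is closed by continuity, hence compact. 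As $F_K\ne\emptyset$ and $C_S$ is finite there, $\inf C_S$ is finite and attained, so $\Omega^*$, being a closed subset of a compact sublevel set, is nonempty and compact. The same reasoning with $\aPhi$ held fixed (equivalently, by Lemma~\ref{lem:ExistenceMinIs}) shows that for every $\dotr\aPhi\in\St(M)$ the inner NMF problem in~\eqref{eq:OptimLS+IS} has a nonempty compact solution set.

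It then remains to assemble $\dotr\Omega$ from the two stages of JD+NMF. Weierstrass on the compact set $\St(M)$ together with continuity of $L_S$ shows that $\dotr{\St}:=\argmin_{\aPhi\in\St(M)}L_S(\aPhi)$ is nonempty and compact, and combined with the inner existence result this gives $\dotr\Omega\ne\emptyset$. Set $\psi(\aPhi):=\min_{(\W,\Ht)\in F_K}C_S(\aPhi,\W,\Ht)$ (well defined by the previous step). For boundedness, any $(\dotr\aPhi,\dotr\W,\dotr\Ht)\in\dotr\Omega$ satisfies $C_S(\dotr\aPhi,\dotr\W,\dotr\Ht)=\psi(\dotr\aPhi)\le C_S(\dotr\aPhi,\W_0,\Ht_0)\le\beta$, where $(\W_0,\Ht_0)\in F_K$ is any fixed pair and $\beta:=\sup_{\aPhi\in\St(M)}C_S(\aPhi,\W_0,\Ht_0)<\infty$ by the uniform bound above; thus $\dotr\Omega$ lies in a bounded sublevel set. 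For closedness, $\psi$ is upper semicontinuous --- if $\aPhi^{(j)}\to\aPhi^\star$ and $(\W',\Ht')$ minimizes $C_S(\aPhi^\star,\cdot,\cdot)$, then $\psi(\aPhi^{(j)})\le C_S(\aPhi^{(j)},\W',\Ht')\to C_S(\aPhi^\star,\W',\Ht')=\psi(\aPhi^\star)$ --- so the limit $(\aPhi^\star,\W^\star,\Ht^\star)$ of a convergent sequence in $\dotr\Omega$ has $\aPhi^\star\in\dotr{\St}$ and $C_S(\aPhi^\star,\W^\star,\Ht^\star)=\lim_j\psi(\aPhi^{(j)})\le\psi(\aPhi^\star)$, which together with the trivial reverse inequality forces equality and hence membership in $\dotr\Omega$. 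Being bounded and closed in a finite-dimensional space, $\dotr\Omega$ is compact.

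The hard part is the coercivity step: $F_K$ is genuinely unbounded (in the $\Ht$ block), and one must show that the objective controls $\|\Ht\|$; the argument hinges on the $\ell_1$-normalization of the columns of $\W$ --- imposed to quotient out the NMF scaling ambiguity --- which confirms the remark following~\eqref{eq:ConstraintSet} that this constraint is an essential ingredient for existence. A secondary technical point is the uniform-over-$\St(M)$ control of $\E_S([\aPhi\Y]_{mn}^2)$, which is what makes the gluing argument for $\dotr\Omega$ close.
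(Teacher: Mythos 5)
Your proof is correct, and it shares the two load-bearing ingredients with the paper's: Weierstrass on the compact group $\St(M)$ for the JD stage, and a coercivity estimate driven by the $\ell_1$-normalization of the columns of $\W$ (a large entry of $\Ht$ forces $[\W\Ht]_{m_0n_0}\geq h_{k_0n_0}/M$, and a logarithmic term then blows up) — this is exactly the mechanism of the paper's Lemma~\ref{lem:ExistenceMinIs}. Where you genuinely diverge is in how existence for TL-NMF is assembled. The paper marginalizes out $(\W,\Ht)$, defines $O_S(\aPhi)=\min_{(\W,\Ht)\in F_K}I_S(\aPhi,\W,\Ht)$, proves that $O_S$ is \emph{continuous} on $\St(M)$ (Lemma~\ref{lem:Cont_OS}, which needs a two-sided perturbation estimate of the form~\eqref{eq:Proof_ExistenceMinCs-5}), and then applies Weierstrass to $L_S+O_S$ on $\St(M)$. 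You instead establish that $C_S$ is bounded below (via the pointwise minimization of $t\mapsto(a+\eps)/t+\log t$) and coercive in the $\Ht$-block \emph{jointly} on $\St(M)\times F_K$, so that sublevel sets of $C_S$ are compact and Weierstrass applies directly on the product; continuity of the value function is never needed for existence, and for closedness of $\dotr{\Omega}$ you only use the easy (upper semicontinuous) half, which falls out of continuity of $C_S$ alone. Your route buys a shorter argument that dispenses with Lemma~\ref{lem:Cont_OS}, at the price of the auxiliary uniform bound $\E_S([\aPhi\Y]_{mn}^2)\leq\frac{1}{S}\sum_s\|\Y^{(s)}\|_F^2$ over $\St(M)$ (itself a one-line consequence of orthogonality); the paper's route is heavier here but its continuity lemma for $O_S$ is reused in its boundedness argument for $\dotr{\Omega}$, whereas you replace that use too by the explicit bound $\beta=\sup_{\aPhi}C_S(\aPhi,\W_0,\Ht_0)<\infty$. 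Both arguments correctly identify the normalization in $F_K$ as the ingredient that makes the problem well-posed.
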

\begin{proof}
See Appendix~\ref{apdx:Proof_ExistenceMinCs}.
\end{proof}

\begin{rmq}
The assumption $\eps >0$ is necessary for the functions $L_S$ and $I_S$ to be well-defined and to show the existence of solutions to TL-NMF and JD+NMF. We chose to include $\eps$ in the definitions of $L_S$ and $I_S$ but we could alternatively have set $\eps = 0$ in $L_S$ and $I_S$, and add an inequality of the form $\W \Ht \ge \eps$ in the constraint set $F_K$. This is related to the approach followed by \cite{Takahashi2018} to study the convergence of NMF with a large range of divergences (including the generalized Kullback-Leibler divergence \cite{Hien2021}) under constraints of the form $\W \ge \eps$, $\Ht \ge \eps$.
\end{rmq}




\subsection{When does JD+NMF meet TL-NMF?}

In Theorem~\ref{lem:DistSolSets}, we characterize the closeness between the solutions of JD+NMF and the solutions of TL-NMF. 
A graphical illustration of this result is depicted in Figure~\ref{fig:scheme}.


\begin{thm}\label{lem:DistSolSets}
Let $\Omega^*$ and $\dotr{\Omega}$ be defined as in Theorem~\ref{thm:ExistenceMinCs}. 
Define
\begin{align}
  &  \underline{\lambda}^* = \min_{(\aPhi,\W,\Ht) \in \Omega^*} I_S(\aPhi,\W,\Ht), \label{eq:lamb_under}\\
  &  \bar{\lambda}^* = \max_{(\aPhi,\W,\Ht) \in \Omega^*} I_S(\aPhi,\W,\Ht),  \label{eq:lamb_bar}
\end{align}
and similarly $\dotr{\underline{\lambda}}$ and $\dotr{\bar{\lambda}}$ by replacing $\Omega^*$ by $\dotr{\Omega}$. Then,
\begin{equation}\label{eq:IneqLevSet}
  0 \leq  \underline{\lambda}^* \leq  \bar{\lambda}^* \leq \dotr{\underline{\lambda}} \leq   \dotr{\bar{\lambda}} < +\infty.
\end{equation}
Moreover, 
\begin{align}
    & \dotr{\Omega} \cap \Omega^* \neq  \emptyset  \; \Longleftrightarrow \; \bar{\lambda}^* = \dotr{\underline{\lambda}}, \label{eq:CNS_partialIncl} \\
    & \dotr{\Omega} = \Omega^* \; \Longleftrightarrow \; \underline{\lambda}^* =  \dotr{\bar{\lambda}}. \label{eq:CNS_fullIncl}
\end{align}
\end{thm}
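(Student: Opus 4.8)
The plan is to reduce the whole statement to a few scalars: the optimal values $C_S^* := \min C_S$ and $L_S^* := \min_{\aPhi \in \St(M)} L_S(\aPhi)$, together with the marginal map $g(\aPhi) := \min_{(\W,\Ht) \in F_K} I_S(\aPhi,\W,\Ht)$, which is well defined on $\St(M)$ by Lemma~\ref{lem:ExistenceMinIs}. Everything then follows from the decomposition $C_S = L_S + I_S$ in~\eqref{eq:decompCs} plus two bookkeeping facts. First, for any $(\aPhi^*,\W^*,\Ht^*) \in \Omega^*$ the pair $(\W^*,\Ht^*)$ minimizes $I_S(\aPhi^*,\cdot,\cdot)$ over $F_K$ (otherwise one could decrease $C_S$ at fixed $\aPhi^*$), so $I_S(\aPhi^*,\W^*,\Ht^*) = g(\aPhi^*)$ and
\[
C_S^* = L_S(\aPhi^*) + g(\aPhi^*) = \min_{\aPhi \in \St(M)} \bigl( L_S(\aPhi) + g(\aPhi) \bigr).
\]
Second, $(\dotr{\aPhi},\dotr{\W},\dotr{\Ht}) \in \dotr{\Omega}$ means precisely that $L_S(\dotr{\aPhi}) = L_S^*$ and $(\dotr{\W},\dotr{\Ht})$ minimizes $I_S(\dotr{\aPhi},\cdot,\cdot)$, hence $I_S(\dotr{\aPhi},\dotr{\W},\dotr{\Ht}) = g(\dotr{\aPhi})$; moreover, since $I_S(\aPhi,\cdot,\cdot)$ attains its minimum, every $\aPhi \in \argmin L_S$ is the first component of some point of $\dotr{\Omega}$. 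Consequently $\underline{\lambda}^*, \bar{\lambda}^*$ are the min and max of $g$ over the $\aPhi$-projection of $\Omega^*$, while $\dotr{\underline{\lambda}}, \dotr{\bar{\lambda}}$ are the min and max of $g$ over $\argmin L_S$, and all four are attained because $\Omega^*$, $\dotr{\Omega}$ are compact (Theorem~\ref{thm:ExistenceMinCs}) and $I_S$ is continuous (here $\eps>0$ is used).

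For the chain~\eqref{eq:IneqLevSet}, $0 \le \underline{\lambda}^*$ holds since $D_\eps(\cdot|\cdot) \ge 0$ (the scalar $t \mapsto t - \log t - 1$ is nonnegative on $\R_{>0}$), the bounds $\underline{\lambda}^* \le \bar{\lambda}^*$ and $\dotr{\underline{\lambda}} \le \dotr{\bar{\lambda}}$ are trivial, and $\dotr{\bar{\lambda}} < +\infty$ by continuity of $I_S$ on the compact set $\dotr{\Omega}$. The only substantial inequality is $\bar{\lambda}^* \le \dotr{\underline{\lambda}}$: for any $\aPhi^*$ in the projection of $\Omega^*$ and any $\dotr{\aPhi} \in \argmin L_S$,
\[
L_S^* + g(\aPhi^*) \le L_S(\aPhi^*) + g(\aPhi^*) = C_S^* \le L_S(\dotr{\aPhi}) + g(\dotr{\aPhi}) = L_S^* + g(\dotr{\aPhi}),
\]
where the middle equality is the first bookkeeping fact and the last inequality is global optimality of $C_S^*$; cancelling $L_S^*$ gives $g(\aPhi^*) \le g(\dotr{\aPhi})$, and taking the max over $\Omega^*$ and the min over $\dotr{\Omega}$ yields the claim.

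For~\eqref{eq:CNS_partialIncl}, the implication ``$\Rightarrow$'' is a squeeze: any point in $\dotr{\Omega} \cap \Omega^*$ has $I_S$-value at least $\dotr{\underline{\lambda}}$ and at most $\bar{\lambda}^*$, which with $\bar{\lambda}^* \le \dotr{\underline{\lambda}}$ forces $\bar{\lambda}^* = \dotr{\underline{\lambda}}$. For ``$\Leftarrow$'', pick a maximizer $(\aPhi_1,\W_1,\Ht_1) \in \Omega^*$ of $I_S$ over $\Omega^*$; by the first fact $(\W_1,\Ht_1)$ minimizes $I_S(\aPhi_1,\cdot,\cdot)$ with value $g(\aPhi_1) = \bar{\lambda}^*$, so one only checks $L_S(\aPhi_1) = L_S^*$: this follows from $C_S^* = L_S(\aPhi_1) + \bar{\lambda}^*$, from the bound $C_S^* \le L_S^* + \dotr{\underline{\lambda}} = L_S^* + \bar{\lambda}^*$ (obtained by evaluating $C_S$ at a point of $\dotr{\Omega}$ attaining $\dotr{\underline{\lambda}}$), and from $L_S(\aPhi_1) \ge L_S^*$. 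Hence $(\aPhi_1,\W_1,\Ht_1) \in \dotr{\Omega} \cap \Omega^*$, and along the way we record the identity $C_S^* = L_S^* + \bar{\lambda}^*$, valid whenever $\bar{\lambda}^* = \dotr{\underline{\lambda}}$.

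For~\eqref{eq:CNS_fullIncl}, ``$\Rightarrow$'' is immediate: $\dotr{\Omega} = \Omega^*$ forces $\underline{\lambda}^* = \dotr{\underline{\lambda}}$ and $\bar{\lambda}^* = \dotr{\bar{\lambda}}$, which with~\eqref{eq:IneqLevSet} collapses all four quantities to a single value, in particular $\underline{\lambda}^* = \dotr{\bar{\lambda}}$. For ``$\Leftarrow$'', $\underline{\lambda}^* = \dotr{\bar{\lambda}}$ together with~\eqref{eq:IneqLevSet} again collapses the four quantities to a common value $\lambda$; in particular $\bar{\lambda}^* = \dotr{\underline{\lambda}}$, so $C_S^* = L_S^* + \lambda$ by the previous step, and $I_S \equiv \lambda$ on each of $\Omega^*$ and $\dotr{\Omega}$. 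Then any $(\aPhi,\W,\Ht) \in \Omega^*$ has $g(\aPhi) = \lambda$ and $L_S(\aPhi) = C_S^* - \lambda = L_S^*$, so it lies in $\dotr{\Omega}$; and any $(\aPhi,\W,\Ht) \in \dotr{\Omega}$ has $C_S(\aPhi,\W,\Ht) = L_S^* + \lambda = C_S^*$, so it lies in $\Omega^*$; hence $\dotr{\Omega} = \Omega^*$. The only delicate point throughout is getting the two bookkeeping facts exactly right---especially that every minimizer of $L_S$ is the $\aPhi$-component of some element of $\dotr{\Omega}$, so that $\dotr{\underline{\lambda}}$ and $\dotr{\bar{\lambda}}$ are genuine extrema of $g$ over $\argmin L_S$---and the careful use of compactness of $\Omega^*, \dotr{\Omega}$ (Theorem~\ref{thm:ExistenceMinCs}) and continuity of $I_S$ (which needs $\eps>0$) to guarantee all four extrema are attained.
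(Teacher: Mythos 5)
Your proposal is correct and takes essentially the same route as the paper: everything rests on the decomposition $C_S = L_S + I_S$ and on comparing the $L_S$-values of TL-NMF versus JD+NMF minimizers, and your marginal function $g$ is precisely the $O_S$ the paper already introduces in its existence proof. The individual arguments (nonnegativity of $D_\eps$, the cancellation giving $\bar{\lambda}^* \leq \dotr{\underline{\lambda}}$, the squeeze for \eqref{eq:CNS_partialIncl}, the collapse of all four quantities for \eqref{eq:CNS_fullIncl}) are logically equivalent to the paper's contradiction-based versions, with your treatment of \eqref{eq:CNS_fullIncl} merely spelled out more explicitly than the paper's ``following the same steps.''
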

\begin{proof}
See Appendix~\ref{apdx:Proof_DistSolSets}.
\end{proof}

  \begin{figure}
      \centering
      \includegraphics[scale=0.5]{./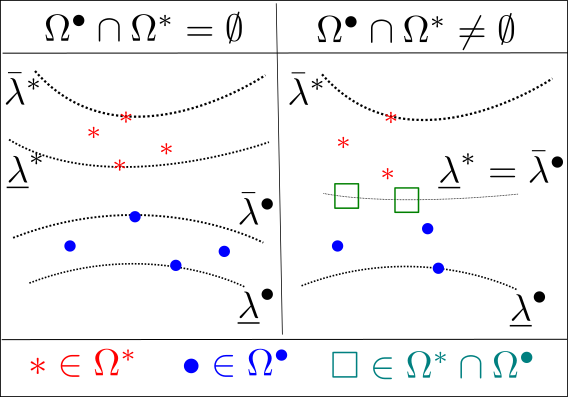}
      \caption{
      Graphical illustration of Theorem~\ref{lem:DistSolSets}. The relationship between the solutions sets $\Omega^*$ and $\dotr{\Omega}$ of TL-NMF and JD+NMF is described in terms of specific level lines of $I_S$. Note that we represented here $\Omega^*$ and $\dotr{\Omega}$ as countable sets for the sake of illustration, but they may not be.}
      \label{fig:scheme}
  \end{figure}

  There are at least two situations where JD+NMF is equivalent to TL-NMF (i.e., \eqref{eq:CNS_fullIncl} is satisfied):
  \begin{itemize}
      \item $S=\infty$ and $K\geq \bar K$,
      \item $S < \infty$ and  $K \geq \min \{M,N\}$.
  \end{itemize}
  Indeed, in these two cases we have that, $\forall \aPhi \in \St(M)$,
  \begin{equation}\label{eq:ExactFacto}
      \exists (\W,\Ht) \in F_{K}, \text{ such that }\; \E_S(|\aPhi \Y|^{\circ 2} ) = \W \Ht,
  \end{equation}
  i.e., there exists an exact factorization of $\E_S(|\aPhi \Y|^{\circ 2} )$. When $K \geq \min \{M,N\}=N$ it suffices to populate $\W$ with $\E_S(|\aPhi \Y|^{\circ 2})$, $\Ht$ with $\Id_N$, and to complete with zeros. A similar exact factorization can be obtained when $K \geq \min \{M,N\}=M$ by switching the roles of $\W$ and $\Ht$.
  For the case $S=\infty$, we refer the reader to the proof of Proposition~\ref{prop:Def_QS} in Appendix~\ref{apdx:proof_Def_QS}.  Consequently, in such cases, we get from~\eqref{eq:ExactFacto} that
  \begin{equation}\label{eq:I_S_eq_0}
      \forall \aPhi \in \St(M), \; \min_{(\W,\Ht) \in F_{K}} I_S(\aPhi,\W,\Ht) = 0.
  \end{equation}
  It then follows from~\eqref{eq:OptimCS}  (resp.,~\eqref{eq:OptimLS+IS}) and~\eqref{eq:decompCs} that
  \begin{equation}
      \forall (\aPhi,\W,\Ht) \in \Omega^* \, (\mathrm{resp.,} \,  \dotr{\Omega}), \; I_S(\aPhi,\W,\Ht) = 0
  \end{equation}
  and thus $ \underline{\lambda}^* =  \dotr{\bar{\lambda}} =0$.  \\
  
  The situation is more involved when $S< \infty$ and $K < \min\{M,N\}$. 
  In this case, we get from Theorem~\ref{lem:DistSolSets} that the closeness between the solutions of JD+NMF and TL-NMF is controlled by the inner $\dotr{\underline{\lambda}} - \bar{\lambda}^*$ and 
  the outer  $\dotr{\bar{\lambda}} - \underline{\lambda}^*$ gaps. Indeed, by definition, we get that for any $({\aPhi}^*,{\W}^*,{\Ht}^*) \in \Omega^*$ and $(\dotr{\aPhi},\dotr{\W},\dotr{\Ht}) \in \dotr{\Omega}$,
   \begin{equation}\label{eq:innerBound}
   0 \leq \dotr{\underline{\lambda}} - \bar{\lambda}^* \leq \dotr{I_S} - I_S^*  \leq \dotr{\bar{\lambda}} - \underline{\lambda}^* ,
 \end{equation}
 where $\displaystyle \dotr{I_S} = I_S (\dotr{\aPhi},\dotr{\W},\dotr{\Ht})$ and $\displaystyle I_S^*  = {I_S}({\aPhi}^*,{\W}^*,{\Ht}^*)$. Moreover, one can see that the outer gap also controls the proximity between minimizers in terms of $C_S$ values because 
 \begin{equation}\label{eq:CSBound}
    0 \leq   \dotr{C_S} - C_S^* \leq \dotr{I_S} - I_S^* 
    \leq  \dotr{\bar{\lambda}} - \underline{\lambda}^*, 
 \end{equation}
 where the first inequality is obtained by combining~\eqref{eq:decompCs} with the fact that $\displaystyle \dotr{L_S} \leq L_S^*$. Note that here, 
 $\displaystyle \dotr{C_S}$ and  $\displaystyle C_S^* $ (resp., $\displaystyle \dotr{L_S}$, and  $\displaystyle L_S^* $)  are the counterparts of $\displaystyle \dotr{I_S}$ and $\displaystyle I_S^*$ for the objectives $C_S$ (resp., $L_S$). In  Section~\ref{sec:assymptotic}, we  shall analyse the asymptotic behaviour of the inner and outer gaps.\\
 
  Finally, let us emphasize that the inequalities~\eqref{eq:IneqLevSet} also reveal that the transforms $\dotr{\aPhi}$ obtained by JD+NMF are 
\textit{at best} as amenable to NMF (in terms of  IS-divergence)  as any $\aPhi^*$
  learned by TL-NMF.


\begin{rmq}
The above discussion about the quality of the learned $\aPhi$ in terms of  $I_S$  values may raise the following question: why not directly minimizing $I_S$ with respect to  $(\aPhi,\W,\Ht)$? The latter approach was in particular the one considered in the original TL-NMF paper \cite{Fagot2018} (with an additional sparsity-enforcing term for $\Ht$), an arbitrary choice inherited from IS-NMF \cite{Fevotte2011}. As a matter of fact, in light of our results, several arguments play in favor of minimizing $C_S$ rather than $I_S$. First, under the GCM, $C_S$ comes with a probabilistic interpretation (log-likelihood function). Second, when $S=\infty$,  there exist $(\W,\Ht)$ such that $I_S(\aPhi,\W,\Ht)=0$ for all $\aPhi$ (see~\eqref{eq:I_S_eq_0}). This means that $I_S$ does not constitutes a good measure to discriminate the $\aPhi$. In contrast, the minimization of $C_S$ allows to identify the true transform $\bPhi$ (the term $L_S(\aPhi)$ acts somehow as a penalization term)~\cite{Zhang2020On}. Lastly, this ability to learn meaningful transform (e.g., close to $\bPhi$) by minimizing $C_S$ appears to be also true when $S$ is small (see Section~\ref{sec:expe_S_small}).
\end{rmq}

\subsection{Asymptotic Analysis}\label{sec:assymptotic}
 
 In this section we analyze the closeness between JD+NMF and TL-NMF when   $S<\infty$ and $K \in [\bar K,\min\{M,N\})$.\footnote{The more intricate case $K<\bar{K}$ is not considered in the paper.} 
 We derive a sufficient condition under which the  outer gap $ \dotr{\bar{\lambda}} - \underline{\lambda}^*$  (and consequently the inner gap $ \dotr{\underline{\lambda}} - \bar{\lambda}^*$) converges (in probability) to zero  as $S$ grows. Given that
 \begin{equation}\label{eq:first_bound_gap}
      \dotr{\underline{\lambda}} - \bar{\lambda}^* 
    \leq  \dotr{\bar{\lambda}} - \underline{\lambda}^* 
    \leq \dotr{\bar{\lambda}},
 \end{equation}
 we first provide in Proposition~\ref{prop:Def_QS} an upper bound of $\dotr{\bar{\lambda}}$ 
 that is independent of the solution sets $\Omega^*$ and $\dotr{\Omega}$.
 
\begin{prop}
\label{prop:Def_QS} 
Under condition \eqref{eq:GenModel} and $K \geq \bar K$, 
we have for all $S>0$,
\begin{equation}
    \dotr{\bar{\lambda}}
    \leq \max_{\aPhi \in \St(M)} \, Q_S(\aPhi)
\end{equation}
where
\begin{equation}\label{eq:def_QS}
    Q_S(\aPhi) = D_\eps (  \E_S( | \aPhi \Y |^{  \circ 2} )    \,|\, \E( | \aPhi \Y |^{ \circ 2}) ).
\end{equation}
\end{prop}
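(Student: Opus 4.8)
The plan is to reduce the claim to the pointwise inequality
\[
  \min_{(\W,\Ht)\in F_{K}} I_S(\aPhi,\W,\Ht)\;\le\;Q_S(\aPhi)
  \qquad\text{for all }\aPhi\in\St(M),
\]
and then evaluate it at the transform $\dotr{\aPhi}$ produced by JD+NMF. First I would unfold $\dotr{\Omega}$: by the splitting $C_S=L_S+I_S$ in~\eqref{eq:decompCs} and the fact that $L_S(\aPhi)$ does not depend on $(\W,\Ht)$, the second line of~\eqref{eq:OptimLS+IS} is the same as $(\dotr{\W},\dotr{\Ht})\in\argmin_{(\W,\Ht)\in F_{K}}I_S(\dotr{\aPhi},\W,\Ht)$. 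Hence every $(\dotr{\aPhi},\dotr{\W},\dotr{\Ht})\in\dotr{\Omega}$ satisfies $I_S(\dotr{\aPhi},\dotr{\W},\dotr{\Ht})=\min_{(\W,\Ht)\in F_{K}}I_S(\dotr{\aPhi},\W,\Ht)$, so the pointwise bound gives $I_S(\dotr{\aPhi},\dotr{\W},\dotr{\Ht})\le Q_S(\dotr{\aPhi})\le\max_{\aPhi\in\St(M)}Q_S(\aPhi)$; taking the maximum over $\dotr{\Omega}$ then yields $\dotr{\bar{\lambda}}\le\max_{\aPhi}Q_S(\aPhi)$. (Since $\eps>0$, $Q_S$ is continuous on the compact group $\St(M)$ --- both $\E_S(|\aPhi\Y|^{\circ 2})$ and $\E(|\aPhi\Y|^{\circ 2})$ are polynomial in $\aPhi$ and $D_\eps$ has no singularity --- so the maximum is attained.)

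To prove the pointwise bound it is enough to exhibit a feasible $(\W_0,\Ht_0)\in F_{K}$ with $\W_0\Ht_0=\E(|\aPhi\Y|^{\circ 2})$, because then $\min_{(\W,\Ht)\in F_{K}}I_S(\aPhi,\W,\Ht)\le I_S(\aPhi,\W_0,\Ht_0)=D_\eps(\E_S(|\aPhi\Y|^{\circ 2})\,|\,\E(|\aPhi\Y|^{\circ 2}))=Q_S(\aPhi)$. To construct it, write the $m$-th row of $\aPhi$ as $\underline{\bm{\phi}}_m^{\intercal}$, so that $[\aPhi\Y]_{mn}=\underline{\bm{\phi}}_m^{\intercal}\yd_n$ and $\E([\aPhi\Y]_{mn}^2)=\underline{\bm{\phi}}_m^{\intercal}\bm{\Sigma}_n\underline{\bm{\phi}}_m$. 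By Proposition~\ref{prop:jdcond}, $\bm{\Sigma}_n=\bPhi^{\intercal}\,\D(\bar{\W}\bar{\h}_n)\,\bPhi$, hence, setting $\vd^{(m)}=\bPhi\underline{\bm{\phi}}_m$,
\[
  \E([\aPhi\Y]_{mn}^2)=\sum_{k=1}^{M}\big(v^{(m)}_k\big)^2\,[\bar{\W}\bar{\Ht}]_{kn}.
\]
Introducing $\tilde{\W}\in\R_+^{M\times M}$ with $[\tilde{\W}]_{mk}=(v^{(m)}_k)^2$, this reads $\E(|\aPhi\Y|^{\circ 2})=\tilde{\W}\bar{\W}\bar{\Ht}=(\tilde{\W}\bar{\W})\bar{\Ht}$, i.e.\ the product of the nonnegative $M\times\bar{K}$ matrix $\tilde{\W}\bar{\W}$ by the nonnegative $\bar{K}\times N$ matrix $\bar{\Ht}$. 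Since $K\ge\bar{K}$, padding $\tilde{\W}\bar{\W}$ with $K-\bar{K}$ zero columns and $\bar{\Ht}$ with $K-\bar{K}$ zero rows gives nonnegative $\W_1\in\R_+^{M\times K}$ and $\Ht_1\in\R_+^{K\times N}$ with $\W_1\Ht_1=\E(|\aPhi\Y|^{\circ 2})$.

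Finally I would enforce the normalization constraint defining $F_{K}$. For each column index $k$, if $c_k:=\|\mathbf{w}_{1,k}\|_1>0$ I replace $(\mathbf{w}_{1,k},\underline{\h}_{1,k})$ by $(\mathbf{w}_{1,k}/c_k,\,c_k\underline{\h}_{1,k})$, which does not change the product; if $c_k=0$ then $\mathbf{w}_{1,k}=\mathbf{0}$, so the $k$-th column contributes nothing and I replace $\mathbf{w}_{1,k}$ by $\tfrac{1}{M}\mathbf{1}_M$ and $\underline{\h}_{1,k}$ by $\mathbf{0}$. This produces $(\W_0,\Ht_0)\in F_{K}$ with $\W_0\Ht_0=\E(|\aPhi\Y|^{\circ 2})$, completing the pointwise bound and hence the proposition. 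I do not anticipate a genuine obstacle: all the identities are elementary linear algebra. The one step that requires a little care, rather than difficulty, is the bookkeeping around $F_{K}$ --- the column rescaling and the treatment of possibly vanishing columns of $\W$ --- since an arbitrary exact nonnegative factorization of $\E(|\aPhi\Y|^{\circ 2})$ need not already satisfy $\|\mathbf{w}_k\|_1=1$. Everything else is a matter of substituting a feasible point into the minimization defining $I_S$ and invoking the additive splitting $C_S=L_S+I_S$.
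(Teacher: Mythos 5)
Your proposal is correct and follows essentially the same route as the paper's proof: reduce the claim to $\min_{(\W,\Ht)\in F_K} I_S(\aPhi,\W,\Ht)\le Q_S(\aPhi)$ and establish it by exhibiting an exact factorization $\E(|\aPhi\Y|^{\circ 2})=(\mathbf{D}^{\circ 2}\bar\W)\bar\Ht$ with $\mathbf{D}=\aPhi\bPhi^\intercal$ (your $(v^{(m)}_k)^2$ is exactly the paper's $d_{mk}^2$), padded to rank $K$ and column-normalized to land in $F_K$. The only cosmetic difference is that the paper normalizes by transferring $\|\bar{\mathbf{w}}_k\|_1$ onto $\bar\Ht$ and uses $\sum_m d_{mm'}^2=1$ directly, whereas you perform a generic a posteriori rescaling with an explicit treatment of vanishing columns; both yield the same feasible point.
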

\begin{proof}
See Appendix~\ref{apdx:proof_Def_QS}. 
\end{proof}
 
We now derive in Theorem~\ref{thm:AssympCV} a sufficient condition for the upper bound $\max_{\aPhi} Q_S(\aPhi) $  to converge to zero in probability.

\begin{thm}\label{thm:AssympCV}
Assume that the empirical estimator $\E_S ( |\aPhi \Y|^{\circ 2})$ converges uniformly toward $\E( |\aPhi \Y|^{ \circ 2})$ in probability, i.e.,
    \begin{equation}
         \max_{(m,n)} \max_{\aPhi \in \St(M)}\bigg | \E_S ( [\aPhi \Y ]_{mn}^2 )  - \E ( [\aPhi \Y]^{2}_{mn} )  \bigg |  \overset{p}{\to} 0 , 
        \label{erruniformTh}
    \end{equation}
Then 
\begin{equation}
    \max_{\aPhi \in \St(M)} Q_S(\aPhi)  \overset{p}{\to} 0 \; \text{ as } \; S \to \infty.
\end{equation}
\end{thm}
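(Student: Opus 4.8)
The plan is to bound $Q_S(\aPhi)$ uniformly in $\aPhi$ by a quantity that only depends on the sup-norm discrepancy appearing in~\eqref{erruniformTh}, and then invoke the hypothesis. First I would unfold the definition of $Q_S$ via~\eqref{eq:IsDivergence}: writing $a_{mn}(\aPhi) = \E_S([\aPhi\Y]_{mn}^2)$, $b_{mn}(\aPhi) = \E([\aPhi\Y]_{mn}^2)$, and $r_{mn}(\aPhi) = (a_{mn}(\aPhi)+\eps)/(b_{mn}(\aPhi)+\eps)$, we have $Q_S(\aPhi) = \sum_{m,n} \bigl(r_{mn}(\aPhi) - \log r_{mn}(\aPhi) - 1\bigr) = \sum_{m,n} d(r_{mn}(\aPhi))$, where $d(x) = x - \log x - 1 \ge 0$ is the scalar Itakura--Saito discrepancy from $x$ to $1$. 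Since $MN$ is finite, it suffices to control each term.

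The key deterministic step is to show that $d(r_{mn}(\aPhi))$ is small whenever $|a_{mn}(\aPhi) - b_{mn}(\aPhi)|$ is small, \emph{uniformly in} $\aPhi$. The denominator is harmless because $b_{mn}(\aPhi)+\eps \ge \eps > 0$ always, so $r_{mn}(\aPhi)$ stays bounded away from $0$: precisely $|r_{mn}(\aPhi) - 1| = |a_{mn}(\aPhi)-b_{mn}(\aPhi)|/(b_{mn}(\aPhi)+\eps) \le \delta_S/\eps$, where $\delta_S := \max_{m,n}\max_{\aPhi}|a_{mn}(\aPhi)-b_{mn}(\aPhi)|$ is exactly the left-hand side of~\eqref{erruniformTh}. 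To upper-bound the numerator $a_{mn}(\aPhi)$ one can note $b_{mn}(\aPhi) = [\bPhi^{\perp}\cdots]$— more simply, $b_{mn}(\aPhi) \le \|\bm{\Sigma}_n\|_{\mathrm{op}} \le \max_n \operatorname{tr}(\bm{\Sigma}_n) =: B$ since $\aPhi$ is orthogonal, and this $B$ is a finite constant depending only on the model~\eqref{eq:GenModel}; hence $r_{mn}(\aPhi) \le (B + \delta_S + \eps)/\eps$. Then, because $d$ is continuous with $d(1)=0$ and is (locally Lipschitz, in fact) bounded on the compact interval $[\,1-\delta_S/\eps,\; 1+\delta_S/\eps\,]$ once $\delta_S$ is, say, $\le \eps/2$, we get $d(r_{mn}(\aPhi)) \le \kappa_\eps\, |r_{mn}(\aPhi)-1| \le (\kappa_\eps/\eps)\,\delta_S$ for a constant $\kappa_\eps$ (e.g., the Lipschitz constant of $d$ on $[1/2,\,3/2]$). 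Summing over the $MN$ entries yields the clean bound $\max_{\aPhi\in\St(M)} Q_S(\aPhi) \le (MN\kappa_\eps/\eps)\,\delta_S$ on the event $\{\delta_S \le \eps/2\}$.

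Finally I would convert this into convergence in probability. Let $\eta > 0$. For any target accuracy, choose $S$ large enough that $\Prob(\delta_S \le \eps/2) $ is close to $1$ (possible by~\eqref{erruniformTh}) and, on that event, $(MN\kappa_\eps/\eps)\,\delta_S < \eta$ with high probability, again by~\eqref{erruniformTh}. Concretely, $\Prob\bigl(\max_\aPhi Q_S(\aPhi) > \eta\bigr) \le \Prob\bigl(\delta_S > \eps/2\bigr) + \Prob\bigl(\delta_S > \eta\eps/(MN\kappa_\eps)\bigr) \to 0$ as $S\to\infty$. This gives $\max_{\aPhi\in\St(M)} Q_S(\aPhi) \overset{p}{\to} 0$, as claimed.

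The main obstacle, and the step deserving the most care, is the uniform-in-$\aPhi$ control: one must make sure every bound (the lower bound $b_{mn}+\eps \ge \eps$, the upper bound $b_{mn}\le B$, and the Lipschitz/boundedness constant for $d$) is genuinely independent of $\aPhi \in \St(M)$. The regularization $\eps>0$ is doing the essential work here — it is what keeps $r_{mn}(\aPhi)$ uniformly inside a compact subinterval of $(0,\infty)$ on which $\log$ is well-behaved — so the argument would break without it, consistent with Remark~\ref{rmk:eps0}. A secondary, more routine point is verifying the operator-norm bound $b_{mn}(\aPhi) = [\aPhi \bm{\Sigma}_n \aPhi^\intercal]_{mm} \le \lambda_{\max}(\bm{\Sigma}_n)$, which follows from $\aPhi$ being orthogonal and $\bm{\Sigma}_n$ positive semidefinite.
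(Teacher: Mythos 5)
Your proposal is correct and follows essentially the same route as the paper's proof: both reduce the claim to a deterministic, $\aPhi$-uniform bound of $Q_S$ in terms of the sup-norm estimation error, using $\eps>0$ to confine the ratio $r_{mn}(\aPhi)$ to a compact subinterval of $(0,\infty)$ around $1$, and then transfer~\eqref{erruniformTh} through that bound. The paper packages this as Lemmas~\ref{lem:Equiv_UniformCV_Es} and~\ref{lem:Bound_Qs_with_Ts} (majorizing $x-\log x-1$ by $x+1/x-2$ on $[1-\TS,1+\TS]$), whereas you use a local Lipschitz bound for the same scalar divergence; the difference is only in bookkeeping, not in the argument.
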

\begin{proof}
See Appendix~\ref{sec:AssympCV}.
\end{proof}

Under the GCM, we can go one step further than Theorem~\ref{thm:AssympCV} by deriving the convergence rate of $\max_{\aPhi} Q_S(\aPhi)$.


\begin{thm}\label{thm:pgcmrate}
Under  GCM defined by~\eqref{eq:GCM},  condition~\eqref{erruniformTh} is always satisfied.
Moreover, for $t>0$ and $S$ large enough such that 
$h_S = 3 \frac{\sqrt{M}+t}{\sqrt{S}} < 1 $, we have 
\begin{equation}\label{eq:pgcmrate}
     \max_{\aPhi \in \St(M)} Q_S(\aPhi) < MN \frac{h_S^2}{1-h_S}, 
\end{equation}
with probability at least $(1 -2 e^{-t^2/2})^N$.
\end{thm}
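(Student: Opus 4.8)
The plan is to reduce the bound on $\max_{\aPhi} Q_S(\aPhi)$ to a single non-asymptotic concentration inequality for the sample covariance matrices $\bm{\Sigma}_{n,S}$ defined in~\eqref{eq:EmpiCov}, and then to propagate the resulting \emph{multiplicative} fluctuation through the regularized Itakura--Saito divergence defining $Q_S$ in~\eqref{eq:def_QS}. First I would use the GCM~\eqref{eq:GCM} to observe that the entries of $\bPhi\Y$ are independent Gaussians, so the columns $\yn^{(s)}$ are i.i.d.\ in $s$, independent in $n$, and distributed as $\mathcal{N}(\mathbf{0},\bm{\Sigma}_n)$ with $\bm{\Sigma}_n=\bPhi^\intercal\D(\bar\W\bar\h_n)\bPhi\succ 0$ (Proposition~\ref{prop:jdcond}, using $[\bar\W\bar\Ht]_{mn}>0$). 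Since any $\aPhi\in\St(M)$ has unit-norm rows $\underline{\aPhi}_m$ and $[\aPhi\Y^{(s)}]_{mn}=\underline{\aPhi}_m^\intercal\yn^{(s)}$, one has $\E([\aPhi\Y]_{mn}^2)=\underline{\aPhi}_m^\intercal\bm{\Sigma}_n\underline{\aPhi}_m$ and $\E_S([\aPhi\Y]_{mn}^2)=\underline{\aPhi}_m^\intercal\bm{\Sigma}_{n,S}\underline{\aPhi}_m$; hence, uniformly in $\aPhi$ and $(m,n)$, their discrepancy is governed by the quadratic form $u\mapsto u^\intercal(\bm{\Sigma}_{n,S}-\bm{\Sigma}_n)u$ restricted to the unit sphere.

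Next I would establish the concentration. Writing $\bm{\Sigma}_{n,S}=\bm{\Sigma}_n^{1/2}\widehat{\LG}_n\bm{\Sigma}_n^{1/2}$, where $\widehat{\LG}_n$ is the empirical covariance of $S$ i.i.d.\ standard Gaussian vectors in $\R^M$ (namely $\bm{\Sigma}_n^{-1/2}\yn^{(s)}$), the Davidson--Szarek bound on the extreme singular values of a Gaussian matrix gives, for each fixed $n$ and any $t>0$, that $\|\widehat{\LG}_n-\Id\|_{\mathrm{op}}\le 2\delta+\delta^2$ with probability at least $1-2e^{-t^2/2}$, where $\delta=(\sqrt{M}+t)/\sqrt{S}$. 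The hypothesis $h_S<1$ forces $\delta<1$, so on this event $\|\widehat{\LG}_n-\Id\|_{\mathrm{op}}<3\delta=h_S$, whence $|u^\intercal(\bm{\Sigma}_{n,S}-\bm{\Sigma}_n)u|=|(\bm{\Sigma}_n^{1/2}u)^\intercal(\widehat{\LG}_n-\Id)(\bm{\Sigma}_n^{1/2}u)|<h_S\,u^\intercal\bm{\Sigma}_n u$ for every unit $u$. Because the $N$ columns are independent under the GCM, the intersection of the corresponding $N$ events has probability at least $(1-2e^{-t^2/2})^N$, and on it $|\E_S([\aPhi\Y]_{mn}^2)-\E([\aPhi\Y]_{mn}^2)|<h_S\,\E([\aPhi\Y]_{mn}^2)$ for all $\aPhi\in\St(M)$ and all $(m,n)$.

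From here the argument is routine. Adding $\eps>0$ to numerator and denominator preserves the relative bound, so on the same event each ratio $r_{mn}\defequal(\E_S([\aPhi\Y]_{mn}^2)+\eps)/(\E([\aPhi\Y]_{mn}^2)+\eps)$ lies in $(1-h_S,1+h_S)$. Using $\log r\ge 1-1/r$, i.e.\ $r-1-\log r\le(r-1)^2/r$ for $r>0$ with equality only at $r=1$, each summand of $D_\eps$ in~\eqref{eq:IsDivergence} is bounded by $h_S^2/r_{mn}\le h_S^2/(1-h_S)$, strictly when $h_S>0$; summing the $MN$ terms gives $Q_S(\aPhi)<MN\,h_S^2/(1-h_S)$ for every $\aPhi$, and by compactness of $\St(M)$ and continuity of $Q_S$ the maximum is attained and inherits the strict inequality, which is~\eqref{eq:pgcmrate}. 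For the claim that~\eqref{erruniformTh} always holds under the GCM, I would note that $\E([\aPhi\Y]_{mn}^2)\le\|\bm{\Sigma}_n\|_{\mathrm{op}}=\max_m[\bar\W\bar\Ht]_{mn}\le B\defequal\max_{m,n}[\bar\W\bar\Ht]_{mn}<\infty$, so the display above yields $\max_{m,n}\max_{\aPhi\in\St(M)}|\E_S([\aPhi\Y]_{mn}^2)-\E([\aPhi\Y]_{mn}^2)|<h_S B$ on an event of probability at least $(1-2e^{-t^2/2})^N$; taking $t=t_S\to\infty$ slowly (say $t_S=\sqrt{\log S}$) drives $h_S\to 0$ while $(1-2e^{-t_S^2/2})^N\to 1$, giving the convergence in probability.

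The only non-elementary input is the Davidson--Szarek inequality of the second step; everything else is bookkeeping. The point that deserves care — and the reason the estimate is uniform over the whole orthogonal group — is that an operator-norm control of $\widehat{\LG}_n-\Id$ transfers \emph{for free} to a multiplicative control of every diagonal entry $[\aPhi\bm{\Sigma}_{n,S}\aPhi^\intercal]_{mm}$, precisely because the rows of an orthogonal $\aPhi$ are unit vectors; the remainder reduces to the second-order behaviour of $r\mapsto r-1-\log r$ near $r=1$ and to turning the union over columns into a product via independence.
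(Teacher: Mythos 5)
Your proof is correct and follows essentially the same route as the paper: whitening the columns so that $\bm{\Sigma}_n^{-1/2}\bm{\Sigma}_{n,S}\bm{\Sigma}_n^{-1/2}$ becomes the Gram matrix of a standard Gaussian sample, invoking the Davidson--Szarek/Vershynin bound on its extreme singular values, multiplying the per-column probabilities via independence, and propagating the resulting multiplicative control of the ratios through $f(r)=r-\log r-1\le (r-1)^2/r$ to get the $MN\,h_S^2/(1-h_S)$ bound. The paper merely packages the same steps into two intermediate lemmas (one reducing the uniform relative error over $\St(M)$ to $\max_n\|\bm{\Sigma}_n^{-1/2}\bm{\Sigma}_{n,S}\bm{\Sigma}_n^{-1/2}-\mathbf{I}\|$, one bounding $Q_S$ by that quantity), so no substantive difference remains.
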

\begin{proof}
See Appendix~\ref{sec:AssympCV}. 
\end{proof}

It is worth mentioning that the proof of Theorem~\ref{thm:pgcmrate} makes use of existing results on covariance matrix estimation~\cite{vershynin_2012} which can be generalized to the case where the entries of $\Y$ follow some sub-Gaussian distributions~\cite{vershynin_2018}. 

 
\begin{rmq}\label{rmq:other_rate}
From Theorem~\ref{thm:pgcmrate}, we get that with high probability, as $S$ grows,  $\max_\aPhi Q_S(\aPhi)$ converges at least at a rate of $O( 1/S)$. Indeed, for any $t>0$, there exists $S^*$ such that for all $S \geq S^*$, $h_S < 1$. This means that for $S$ sufficiently large,~\eqref{eq:pgcmrate} holds with probability almost one. Then, the claim comes by observing that, for $S$ large, $h_S = O(1/\sqrt{S}  )$. 
\end{rmq}

\begin{rmq}\label{rmq:tightness_rate}
We can provide insights on the tightness of the uniform bound~\eqref{eq:pgcmrate}
by analyzing a point-wise bound such as the asymptotic decay rate of $Q_S(\bPhi)$ with $S$. 
Let us first observe that  
\begin{equation}
   \mkern-15mu  Q_S(\bPhi) = \mkern-10mu \sum_{m,n=1}^{M,N} \mkern-10mu f \left ( X_{mn}^\bPhi  \right ) \text{ for } X_{mn}^\bPhi = \frac{\E_S([ \bPhi \Y ]^{ 2}_{mn}) + \eps  }{\E([ \bPhi \Y ]^{2}_{mn}) + \eps }
\end{equation}
where $f(x)=x-\log(x)-1$. Moreover, from~\eqref{eq:GenModel}, we get that $\E([ \bPhi \Y ]^{ 2}_{mn}) >0$ and, for $S$ sufficiently large, we can assume that $\E_S([ \bPhi \Y ]^{ 2}_{mn}) >0$ (see discussion in Remark~\ref{rmk:eps0}). Hence, we can set (for the purpose of this remark) $\eps =0$. We then get
from the definition of $\E_S$ in~\eqref{eq:EmpiExpect} that  $SX_{mn}^\bPhi$ follows a Chi-squared distribution $\mathcal{X}_S$ of degree $S$. We deduce that
\begin{align}
     \E  ( f \left(X_{mn}^\aPhi \right) )&  = \frac{1}{S}\E (S X_{mn}^\aPhi) + \log(S) - \E \left( \log \left(S X_{mn}^\aPhi \right) \right)-1 \notag \\
     &= \log(S) - \psi(S/2)  - \log(2) \label{eq:proof_Ts_GCM-2}
\end{align}
using the facts that $\E(S X_{mn}^\aPhi) = S$ and $\E( \log (S X_{mn}^\aPhi) ) = \psi(S/2) + \log(2)$ where $\psi$ denotes the di-gamma function.
Then, from the weak law of large number, we  obtain that
\begin{equation}\label{eq:proof_Ts_GCM-3}
  \frac{1}{NM } Q_S(\bPhi)  \overset{p}{\underset{NM \to \infty}{\to}}  (\log(S) - \psi(S/2)  - \log(2) ). 
\end{equation}
Noticing that, when  $S$ is large, $ \psi(S/2) \approx \log(S/2) - 1/S$, we conclude (for $M$, $N$ large and $\eps=0$) that the decay rate of $ Q_S(\bPhi) $ is of the order of $O(1 / S)$ which matches the uniform bound~\eqref{eq:pgcmrate}.
\end{rmq}

\section{Optimization Methods}
\label{sec:algo}

\begin{algorithm}[t]
	\caption{TL-NMF Solver}\label{alg:tl-nmf}
	\textbf{In}: $(J,J_\mathrm{NMF},J_\mathrm{TL}) \in \mathbb{N}^3$, $(\aPhi_0,\W_0,\Ht_0) \in \St(M) \times F_{K}$
	\begin{algorithmic}[1]
		\For{$j= 0 : J-1$}
		\State $(\W_{j+1},\Ht_{j+1}) \gets \textsf{MU}\left(C_S(\aPhi_{j},\cdot,\cdot);J_\mathrm{NMF}, \W_{j}, \Ht_{j}\right)  $ 
		\State $\aPhi_{j+1} \gets \textsf{QN} (C_S(\cdot,\W_{j+1},\Ht_{j+1});J_\mathrm{TL},\aPhi_{j})  $ 
		\EndFor
	\end{algorithmic}
	\textbf{Out}: $\aPhi_J, \W_J, \Ht_J$
		\smallskip
\end{algorithm}
\begin{algorithm}[t]
    \algcomment{Note that we introduced the redundant parameter $J$ in Algorithm~\ref{alg:jd+nmf} so that the input parameters are exactly the same as for the TL-NMF solver in Algorithm~\ref{alg:tl-nmf}. This allows us to ease the computational comparison of the two solvers (see Section~\ref{sec:complexity}).
    The sub-procedures of the form $\textsf{AA} (g;J,\mathbf{X})$ should be read as: executing the $\textsf{AA}$ method on the function $g$ from the initial point $\mathbf{X}$ for $J$ iterations.
    }
	\caption{JD+NMF Solver} \label{alg:jd+nmf}
	\textbf{In}: $(J,J_\mathrm{NMF},J_\mathrm{TL}) \in \mathbb{N}^3$, $(\aPhi_0,\W_0,\Ht_0) \in \St(M) \times F_{K}$
	\begin{algorithmic}[1]
		\State $\aPhi_J \gets \textsf{QN} (L_S;J \times J_\mathrm{TL},\aPhi_{0})  $ 
		\State $(\W_J,\Ht_J) \gets \textsf{MU}(I_S(\aPhi_{J},\cdot,\cdot);J \times J_\mathrm{NMF}, \W_{0}, \Ht_{0})  $ 
	\end{algorithmic}
	\textbf{Out}: $\aPhi_J, \W_J, \Ht_J$
		\smallskip
\end{algorithm}

To numerically address TL-NMF and JD+NMF problems, we deploy the alternating optimization methods outlined in Algorithms~\ref{alg:tl-nmf} and~\ref{alg:jd+nmf}, respectively. They both rely on the standard  {\em multiplicative updates} (MU) for the NMF factors $\W,\Ht$, and a quasi-Newton (QN) method for the update of the transform $\aPhi$. 
Observing from~\eqref{eq:decompCs} that, given $\aPhi$, the minimization of $C_S(\aPhi,\cdot,\cdot)$ boils down to the minimization of $I_S(\aPhi,\cdot,\cdot)$, the MU steps at line~2 of Algorithms~\ref{alg:tl-nmf} and~\ref{alg:jd+nmf} are the same and are implemented according to~\cite{Fevotte2011}, as summarized in Algorithm~\ref{alg:mm-nmf}. 
The latter is a block-descent algorithm in which the factors are updated in turn. The update of each factor is carried out by one step of {majorization-equalization}, a variant of majorization-minimization \cite{Sun2017} that produces an acceleration while ensuring nonincreasingness of the objective function \cite{Fevotte2011}. This results in the celebrated MU algorithm that is customary in NMF \cite{Gillis2020}.

 Concerning the update of $\aPhi$, we adapt the quasi-Newton (QN) algorithms proposed in~\cite{ablin2018beyond,ablin2019quasi}. These methods were initially developed to solve TL-NMF and JD problems with slightly different objective functions and constraints than those considered in the present work. Numerically, they have been shown to improve the convergence rate of gradient-descent or Jacobi-based methods~\cite{cardoso1996jacobi,wendt2018jacobi,pham2001joint}.




  
\begin{algorithm}[t]
	\caption{Multiplicative updates to minimize $I_S(\aPhi,\cdot,\cdot)$~\cite{Fevotte2011}} \label{alg:mm-nmf}
	\textbf{In}: $J_\mathrm{NMF} \in \mathbb{N}$, $(\W_0,\Ht_0) \in F_{K}$,  $\V  =  \E_S (| \aPhi \Y |^{\circ 2} )$ 
	\begin{algorithmic}[1]
		\For{$j= 0 : J_\mathrm{NMF}-1$}
		\State $\hat{\V}  \gets  \W_j \Ht_j + \eps \mathbf{1}_{M \times N} $
		\State $\displaystyle \Ht_{j+1} \gets   \Ht_j \circ \left[   \frac{ \W^\intercal_j  ( ( \hat{\V} )^{ \circ -2 } \circ  \V  )} { \W^\intercal_j ( \hat{\V} )^{ \circ -1 }  }   \right] $
		\State $\hat{\V}  \gets  \W_j \Ht_{j+1} + \eps  \mathbf{1}_{M \times N} $
		\State $\displaystyle \W_{j+1} \gets   \W_j \circ \left[   \frac{ ( ( \hat{\V} )^{ \circ -2 } \circ  \V  ) \Ht^\intercal_{j+1}  } { ( \hat{\V} )^{ \circ -1 } \Ht^\intercal_{j+1}   }  \right] $ 
		\State Normalize $\W_{j+1}$, $\Ht_{j+1}$ to remove scale ambiguity.
		\EndFor
	\end{algorithmic}
	\textbf{Out}: $\W_{J_\mathrm{NMF}}, \Ht_{J_\mathrm{NMF}}$
	\smallskip
\end{algorithm}  
  


  \subsection{General Principle of QN Methods over $\St(M)$}
  
  For both problems, the main difficulty in deriving a QN method for updating $\aPhi$ comes from the handling of the orthogonality constraint (i.e., $\aPhi \in \St(M)$). In this work, we follow the standard approach that consists in defining a local parameterization $\rho_{\aPhi_j} : \domain \subset \R^{M \times M} \rightarrow \St(M)$ of the neighborhood of an iterate $\aPhi_{j} $ (see for instance~\cite{manton2002optimization}). The idea is then to compute a QN direction based on a quadratic approximation of the local objective function $g \circ \rho_{\aPhi_j}$ (here $g$ stands for either $C_S(\cdot,\W,\Ht)$ or $L_S$). From the second-order Taylor expansion around $\mathbf{0}$, we get
  \begin{equation}\label{eq:TaylorExp}
      g(\rho_{\aPhi_j}(\LE)) = g(\aPhi_j) + \lb \LG , \LE \rb + \frac{1}{2} \lb \LE | \LH  | \LE \rb + O\left( \| \LE \|^3 \right) , 
  \end{equation}
      using the fact that $\rho_{\aPhi_j}(\mathbf{0})= \aPhi_j$, and denoting respectively by $\LG \in \R^{M \times M}$ and $\LH \in \R^{M \times M \times M \times M}$  the gradient matrix and the Hessian tensor of $g \circ \rho_{\aPhi_j}$ at $\mathbf{0}$. In~\eqref{eq:TaylorExp}, the inner products should be read as  $\lb \LG, \LE \rb = \sum_{a,b} [\LG]_{ab} [ \LE]_{ab}$ and $\lb \LE | \LH | \LE \rb = \sum_{a,b,c,d} [\LH]_{abcd} [ \LE ]_{ab} [ \LE ]_{cd} $. 
  
  Because the Hessian is usually costly to compute, we will consider an approximation of it, denoted $\tilde \LH$, and define a QN direction through the resolution of 
  \begin{equation}
\LE_j = \argmin_{\LE \in \domain} \lb \LG , \LE \rb +  \frac{1}{2} \lb \LE | \tilde{ \LH }    | \LE \rb.
\label{eq:QN_dir}
\end{equation}
    Given this QN direction, the current estimate $\aPhi_j$ is updated according to
  \begin{equation}\label{eq:QN_update}
      \aPhi_{j+1} = \rho_{\aPhi_j}(\eta \LE_j),
  \end{equation}
  where $\eta>0$ is obtained by line search (e.g., backtracking~\cite{ablin2018beyond} or Wolfe~\cite{Wolfe1969}). 
  This generic QN approach is summarized in Algorithm~\ref{alg:QN}. In the next two sections, we provide details on its instantiation  to our problems. More precisely, the main task is to compute the gradient $\LG$, the Hessian approximation $\tilde{\LH}$, as well as the QN direction in~\eqref{eq:QN_dir}.

\begin{algorithm}[t]
	\caption{Generic quasi-Newton method to minimize a function $g$ over $\St(M)$} \label{alg:QN}
 	\textbf{In}: $J_\mathrm{QN} \in \mathbb{N}$, $\aPhi_0 \in \St(M)$, $\rho_{\aPhi}(\cdot)$
	\begin{algorithmic}[1]
		\For{$j= 0 : J_\mathrm{QN}-1$}
		\State Compute the gradient $\LG$ and the Hessian  approximation $\tilde \LH$ of the local function $g \circ \rho_{\aPhi_{j}}$
		\State $ \LE_j \gets \argmin_{\LE \in \domain} \lb \LG , \LE \rb +  \frac{1}{2} \lb \LE | \tilde{ \LH }    | \LE \rb$ 
		\State Compute $\eta >0$ via line search 
		\State $\aPhi_{j+1} \gets \rho_{\aPhi_{j}}(\eta \LE_j) $  
		\EndFor
	\end{algorithmic}
	\textbf{Out}: $\aPhi_{J_\mathrm{TL}}$
		\smallskip
\end{algorithm}

\begin{rmq}\label{rmq:QN_solvers}
In the following, we adopt two different strategies to parametrize the neighborhood of an iterate $\aPhi_j$. The reason is that each one has been motivated independently by previous works on TL-NMF~\cite{ablin2019quasi} and JD~\cite{ablin2018beyond}. Yet, let us emphasize that for both problems, a QN method could be derived with each of the two local parameterizations described below. 
\end{rmq}

  \subsection{QN Method for TL-NMF}
  
  As in~\cite{ablin2019quasi}, we consider the local parameterization 
  \begin{equation}
  \rho_\aPhi (\LE) = \exp(\LE)\aPhi, \; \forall   \LE \in \domain,  
  \end{equation}
  where   $\domain := \left\lbrace \LE \in \R^{M \times M} : \LE^\tT = -\LE\right\rbrace$
  is the set of anti-symmetric matrices (leading to $\exp(\LE) \in \St(M) $). Then, denoting $\X^{(s)}= \aPhi \Y^{(s)}$, we follow the same steps as in~\cite{ablin2019quasi} to obtain the gradient of $C_S(\cdot,\W,\Ht) \circ \rho_{\aPhi}$ at $\mathbf{0}$,
  \begin{equation}
    [\LG]_{ab} = \frac{2}{S} \sum_{n,s=1}^{N,S} \frac{  [\X^{(s)}]_{an} [\X^{(s)}]_{bn} } { [ \W \Ht ]_{an} + \eps }  ,
    \label{eq:grad_CS}
\end{equation}
as well as the following diagonal Hessian approximation
 \begin{equation}
 [\tilde{\LH}]_{abcd} =   \delta_{ac}  \delta_{bd} [\bm{\Gamma}]_{ab} 
    \label{eq:approx_Hess_CS}
\end{equation}
where 
\begin{equation}\label{eq:Gamma_CS}
     [\bm{\Gamma}]_{ab}=  \frac{2}{S} \sum_{n,s=1}^{N,S} \frac{  [\X^{(s)}]_{b n}^2  } { [ \W \Ht ]_{ an}  + \eps }  .
\end{equation}

In Proposition~\ref{prop:QN_dir_TLNMF}, we provide the closed-form expression of the solution of~\eqref{eq:QN_dir} for such a diagonal $\tilde{\LH}$. Again, this result is an extension of~\cite{ablin2019quasi} to our setting (i.e., minimization of $C_S(\cdot, \W,\Ht)$ instead of $I_S(\cdot, \W,\Ht)$ in~\cite{ablin2019quasi}).

\begin{prop}\label{prop:QN_dir_TLNMF}
For $\tilde{ \LH }$ in~\eqref{eq:approx_Hess_CS}, 
 a solution of Problem \eqref{eq:QN_dir} is given by $\LE \in \domain$ such that
 \begin{equation}\label{eq:prop_QN_dir_TL}
 [\LE]_{ab} = \left\lbrace
 \begin{array}{ll}
     \displaystyle  - \frac{ [\LG^{(\mathrm{anti})}]_{ab}} { [\bm{\Gamma}^{(\mathrm{sym})}]_{ab} } & \text{ if } [\bm{\Gamma}^{(\mathrm{sym})}]_{ab} \neq 0  \\
     0  & \text{ if } [\bm{\Gamma}^{(\mathrm{sym})}]_{ab} =0
 \end{array}\right.
 \end{equation}
where  $\LG^{(\mathrm{anti})} = ( \LG - \LG^\tT)/2$ and $\bm{\Gamma}^{(\mathrm{sym})} =  ( \bm{\Gamma}  + \bm{\Gamma}^\tT ) /2 $. 
Moreover, it is a descent direction, i.e., $\LG^{(\mathrm{anti})} \neq \mathbf{0} $ implies that $\lb \LG , \LE \rb  < 0$. 
\end{prop}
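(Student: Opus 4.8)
The plan is to exploit the diagonal structure of $\tilde{\LH}$ together with the anti-symmetry of $\LE$ in order to decouple Problem~\eqref{eq:QN_dir} into independent one-dimensional quadratic problems, one for each pair $(a,b)$ with $a<b$. First I would substitute~\eqref{eq:approx_Hess_CS} into the objective of~\eqref{eq:QN_dir}, which collapses the quartic form to $\frac12\sum_{a,b}[\bm{\Gamma}]_{ab}[\LE]_{ab}^2$, so that the objective becomes $\lb\LG,\LE\rb+\frac12\sum_{a,b=1}^M[\bm{\Gamma}]_{ab}[\LE]_{ab}^2$. Then, using that $\LE\in\domain$ means $[\LE]_{aa}=0$ and $[\LE]_{ba}=-[\LE]_{ab}$, I would group the contributions of $(a,b)$ and $(b,a)$ for each $a<b$: the linear part of the group reduces to $([\LG]_{ab}-[\LG]_{ba})[\LE]_{ab}=2[\LG^{\anti}]_{ab}[\LE]_{ab}$ and the quadratic part to $\tfrac12([\bm{\Gamma}]_{ab}+[\bm{\Gamma}]_{ba})[\LE]_{ab}^2=[\bm{\Gamma}^{\symm}]_{ab}[\LE]_{ab}^2$. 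Hence the objective equals $\sum_{a<b}q_{ab}([\LE]_{ab})$ where $q_{ab}(x)=[\bm{\Gamma}^{\symm}]_{ab}\,x^2+2[\LG^{\anti}]_{ab}\,x$, and the scalars $\{[\LE]_{ab}\}_{a<b}$ range freely, so the minimization decouples.

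Next I would minimize each $q_{ab}$ separately. From~\eqref{eq:Gamma_CS}, $[\bm{\Gamma}]_{ab}\ge 0$ for all $(a,b)$ (it is a sum of nonnegative terms, the denominators being positive since $\eps>0$), hence $[\bm{\Gamma}^{\symm}]_{ab}\ge 0$ and $q_{ab}$ is convex. When $[\bm{\Gamma}^{\symm}]_{ab}>0$, the unique minimizer is $-[\LG^{\anti}]_{ab}/[\bm{\Gamma}^{\symm}]_{ab}$, matching~\eqref{eq:prop_QN_dir_TL}. The case $[\bm{\Gamma}^{\symm}]_{ab}=0$ needs an extra step: since $[\bm{\Gamma}]_{ab}$ and $[\bm{\Gamma}]_{ba}$ are nonnegative and sum to zero, both vanish, which (again because the denominators are positive) forces $[\X^{(s)}]_{bn}=[\X^{(s)}]_{an}=0$ for all $(n,s)$; by~\eqref{eq:grad_CS} this gives $[\LG]_{ab}=[\LG]_{ba}=0$, so $[\LG^{\anti}]_{ab}=0$ and $q_{ab}\equiv 0$. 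Thus $[\LE]_{ab}=0$ is indeed a (generally non-unique) minimizer in this degenerate case, and assembling the componentwise solutions produces exactly~\eqref{eq:prop_QN_dir_TL}.

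For the descent-direction claim, I would first note that for any $\LE\in\domain$ the symmetric part of $\LG$ contributes nothing to $\lb\LG,\LE\rb$, so $\lb\LG,\LE\rb=\lb\LG^{\anti},\LE\rb=2\sum_{a<b}[\LG^{\anti}]_{ab}[\LE]_{ab}$. Plugging in the minimizer $\LE$ from~\eqref{eq:prop_QN_dir_TL} yields
\[
\lb\LG,\LE\rb = -2\sum_{\substack{a<b\\ [\bm{\Gamma}^{\symm}]_{ab}>0}} \frac{[\LG^{\anti}]_{ab}^2}{[\bm{\Gamma}^{\symm}]_{ab}} \le 0,
\]
and this sum vanishes only if $[\LG^{\anti}]_{ab}=0$ whenever $[\bm{\Gamma}^{\symm}]_{ab}>0$; combined with the degenerate-case analysis above (where $[\bm{\Gamma}^{\symm}]_{ab}=0$ already implies $[\LG^{\anti}]_{ab}=0$), this is equivalent to $\LG^{\anti}=\mathbf{0}$. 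Therefore $\LG^{\anti}\neq\mathbf{0}$ implies $\lb\LG,\LE\rb<0$. The only genuinely delicate point in the whole argument is the degenerate case $[\bm{\Gamma}^{\symm}]_{ab}=0$: one must verify via the explicit formulas~\eqref{eq:grad_CS}--\eqref{eq:Gamma_CS} that $[\LG^{\anti}]_{ab}$ vanishes there as well, for otherwise $q_{ab}$ would be unbounded below and the prescribed value $0$ would not be optimal; everything else is a routine completion-of-the-square computation.
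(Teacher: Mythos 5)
Your proof is correct, and it reaches the same scalar equation $[\bm{\Gamma}^{\symm}]_{ab}[\LE]_{ab}=-[\LG^{\anti}]_{ab}$ and the same degenerate-case analysis as the paper, but by a more elementary route. The paper first proves a general-purpose lemma (Lemma~\ref{lemm:opt_cond}) giving the stationarity conditions of~\eqref{eq:QN_dir} via the surjective parametrization $\LE=(\LM-\LM^\tT)/2$ of $\domain$ and a somewhat laborious gradient computation involving the symmetrized tensor $[\mathcal{Z}]_{abcd}$, and only then specializes to the diagonal $\tilde{\LH}$ of~\eqref{eq:approx_Hess_CS}; that lemma is reused for Proposition~\ref{prop:QN_dir_JD}, which is what it buys. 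You instead substitute the diagonal Hessian directly into the objective, use anti-symmetry to pair the $(a,b)$ and $(b,a)$ contributions, and observe that the problem decouples into free one-dimensional convex quadratics $q_{ab}$, solved by completing the square --- no optimality-condition lemma needed. Your treatment of the degenerate case is, if anything, slightly more careful than the paper's: you note that $[\bm{\Gamma}^{\symm}]_{ab}=0$ forces \emph{both} $[\bm{\Gamma}]_{ab}$ and $[\bm{\Gamma}]_{ba}$ to vanish by nonnegativity, and you explicitly flag that $[\LG^{\anti}]_{ab}=0$ is needed there so that $q_{ab}$ is not unbounded below (in the paper's convex-stationarity framework this concern is absorbed into the existence of a solution to the linear equation). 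The descent-direction argument is identical in substance, differing only by the bookkeeping factor of $2$ from summing over $a<b$ rather than over all ordered pairs.
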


\begin{proof}
See Appendix~\ref{apdx:proof_prop_QN_dir_TLNMF}.
\end{proof}
  
  \subsection{QN Method for JD}
  
  For the minimization of $L_S$, we rely on the local parameterization discussed in~\cite{manton2002optimization}, that is
   \begin{equation}
  \rho_\aPhi (\LE) = \pi(\aPhi + \LE \aPhi), \; \forall   \LE \in \domain,  
  \end{equation}
  where $\pi$ stands for the projection operator on $\St(M)$ and $\domain$ is again the set of anti-symmetric matrices. The gradient of  $L_S \circ \rho_{\aPhi}$ at $\mathbf{0}$ is then given by
  \begin{equation}
	[\LG]_{ab} = \frac{1}{N} \sum_{n=1}^N \left( 
\frac{ [ \aPhi (\bm{\Sigma}_{n,S} + \eps \Id) \aPhi^\intercal ]_{ab}  }{ [ \aPhi   (\bm{\Sigma}_{n,S} + \eps \Id)  \aPhi^\intercal ]_{aa} } - \delta_{ab}  \right) .
	\label{eq:grad_LS}
\end{equation}
Similarly to~\cite{ablin2018beyond} (where $L_S$ is optimized over the set of \textit{invertible} matrices), we consider the  Hessian approximation
\begin{equation}
	[\tilde{\LH} ]_{abcd} = \delta_{ac}\delta_{bd} 
	[\bfGamma]_{ab} + \delta_{ad} \delta_{bc} - 2 \delta_{abcd} 
\label{eq:approx_Hess_LS}
\end{equation}
where
\begin{equation}
	[\bfGamma ]_{ab} = \frac{1}{N} \sum_{n=1}^N 
	\frac{  [ \aPhi  (\bm{\Sigma}_{n,S} + \eps \Id)  \aPhi^\intercal ]_{bb}  }{ [ \aPhi  (\bm{\Sigma}_{n,S} + \eps \Id)  \aPhi^\intercal ]_{aa} }  .
	\label{eq:Gamma}
\end{equation} 
It is noteworthy to mention that each element of $\LG$ and $\bm{\Gamma}$ is well-defined
as $\forall a \leq M , \aPhi \in \St(M)$,  $[\aPhi  (\bm{\Sigma}_{n,S} + \eps \Id)  \aPhi^\intercal ]_{aa} \geq \eps > 0$. 

Because the Hessian approximation in~\eqref{eq:approx_Hess_LS} is not diagonal, we cannot use Proposition~\ref{prop:QN_dir_TLNMF} to get the QN direction. Yet, the specific structure of $\tilde{\LH}$ in~\eqref{eq:approx_Hess_LS} allows to express the solution of~\eqref{eq:QN_dir}
in closed form, as stated in Proposition~\ref{prop:QN_dir_JD}.

\begin{prop}\label{prop:QN_dir_JD}
For $\tilde{ \LH }$ in~\eqref{eq:approx_Hess_LS}, 
 a solution of Problem \eqref{eq:QN_dir} is given by $\LE \in \domain$  such that
\begin{equation}\label{eq:prop_QN_dir_JD}
[\LE]_{ab} = \left\lbrace
\begin{array}{ll}
    \displaystyle  - \frac{ [\LG^{(\mathrm{anti})}]_{ab}} { [\bfGamma^{(\mathrm{sym})}]_{ab} - 1 } & \text{ if } [\bfGamma^{\symm}]_{ab}  \neq  1  \\
    0  & \text{ if } [\bfGamma^{\symm}]_{ab}  =  1
\end{array}\right.
\end{equation}
where  $\LG^{(\mathrm{anti})} = ( \LG - \LG^\tT)/2$ and $\bfGamma^{(\mathrm{sym})} =  ( \bfGamma  + \bfGamma^\tT ) /2 $. 
Moreover, it is a descent direction, i.e., $\LG^{(\mathrm{anti})} \neq \mathbf{0} $ implies that $\lb \LG , \LE \rb  < 0$. \end{prop}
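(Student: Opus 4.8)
The plan is to mimic the proof of Proposition~\ref{prop:QN_dir_TLNMF} (Appendix~\ref{apdx:proof_prop_QN_dir_TLNMF}), with the extra terms $\delta_{ad}\delta_{bc}-2\delta_{abcd}$ in $\tilde{\LH}$ producing the shift by $-1$ in the denominators of~\eqref{eq:prop_QN_dir_JD}. First I would restrict the quadratic $q(\LE)=\langle \LG,\LE\rangle+\tfrac12\langle \LE|\tilde{\LH}|\LE\rangle$ to the subspace $\domain$ of anti-symmetric matrices. Since the symmetric part of $\LG$ is orthogonal (in the trace inner product) to every anti-symmetric $\LE$, the linear part reduces to $\langle \LG,\LE\rangle=\langle \LG^{\anti},\LE\rangle=\sum_{a,b}[\LG^{\anti}]_{ab}[\LE]_{ab}$. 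For the quadratic part I would expand $\langle \LE|\tilde{\LH}|\LE\rangle=\sum_{abcd}[\tilde{\LH}]_{abcd}[\LE]_{ab}[\LE]_{cd}$ term by term using~\eqref{eq:approx_Hess_LS}: the term $\delta_{ac}\delta_{bd}[\bfGamma]_{ab}$ contributes $\sum_{ab}[\bfGamma]_{ab}[\LE]_{ab}^2$, which equals $\sum_{ab}[\bfGamma^{\symm}]_{ab}[\LE]_{ab}^2$ after using $[\LE]_{ba}^2=[\LE]_{ab}^2$ and relabelling; the term $\delta_{ad}\delta_{bc}$ contributes $\sum_{ab}[\LE]_{ab}[\LE]_{ba}=-\sum_{ab}[\LE]_{ab}^2$; and $-2\delta_{abcd}$ contributes $-2\sum_a[\LE]_{aa}^2=0$ because the diagonal of an anti-symmetric matrix vanishes. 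Hence $\langle \LE|\tilde{\LH}|\LE\rangle=\sum_{ab}([\bfGamma^{\symm}]_{ab}-1)[\LE]_{ab}^2$, and $q$ decouples over the pairs $\{(a,b),(b,a)\}$, each contributing $2[\LG^{\anti}]_{ab}[\LE]_{ab}+([\bfGamma^{\symm}]_{ab}-1)[\LE]_{ab}^2$.

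Next I would establish the key positivity $[\bfGamma^{\symm}]_{ab}\ge 1$. Writing $d_{an}:=[\aPhi(\bm{\Sigma}_{n,S}+\eps\Id)\aPhi^\intercal]_{aa}$, which satisfies $d_{an}\ge\eps>0$, Cauchy--Schwarz gives $\big(\tfrac1N\sum_n d_{bn}/d_{an}\big)\big(\tfrac1N\sum_n d_{an}/d_{bn}\big)\ge 1$, i.e. $[\bfGamma]_{ab}[\bfGamma]_{ba}\ge 1$, and then AM--GM gives $[\bfGamma^{\symm}]_{ab}=\tfrac12([\bfGamma]_{ab}+[\bfGamma]_{ba})\ge\sqrt{[\bfGamma]_{ab}[\bfGamma]_{ba}}\ge 1$. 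Tracking the equality cases (constant ratio in Cauchy--Schwarz together with $[\bfGamma]_{ab}=[\bfGamma]_{ba}$ in AM--GM) shows that $[\bfGamma^{\symm}]_{ab}=1$ forces $d_{an}=d_{bn}$ for every $n$. Combining this with the identity $[\LG^{\anti}]_{ab}=\tfrac1{2N}\sum_n[\aPhi(\bm{\Sigma}_{n,S}+\eps\Id)\aPhi^\intercal]_{ab}\big(d_{an}^{-1}-d_{bn}^{-1}\big)$ — obtained directly from~\eqref{eq:grad_LS} and the symmetry of $\aPhi(\bm{\Sigma}_{n,S}+\eps\Id)\aPhi^\intercal$ — yields $[\LG^{\anti}]_{ab}=0$ whenever $[\bfGamma^{\symm}]_{ab}=1$. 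Consequently the restricted quadratic form is positive semi-definite, and on every pair where it degenerates the linear coefficient also vanishes, so $q$ is bounded below on $\domain$ and attains its minimum.

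Then I would minimize $q$ pair by pair: for $[\bfGamma^{\symm}]_{ab}\neq 1$ the scalar quadratic $2[\LG^{\anti}]_{ab}t+([\bfGamma^{\symm}]_{ab}-1)t^2$ is strictly convex with minimizer $t=-[\LG^{\anti}]_{ab}/([\bfGamma^{\symm}]_{ab}-1)$; for $[\bfGamma^{\symm}]_{ab}=1$ the contribution is identically zero (the linear term vanished above), so $t=0$ is a minimizer. This is exactly~\eqref{eq:prop_QN_dir_JD}, and the resulting $\LE$ is anti-symmetric by construction, hence solves~\eqref{eq:QN_dir}. Finally, for the descent property I would substitute $\LE$ back: $\langle\LG,\LE\rangle=\langle\LG^{\anti},\LE\rangle=\sum_{ab}[\LG^{\anti}]_{ab}[\LE]_{ab}=-\sum_{(a,b):[\bfGamma^{\symm}]_{ab}\neq 1}[\LG^{\anti}]_{ab}^2/([\bfGamma^{\symm}]_{ab}-1)\le 0$, since each denominator is positive, with strict inequality as soon as some $[\LG^{\anti}]_{ab}\neq 0$, i.e. whenever $\LG^{\anti}\neq\mathbf{0}$. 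The main obstacle is the degenerate case $[\bfGamma^{\symm}]_{ab}=1$: one must show it forces $[\LG^{\anti}]_{ab}=0$ so that the choice $[\LE]_{ab}=0$ is legitimately optimal, which needs the equality analysis of Cauchy--Schwarz/AM--GM combined with the explicit form of $\LG^{\anti}$; everything else is the bookkeeping of contracting the quartic tensor $\tilde{\LH}$ against anti-symmetric matrices.
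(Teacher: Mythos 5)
Your proof is correct and follows essentially the same route as the paper: the objective decouples over antisymmetric entry pairs into scalar quadratics with coefficients $[\bfGamma^{\symm}]_{ab}-1$, the degenerate case $[\bfGamma^{\symm}]_{ab}=1$ is resolved by showing it forces $[\LG^{\anti}]_{ab}=0$, and the descent property follows from $[\bfGamma^{\symm}]_{ab}\ge 1$. The only difference is presentational: you compute the restricted quadratic form $\lb \LE\,|\,\tilde{\LH}\,|\,\LE\rb$ on $\domain$ directly instead of invoking the paper's optimality-condition lemma (Lemma~\ref{lemm:opt_cond}), and you spell out the equality analysis via Cauchy--Schwarz and AM--GM where the paper asserts it from the definition of $\bfGamma$.
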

\begin{proof}
See Appendix \ref{apdx:proof_prop_QN_dir_JD}.
\end{proof}

As opposed to~\cite{ablin2018beyond}, here $\LE$ is constrained to be anti-symmetric. This is to ensure that, for $\aPhi \in \St(M)$,  $\LE \aPhi$ belongs to the tangent space of $\St(M)$ at $\aPhi$. This makes $\tilde{\LH}$ invertible, discarding the need to rely on a pseudo-inverse.
  
  \subsection{Computational Complexity}\label{sec:comp_cmplx}
  
  We now briefly discuss the computational complexity of the two solvers in Algorithm~\ref{alg:tl-nmf} and~\ref{alg:jd+nmf}. First of all, one can see from Algorithm~\ref{alg:mm-nmf} that an iteration of MU has a complexity of the order of $O(KMN)$ (the main cost being the products of matrices with sizes  $M \times K$ and $K\times N$).
  
  Concerning the QN step, the main cost comes from the computation of the gradient and the Hessian approximation  which lead to a complexity per iteration of the order of $O(SNM^2)$ for TL-NMF (see~\eqref{eq:grad_CS} and~\eqref{eq:approx_Hess_CS}) and $O(NM^3)$ for JD (see~\eqref{eq:grad_LS} and~\eqref{eq:approx_Hess_LS}).
  
  The overall complexities of the two solvers are thus of the order of
  \begin{itemize}
      \item $O\left(J(J_\mathrm{NMF}KMN + J_\mathrm{TL}SNM^2)\right)$ for TL-NMF,
      \item $O\left(J(J_\mathrm{NMF}KMN + J_\mathrm{TL}NM^3)\right)$ for JD+NMF.
  \end{itemize}
    In complement of Remark~\ref{rmq:QN_solvers}, we see that the QN strategy deployed for TL-NMF is preferable when $S \leq M$, whereas the one used for JD when $S\geq M$. Yet, for the latter, the storage of the  $N$ covariance matrices $\{ \bm{\Sigma}_{n,S} \}_{n \leq N}$ is required, which results in a memory overload of  $O(N M^2)$.

\section{Numerical Experiments}\label{sec:num}


From the theoretical analysis conducted in Section~\ref{sec:closeness}, we know that the solutions of TL-NMF and JD+NMF are getting closer as $S$ grows. More precisely, the closeness between the solution sets of these two problems is controlled by the inner and outer $\lambda$'s gaps (namely $ \dotr{\underline{\lambda}} - \bar{\lambda}^*$ and $ \dotr{\bar{\lambda}} - \underline{\lambda}^*$) which converge to zero with high probability as $S$ grows ({Proposition~\ref{prop:Def_QS} and} Theorem~\ref{thm:AssympCV}). Moreover, under the GCM, they converge at least at a rate of $O(1/S)$ (Theorem~\ref{thm:pgcmrate}). 

Yet, these theoretical results only provide a partial answer to the questions we identified in the introduction. For instance, the situation where $S$ is small (including the extreme but very relevant case $S=1$) 
remains unclear. If TL-NMF and JD+NMF are not equivalent, does one of them provide ``better'' solutions 
(from an application-oriented perspective) than the other? On the other hand, when they are  equivalent (i.e., when the gap is approaching zero), 
is one of them preferable from the point of view of computational complexity? And lastly, how tight are the derived theoretical bounds? The goal of this section is to provide numerical insights on these questions so as to illustrate and complement our theoretical findings.
The results presented in this section can be reproduced using the 
Python code available at \url{https://github.com/sixin-zh/tlnmf-tsp}.


\subsection{Datasets}

We consider two simulated datasets obtained respectively from a pure GCM and a blend of two synthetic music notes. The latter departs from GCM while roughly satisfying the conditions~\eqref{eq:GenModel}.

\subsubsection{GCM} In this dataset, the dimensions are set as $M=10$, $N=50$, and $\bar{K} = 5$. The ground-truth transform  $\bPhi$ is fixed to the type-II DCT. The entries of the ground-truth NMF factors $\bar{\W}$ and $\bar{\Ht}$ are independently drawn from a Gamma distribution
\begin{align}
    & [\bar \W]_{mk} \sim \text{Gamma}(a,\theta), \\
    &[\bar \Ht]_{kn} \sim \text{Gamma}(a,\theta),
\end{align}
with shape parameter $a=1$ and scale parameter $\theta=2$. The realizations $\{\Y^{(s)}\}_{s=1}^S$ are generated such that \eqref{eq:GCM} holds. Note that, in the following experiments, when $S$ is changed the $S$ realizations are re-sampled independently using the same $\bar \W$ and $\bar \Ht$.

\subsubsection{Synthetic Music Notes} We follow the construction described in \cite[Section III-A]{Zhang2020On} which we recall for completeness.
   For all $s \in \{1,\ldots,S\}$, we define the signal $\yd^{(s)} \in \R^T$ as
   \begin{equation}
       \yd^{(s)}[t] = \sum_{i=1}^{I=2} \sum_{r=1}^{R=2} 0.5^r \cos\left( r \left( 2 \pi \frac{f_i}{f_0} t +  \theta_i^{(s)} \right) \right) \mathbf{g}_i [t] ,
   \end{equation}
where $ \theta_i^{(s)} \in [0,2\pi)$ is a uniform random phase, $f_i >0$ is a fundamental frequency, $f_0 >0$ denotes the sampling frequency, $r$ is an integer number,  and $\mathbf{g}_i \in \R^T$ is a positive envelop that varies slowly over $t \in \{ 1,\ldots,T\}$. In other words, $\yd^{(s)}$ is the sum of $I=2$ 
pure music notes with frequencies $f_1$ and $f_2$ that each contain $R=2$ harmonics. 
As in \cite{Zhang2020On}, we set the fundamental frequencies to $f_1=440$Hz (corresponding to the note A4) and $f_2=466.16$Hz (A4\#), and fix $T=15000$ and $f_0 = 5000$Hz. This leads to a signal of duration 3s. The envelops $\mathbf{g}_1$ and $\mathbf{g}_2$ are such that the two notes are played separately in the first two seconds and then simultaneously in the third second. Finally, the random nature of the phases $ \theta_i^{(s)}$ allows us to generate $S$ independent signals $\{\yd^{(s)}\}_{s=1}^S$. 


As analyzed in \cite[Section III-B]{Zhang2020On}, the underlying random matrix $\Y$ 
roughly follows the conditions \eqref{eq:GenModel}. A subset of the atoms of the ground-truth matrix $\bPhi$ captures the fundamental frequencies $\{f_1,f_2\}$ and their harmonics $\{ 2f_1, 2f_2\}$ (two orthogonal harmonic atoms per frequency to account for phase shifts). The ground-truth NMF factors $\bar \W$ and $\bar \Ht$ capture the polyphonic spectra and the activations of the two notes (hence $\bar{K}=2$).

\begin{table}[t]
\normalsize
 \caption{\label{tab:params}Parameters used in the experiments: $K$ is the common dimension of the NMF factors ($\W$,$\Ht$); $\epsilon_0$  is defined in~\eqref{eq:CS}; $J$, $J_\mathrm{TL}$, and $J_\mathrm{NMF}$ are the numbers of iterations in Algorithms~\ref{alg:tl-nmf} and~\ref{alg:jd+nmf}; $P$ denotes the number of random  initializations (see Appendix~\ref{apdx:init_strategy}). }
    \centering
    \begin{tabular}{ccc}
        \toprule
        Dataset & GCM & Music Notes  \\
       \midrule \midrule
         $K$ & $5$ ($= \bar K$) & $2$ (see~\cite{Zhang2020On})  \\
         $\eps$ & $10^{-8}$ & $5\times 10^{-7}$ \\
         $J_\mathrm{TL}$ & $1$ & $1$ \\
         $J_\mathrm{NMF}$ & $10$ & $10$ \\
         $J$ & $1000$ & $ 100$ \\
         $P$ & 100 & $10$  \\  
        \bottomrule
    \end{tabular}
\end{table}

\subsection{Parameter Settings}

The algorithm and model parameter values for all experiments are set to those listed in Table~\ref{tab:params}.
Let us comment these choices. 
For the GCM dataset, we set $K=\bar{K}$, and $\eps \ll \min_{m,n} [ \bar \W \bar \Ht ]_{mn} \simeq 1.57$. 
It respects the conditions of our theoretical results in Section \ref{sec:closeness}. 
For the music notes dataset, we set $K=2$ to learn a NMF that can separate the two music notes. Regarding $\eps$, we fixed it manually, as a trade-off between numerical stability and quality of the solution in terms of recovering the fundamental frequencies of the music notes.

The number of iterations $J_\mathrm{TL}$ and $J_\mathrm{NMF}$ have been selected so as to maximize the convergence speed of TL-NMF, as they have no effect on the convergence of JD+NMF due to its two-step nature. In particular, we observed that TL-NMF  enjoys a faster convergence when more NMF than TL updates are performed per outer iteration. 
 
Finally, because they address non-convex problems, we ran the algorithms from $P$ different initializations and we retained the best solution (in terms of objective function) for each problem. As such, we assume that we reached points that belong to (or at least are close to) the solution sets $\Omega^*$ and $\dotr{\Omega}$.
More details on our multi-initialization strategy are provided in Appendix~\ref{apdx:init_strategy}.

\subsection{Analysis of the Gap}

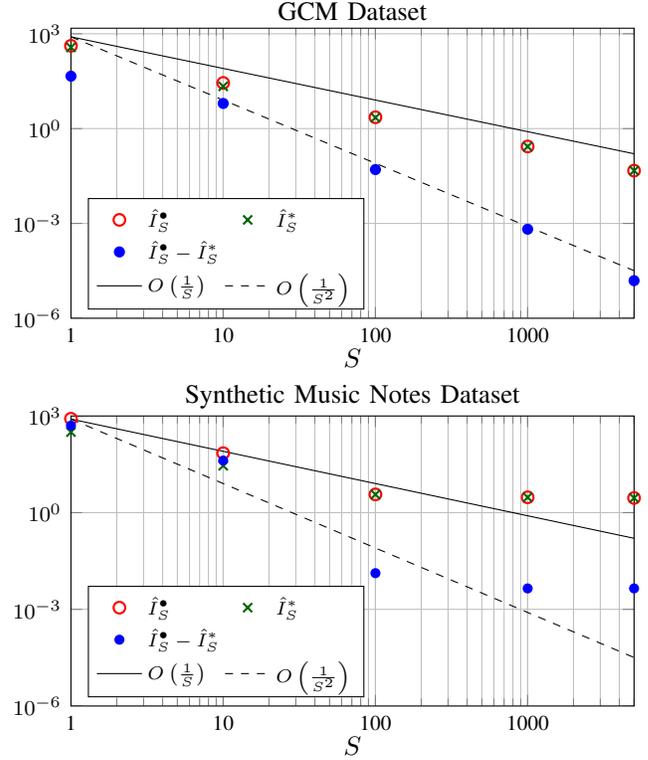
\begin{figure}[t]
    \centering
	\begin{tikzpicture}
	\begin{groupplot}[group style={group size= 1 by 2,       
     				horizontal sep=0.5cm, vertical sep=1.3cm}, 
     				legend pos= south west,        
     				legend columns=2, 
 					legend style={legend cell align=left,font=\scriptsize},
 					grid=both,                         
 					xmode=log,ymode=log,
 					xmin=1,xmax=5000,
 					xticklabels ={1,10,100,1000},
 				    title style={yshift=-1.5ex,},
 					tick label style={font=\footnotesize},
 			        xlabel style={yshift=1ex,},
     				height=0.3\textwidth,width=0.5\textwidth] 
		\nextgroupplot[ymin=0.000001,ymax=1500,title={GCM Dataset},xlabel={$S$}] 
		    \addplot[red,thick,mark=o,only marks,mark size=2.2pt] table{figs/ISdot_GCM.dat};\addlegendentry{$\dotr{\hat{I}}_S$};
			\addplot[darkgreen,thick,mark=x,only marks,mark size=2.4pt] table{figs/ISstar_GCM.dat};\addlegendentry{$\hat{I}_S^\ast$};
		    \addplot[blue,thick,mark=*,only marks,mark size=1.7pt] table{figs/Gap_IS_GCM.dat};\addlegendentry{$\dotr{\hat{I}}_S - \hat{I}_S^\ast$};\addlegendimage{empty legend};\addlegendentry{};
		    \addplot[domain=1:5000,black]    {800/x};\addlegendentry{$O\left(\frac{1}{S}\right)$};
		    \addplot[domain=1:5000,black,dashed]    {800/x^2};\addlegendentry{$O\left(\frac{1}{S^2}\right)$};
		\nextgroupplot[ymin=0.000001,ymax=1000,title={Synthetic Music Notes Dataset},xlabel={$S$}] 
		    \addplot[red,thick,mark=o,only marks,mark size=2.2pt] table{figs/ISdot_Notes.dat};\addlegendentry{$\dotr{\hat{I}}_S$};
  			\addplot[darkgreen,thick,mark=x,only marks,mark size=2.4pt] table{figs/ISstar_Notes.dat};\addlegendentry{$\hat{I}_S^\ast$};
		    \addplot[blue,thick,mark=*,only marks,mark size=1.5pt] table{figs/Gap_IS_Notes.dat};
		    \addlegendentry{$\dotr{\hat{I}}_S - \hat{I}_S^\ast$};
		    \addlegendimage{empty legend};\addlegendentry{};
		    \addplot[domain=1:5000,black]    {800/x};\addlegendentry{$O\left(\frac{1}{S}\right)$};
		    \addplot[domain=1:5000,black,dashed]    {800/x^2};\addlegendentry{$O\left(\frac{1}{S^2}\right)$};
 		\end{groupplot}
	\end{tikzpicture}
	\caption{\label{fig:evol_Is_with_S}
	Evolution of the empirical quantities  $\dotr{\hat{I}}_S$, $\hat{I}_S^\ast$, and $\dotr{\hat{I}}_S - \hat{I}_S^\ast$ as functions of the number of realizations $S$.
	}
\end{figure}

Next we will study the evolution of $\dotr{\hat{I}}_S$, $\hat{I}_S^\ast$, and $\dotr{\hat{I}}_S - \hat{I}_S^\ast$ as functions of the number of data realizations $S$. These quantities correspond to the evaluation of the function $I_S$ at the solution points $(\haPhi^\ast,\hW^\ast,\hHt^\ast)$  and $(\dotr{\haPhi},\dotr{\hW},\dotr{\hHt})$  obtained through the numerical resolution of TL-NMF and JD+NMF, respectively.
Given our multi-initialization strategy, we may assume that $(\haPhi^\ast,\hW^\ast,\hHt^\ast)$ and $(\dotr{\haPhi},\dotr{\hW},\dotr{\hHt})$ belong to (or at least are close to) the solution sets $\Omega^*$ and $\dotr{\Omega}$.
Note that from Theorem~\ref{lem:DistSolSets}, Proposition~\ref{prop:Def_QS}, and by construction of $\hat{I}_S^\ast$ and $\dotr{\hat{I}}_S$, we expect the following inequalities
\begin{equation}\label{eq:ineq_expe}
    \underline{\lambda}^* \leq \hat{I}_S^\ast \leq  \bar{\lambda}^* \leq   \dotr{\underline{\lambda}} \leq  \dotr{\hat{I}}_S  \leq  \dotr{\bar{\lambda}} \leq \max_{\aPhi \in \St(M)} \, Q_S(\aPhi).
\end{equation}

Figure~\ref{fig:evol_Is_with_S} displays the values of $\dotr{\hat{I}}_S$, $\hat{I}_S^\ast$, and $\dotr{\hat{I}}_S - \hat{I}_S^\ast$ for the GCM and music notes datasets. For the GCM dataset, we observe that $\hat{I}_S^\ast$ decays as $O(1/S)$ which implies that all the quantities in~\eqref{eq:ineq_expe} (except $\underline{\lambda}^*$) enjoy at best the same rate of convergence. In particular, this constitutes an additional evidence that the bound derived in Remark~\ref{rmq:other_rate} is tight (also complementing Remark~\ref{rmq:tightness_rate}). 
It further shows that the bound in Proposition~\ref{prop:Def_QS} is also tight (under GCM). Concerning the  gap $\dotr{\hat{I}}_S - \hat{I}_S^\ast$, we observe in Figure~\ref{fig:evol_Is_with_S} that it converges at the faster rate of $O(1/S^2)$. From~\eqref{eq:innerBound} and~\eqref{eq:first_bound_gap}, this implies that the convergence rate  of the inner gap $ \dotr{\underline{\lambda}} - \bar{\lambda}^*$ is \textit{at least} of the order of $O(1/S^2)$ while the outer gap  $\dotr{\bar{\lambda}} - \underline{\lambda}^*$ converges at a rate that lies {between} $O(1/S^2)$ and $O(1/S)$. We can draw two possible hypothesis from this.  The first one is that both gaps would enjoy the same rate of convergence of $O(1/S^2)$ which would mean that our upper bound in~\eqref{eq:first_bound_gap} is loose (as we have seen that bounds in Proposition~\ref{prop:Def_QS} and Theorem~\ref{thm:pgcmrate} are tights). A direct analysis of $\dotr{\bar{\lambda}} - \underline{\lambda}^*$ rather than $\dotr{\bar{\lambda}}$ would thus be of interest. The second possible interpretation is that the two gaps would indeed not converge at the same rate and thus that a finer analysis would require to consider both gaps separately.

 Concerning the music notes dataset, 
 we observe that all the reported quantities are decaying up to $S \leq 100$. 
 However, as $S$ further grows, the quantity $\dotr{\hat{I}}_S$ seems to tend toward a constant. 

 Although a theoretical explanation of this phenomenon remains an open problem, it is likely due to the fact that the data do not exactly follow the conditions~\eqref{eq:GenModel}. Nevertheless, the solutions returned by TL-NMF and JD+NMF on this dataset remain very close when $S$ is large, as illustrated next.

   \begin{figure*}
       \centering
       \begin{tikzpicture}
       \node at (0,1.8) {TL-NMF};
       \node[rotate=90] at (-4.5,-1.5) {$S=1$};
       \node[rotate=90] at (-4.5,-7.5) {$S=100$};
       \node at (0,0) {\includegraphics[width=8cm]{./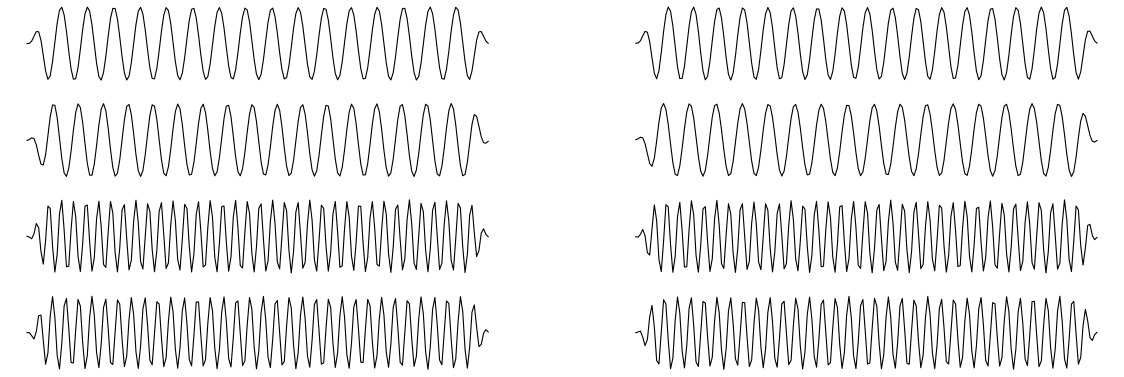}};
       \node at (0,-3) {\includegraphics[width=8cm]{./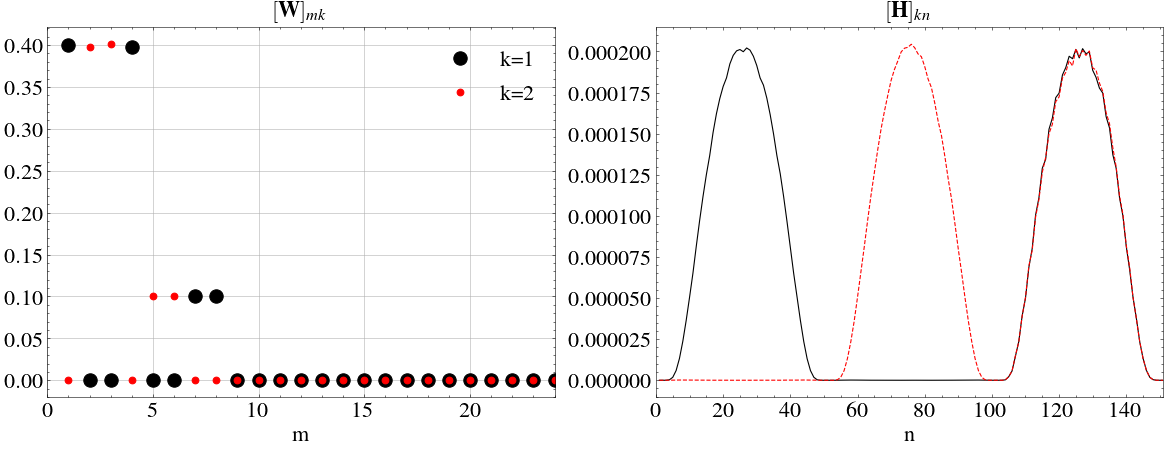}};       
       \node at (0,-6) {\includegraphics[width=8cm]{./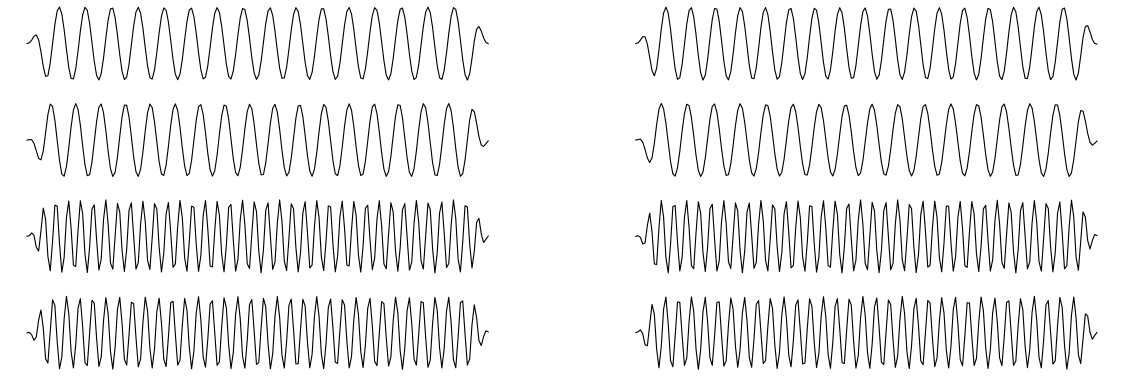}};
       \node at (0,-9) {\includegraphics[width=8cm]{./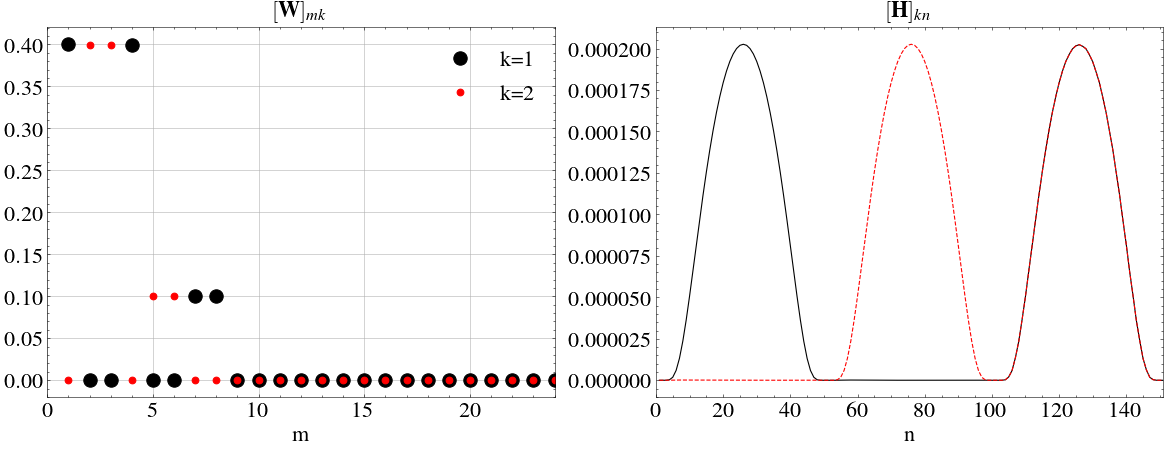}};       
       \node at (8.5,1.8) {JD+NMF};
       \node at (8.5,0) {\includegraphics[width=8cm]{./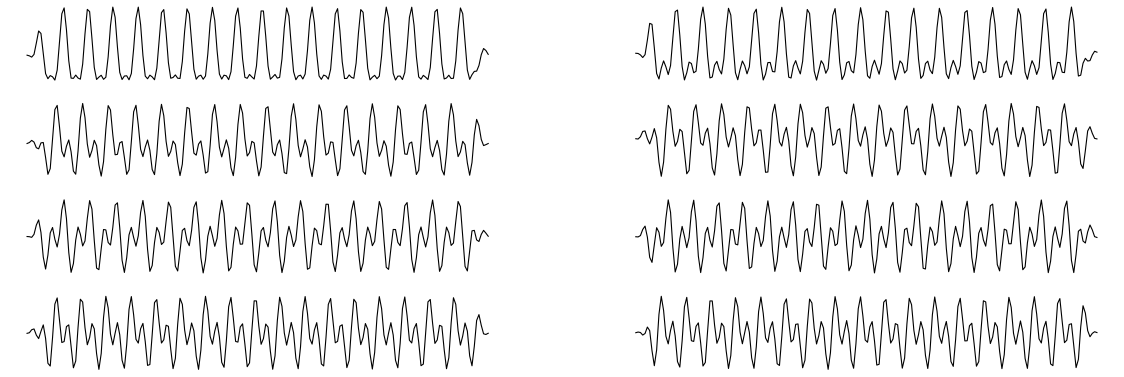}};
       \node at (8.5,-3) {\includegraphics[width=8cm]{./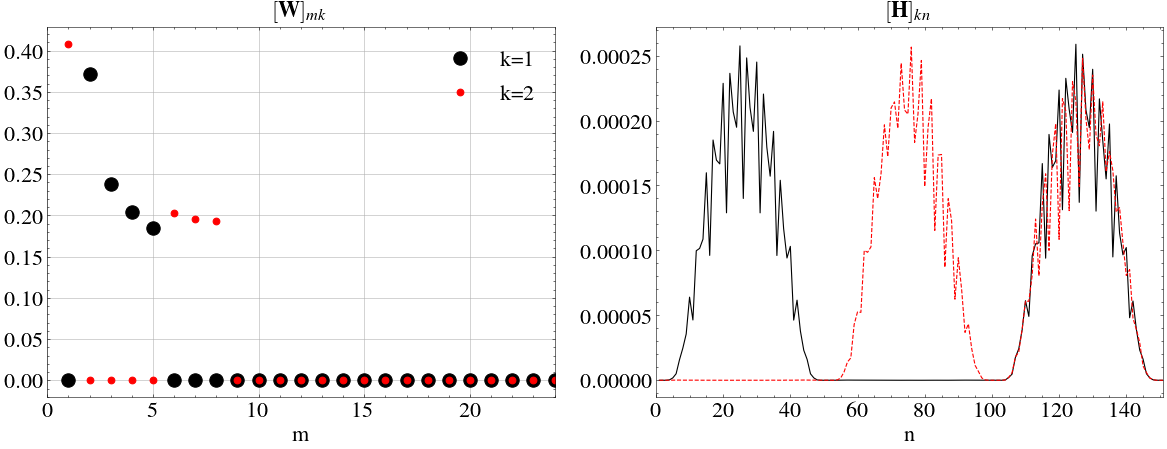}};       
       \node at (8.5,-6) {\includegraphics[width=8cm]{./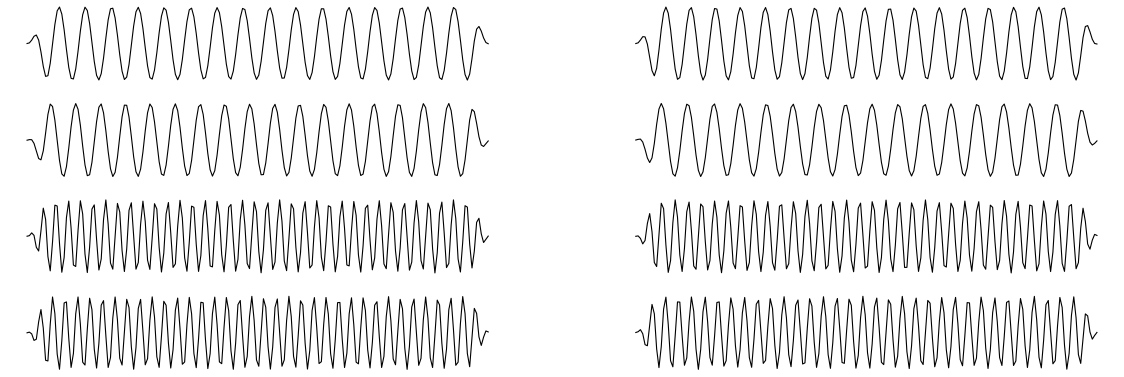}};
       \node at (8.5,-9) {\includegraphics[width=8cm]{./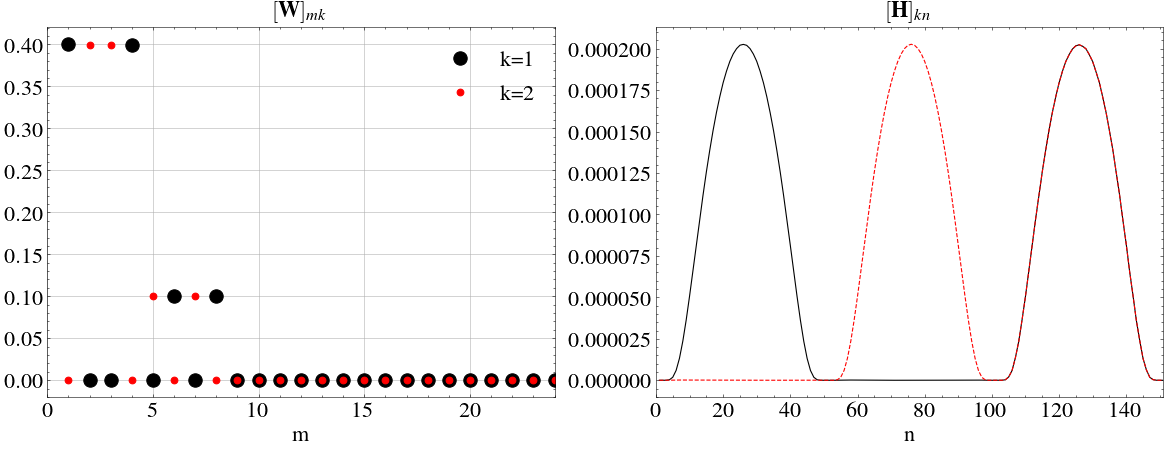}};       
	   \end{tikzpicture}
       \caption{\label{fig:sol_SPL}Plots of the eight most significant atoms (rows) of the learnt $\aPhi$ as well as the two columns of the estimated $\W$ and two rows of the estimated $\Ht$ for the music notes dataset. The most significant atoms are chosen as those that maximize $\E_S \| \underline{\bm{\phi}}_k \mathbf{Y} \|_2^2$, where $\underline{\bm{\phi}}_k$ is the $k^{th}$ row of $\aPhi$.
       }
   \end{figure*}
   
   \subsection{Analysis of the Solutions}  \label{sec:expe_S_small}
   
      In Figure~\ref{fig:sol_SPL}, we display the solutions obtained by TL-NMF and JD+NMF on the music notes dataset for both $S=1$ and $S=100$. As expected, when $S$ is large both approaches are able to learn a transform $\aPhi$ that captures the exact fundamental and harmonic frequencies of the two musical notes, allowing to significantly surpass the source separation performance of standard NMF that must abide to arbitrary frequency grid of the chosen DCT or Fourier frequency transform (see~\cite{Zhang2020On}). 
  
      However, the picture is quite different for $S=1$. Although the solution provided by TL-NMF appears to be as good as for $S=100$ (i.e., it enjoys the same favorable properties), the solution of JD+NMF is visibly degraded. 
      To further quantify the differences between the atoms learnt by TL-NMF and JD+NMF, we 
      performed a nonlinear least-square regression of the learnt atoms with the harmonic model  $(a,f,\theta)  \mapsto a \cos( 2 \pi \frac{f}{f_0} \cdot + \theta )$, as in \cite{Zhang2020On}. 
      We report in Table \ref{tbl:freq} the estimated frequency $f$ for an atom $\underline{\bm{\phi} } \in \mathbb{R}^M $, together 
      with the regression error $ \|\underline{\bm{\phi}} -  a \cos( 2 \pi \frac{f}{f_0} \cdot + \theta ) \|^2$. 
      We observe that the eight most significant atoms of TL-NMF ($S=1$) lead to a better fit to the fundamental frequencies of the two music notes due to smaller regression errors. 
      These errors could also explain why the NMF factors $\dotr{\W}$ and $\dotr{\Ht}$ are less regular in JD+NMF.
      These observations are in line with Figure~\ref{fig:evol_Is_with_S} where we see that, when $S$ is small, the numerical gap  $\dotr{\hat{I}}_S - \hat{I}_S^\ast$ is quite large, meaning that the transform $\aPhi^*$ learnt by TL-NMF performs better in 
       making $\E_S( [ \aPhi \Y ]_{mn}^{2} )$ more amenable to a low-rank NMF approximation
       than the transform $\dotr{\aPhi}$ obtained by JD+NMF.

\begin{table}
\small{
    \caption{Frequencies and regression errors of the TL-NMF and JD+NMF atoms when $S=1$ (same order as in Fig.~\ref{fig:sol_SPL}).
    }
    \label{tbl:freq}
	\begin{center}
    \begin{tabular}{ |c | c | }
    \hline
    \multicolumn{2}{|c|}{TL-NMF }  \\
    \hline
    Freq. & Error  \\
    \hline
    440.10 & 0.03  \\
    \hline
    466.35	 & 0.03  \\
    \hline
    466.11	 & 0.04   \\
    \hline
    439.74	 & 0.04  \\
    \hline
    932.32 & 0.03   \\
    \hline
    932.39	& 0.04  \\
    \hline
    879.94	 & 0.04  \\
    \hline
    879.99 & 0.03   \\
    \hline
    \end{tabular}
    \begin{tabular}{ |c||c| }
    \hline    
    \multicolumn{2}{|c|}{JD+NMF }  \\
    \hline
    Freq. & Error \\
    \hline
    466.12	 & 0.19  \\
    \hline
    439.72	 &  0.37 \\
    \hline
    879.97	 & 0.51 \\
    \hline
    879.99	 & 0.45 \\
    \hline
    879.97	 & 0.44 \\
    \hline
    932.46	 & 0.42 \\
    \hline
    932.28 &  0.40 \\
    \hline
    932.31 &  0.39 \\
    \hline
    \end{tabular}    
    \end{center}    
}
\end{table}

\subsection{Computational Complexity}\label{sec:complexity}

When JD+NMF provides solutions that are very close to those of TL-NMF (i.e., when $S$ is large), it is of interest to compare the computational load of the two methods.
To that end, we fix $J_\mathrm{TL}=1$, $J_\mathrm{NMF}=10$ and compute, for several values $J \in \mathbb{N}$,
\begin{itemize}
    \item $(\aPhi^*_J,\W^*_J,\Ht^*_J)$: the solution obtained after $J$ iterations of TL-NMF (each composed of $J_\mathrm{TL}$ updates of $\aPhi$ and $J_\mathrm{NMF}$ updates of ($\W,\Ht$)),
    \item $(\dotr{\aPhi}_J,\dotr{\W}_J,\dotr{\Ht}_J)$: the solution obtained after $(J \times J_\mathrm{TL})$ iterations of JD and $(J \times J_\mathrm{NMF})$ NMF updates.
\end{itemize}
By doing so, given $J \in \mathbb{N}$, the number of cumulative updates of $\aPhi$ and $(\W,\Ht)$ for both methods is the same and the comparison is fair.

We report in Figure~\ref{fig:complexity} the evolution of $C_S(\aPhi^*_J,\W^*_J,\Ht^*_J)$ and $C_S(\dotr{\aPhi}_J,\dotr{\W}_J,\dotr{\Ht}_J)$ as a function of $J$ for $S=100$. While on the music notes dataset both methods converge at a similar speed, we can see that JD+NMF converges faster than TL-NMF on the GCM dataset. This suggests that, in the situation where JD+NMF meets TL-NMF, the former should be preferred from the perspective of the computational complexity.

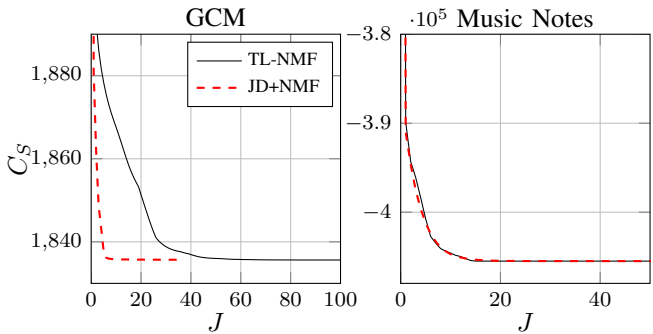
\begin{figure}[t]
    \centering
    \begin{tikzpicture}
    \begin{groupplot}[group style={group size= 2 by 1,       
     				horizontal sep=0.8cm, vertical sep=0.3cm}, 
     				legend pos= north east,        
 					legend style={legend cell align=left,font=\scriptsize},
 					grid=both,                         
 				    title style={yshift=-1.5ex,},
 					tick label style={font=\footnotesize},
 			        xlabel style={yshift=1ex,},
 			        ylabel style={yshift=-1.5ex,},
     				height=0.27\textwidth,width=0.27\textwidth] 
		\nextgroupplot[xmin=0,xmax=100,ymin=1830,ymax=1890,title={GCM},xlabel={$J$},ylabel={$C_S$}] 
	    \addplot[black] table{figs/GCM_complexity_S100_tlnmf.dat};
		\addplot[red,dashed,thick] table{figs/GCM_complexity_S100_jdnmf.dat};
		\legend{TL-NMF,JD+NMF}
		\nextgroupplot[xmin=0,xmax=50,ymin=-408000,ymax=-380000,title={Music Notes},xlabel={$J$}] 
		\addplot[black] table{figs/Notes_complexity_S100_tlnmf.dat};
		\addplot[red,dashed,thick] table{figs/Notes_complexity_S100_jdnmf.dat};
 		\end{groupplot}
    \end{tikzpicture}
    \caption{\label{fig:complexity} Evolution of $C_S$ (for $S=100$) as a function of the parameter $J$ in Algorithms~\ref{alg:tl-nmf} and~\ref{alg:jd+nmf}.
    }
    
\end{figure}

\section{Discussion and Concluding Remarks} \label{sec:conc}

From our theoretical and numerical analysis, we can describe the relationship between TL-NMF and JD+NMF by distinguishing two situations.
On the one hand, when $S$ is large and $K \geq \bar{K}$, the two problems are equivalent (Theorems~\ref{lem:DistSolSets} and~\ref{thm:AssympCV}) and---at least on the reported numerical experiments---JD+NMF seems to enjoy a faster convergence (Figure~\ref{fig:complexity}). On the other hand, when $S$ is small (in particular $S=1$), they are not equivalent anymore and the solutions obtained by TL-NMF appear to be more meaningful from an application point of view (Figure~\ref{fig:sol_SPL}). One explanation of this phenomenon comes from the inherent low-rank constraint in the $\aPhi$ update of TL-NMF which favors the learning of orthogonal transforms that are better suited to a low-rank NMF. This is clearly visible when comparing the values of $I_S$ at the obtained solutions (see Figure~\ref{fig:evol_Is_with_S} and Theorem~\ref{lem:DistSolSets}).



In this work, we fixed $\E_S(|\aPhi \Y|^{\circ 2})$ to the empirical expectation given by~\eqref{eq:EmpiExpect}, which naturally led us to analyze the relationship between the two problems as a  function of the number of realizations $S$. However, we can see from Proposition~\ref{prop:Def_QS} that the main ingredient to make TL-NMF and JD+NMF equivalent is to build a good estimator of the variance $\E(|\aPhi \Y|^{\circ 2})$, in the sense that the Itakura-Saito divergence between the estimator and the true expectation is small (i.e., $Q_S(\aPhi)$ uniformly small). The empirical expectation is a natural choice when several signal realizations are available, but other choices are possible. Indeed, even for $S=1$, the variance $\E(|\aPhi \Y|^{\circ 2})$ can be estimated by a local moving average, under suitable local stationary assumptions. An example is when $\Ht$ is given a block structure (with identical columns inside the blocks), similarly to the \textit{block Gaussian model} of~\cite{PhamCardoso2001}. Actually, such a block structure can be seen alternatively as having access to several realizations of $\mathbf{Y}$ (with $S>1$ the size of the block).

In this paper we only considered real-valued transform for mathematical convenience. Complex-valued transform such as the short-time Fourier transform (STFT) are more customary in audio signal processing (in particular for shift-invariance). Extending our work to such complex-valued transform is an appealing research direction. Note also that the complex-valued Fourier transform is an orthogonal transform endowed with a specific form of Hermitian symmetry. In particular, when applied to real-valued signals, only half of the power spectrum needs to be considered. Imposing such a structure to $\boldsymbol{\Phi}$, to mimic some of the properties of the Fourier transform, would be a very interesting topic. This is however a challenging problem, both in terms of mathematical analysis and design of optimisation methods. Another exciting research direction would be to lift the orthogonal constraint imposed on $\boldsymbol{\Phi}$ and only assume invertibility.
However, for TL-NMF to be useful to signal processing tasks,
it is important that $\aPhi$ can be inverted for temporal reconstruction and thus that its inverse be well-conditioned. Existing JD algorithms such that \cite{Souloumiac2009,Bouchard2020} might be useful for this purpose.






\appendices

\section{Proof of Lemma \ref{lemma:LinkCsLs}} \label{apdx:Proof_LinkCsLs}

The objective $L_S$ in~\eqref{eq:jd} can be rewritten as 
\begin{align}
  	L_{S} ( \aPhi)  
 	& = MN + \sum_{n=1}^N \log \left(\prod_{m=1}^M  \left( \E_S \left([\aPhi \Y]_{mn} ^2 \right) + \eps \right) \right)  \label{eq:combin1}
\end{align}

Then from the definition of the empirical expectation 
in \eqref{eq:EmpiExpect} and the orthogonality of $\aPhi$, 
we have that for every $(m,n)$, 
\begin{align}
         \E_S \left([\aPhi \Y]_{mn} ^2 \right)  + \eps
     &= \left[ \frac{1}{S} \sum_{s=1}^S (\aPhi \mathbf{y}_n^{(s)})(\aPhi \mathbf{y}_n^{(s)})^\intercal + \eps \Id \right]_{mm} \nonumber  \\
     & \underset{\eqref{eq:EmpiCov}}{=} \left[ \aPhi ( \bm{\Sigma}_{n,S}  + \eps \Id ) \aPhi^\intercal \right]_{mm} .  \label{eq:combin2}
\end{align}
Therefore combining the formula \eqref{eq:combin2} with \eqref{eq:combin1}, we have
\begin{equation*}
 	L_{S} ( \aPhi)   = MN + \sum_{n=1}^N  \log  \det \D \left( \aPhi (  \bm{\Sigma}_{n,S} + \eps \Id) \aPhi^\intercal \right).
 \end{equation*}
Finally, the JD criterion derived in \cite{PhamCardoso2001}, that is
\begin{multline}\label{eq:pham}
    \sum_{n=1}^{N}   \log \det \D \left( \aPhi \left( \bm{\Sigma}_{n,S} + \eps \Id \right) \aPhi^\intercal \right)  - \\
                           \log \det \left( \aPhi   \left((\bm{\Sigma}_{n,S}  + \eps \Id ) \aPhi^\intercal \right)  \right),
\end{multline}
equals $L_S$ up to a constant term because, for $\aPhi$ orthogonal, $\det ( \aPhi   ( \bm{\Sigma}_{n,S} + \eps \Id ) \aPhi^\intercal ) = \det(\bm{\Sigma}_{n,S}+ \eps \Id)$.

\section{Proof of Theorem~\ref{thm:ExistenceMinCs}} \label{apdx:Proof_ExistenceMinCs}

\subsection{Existence of a Solution for JD+NMF (Problem~\eqref{eq:OptimLS+IS})}

Given the two-step nature of Problem~\eqref{eq:OptimLS+IS}, the existence of a solution $(\dotr{\aPhi},\dotr{\W},\dotr{\Ht}) \in \dotr{\Omega}$ is a direct consequence of the following two lemmas.
\begin{lemm}\label{lem:ExistenceMinLs}
    The set of global minimizers of $L_S$ over $\St(M)$ is nonempty and compact.
\end{lemm}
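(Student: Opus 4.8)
The plan is to establish Lemma~\ref{lem:ExistenceMinLs} by a standard compactness argument, exploiting the key fact (from Remark~\ref{rmk:eps0} and the shrinkage term $\eps \Id$) that $L_S$ is a \emph{continuous} function on the orthogonal group $\St(M)$. First I would recall from Lemma~\ref{lemma:LinkCsLs} the closed-form expression
\begin{equation*}
L_S(\aPhi) = MN + \sum_{n=1}^N \log \det \D\bigl( \aPhi (\bm{\Sigma}_{n,S} + \eps \Id) \aPhi^\intercal \bigr),
\end{equation*}
and observe that each diagonal entry $[\aPhi (\bm{\Sigma}_{n,S} + \eps \Id)\aPhi^\intercal]_{mm} \ge \eps > 0$ since $\bm{\Sigma}_{n,S} \succeq 0$ and $\aPhi$ has unit-norm rows; hence every $\log$-term is finite and the map $\aPhi \mapsto L_S(\aPhi)$ is continuous (indeed smooth) as a composition of the continuous maps $\aPhi \mapsto \aPhi M_n \aPhi^\intercal$, extraction of diagonal entries, and $\log$.

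Next I would invoke the fact that $\St(M)$, the real orthogonal group, is a compact subset of $\R^{M\times M}$: it is closed (defined by the polynomial equations $\aPhi^\intercal \aPhi = \Id$) and bounded (each column has unit norm, so $\|\aPhi\|_F = \sqrt{M}$). By the extreme value theorem, the continuous function $L_S$ attains its infimum on the compact set $\St(M)$, so the set of global minimizers is nonempty. For compactness of the minimizer set, let $\ell^\ast = \min_{\aPhi \in \St(M)} L_S(\aPhi)$; the minimizer set is exactly $L_S^{-1}(\{\ell^\ast\}) \cap \St(M)$, which is closed as the preimage of a closed set under a continuous map intersected with a closed set, and bounded as a subset of the bounded set $\St(M)$. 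Hence it is compact, which completes the proof of the lemma.

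I do not anticipate a genuine obstacle here: the only subtlety is making sure that the $\log\det\D(\cdot)$ expression never blows up to $-\infty$, which is precisely what the standing assumption $\eps > 0$ guarantees (as already flagged in Remark~\ref{rmk:eps0}). The remaining ingredient for the existence part of Theorem~\ref{thm:ExistenceMinCs} for JD+NMF — namely that, given a fixed transform $\dotr{\aPhi}$, the inner IS-NMF problem $\min_{(\W,\Ht)\in F_K} I_S(\dotr{\aPhi},\W,\Ht)$ admits a solution — is the content of the companion Lemma~\ref{lem:ExistenceMinIs}, and combining the two lemmas yields a solution $(\dotr{\aPhi},\dotr{\W},\dotr{\Ht}) \in \dotr{\Omega}$ together with compactness of $\dotr{\Omega}$.
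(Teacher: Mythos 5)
Your proof is correct and follows essentially the same route as the paper: continuity of $L_S$ on the compact set $\St(M)$ (guaranteed by $\eps>0$), the Weierstrass/extreme value theorem for nonemptiness, and closedness-plus-boundedness of the level set for compactness. The only cosmetic difference is that you argue via the $\log\det\D$ form of Lemma~\ref{lemma:LinkCsLs}, while the paper works directly with the definition~\eqref{eq:jd}; both amount to the same observation that every logarithm's argument is bounded below by $\eps$.
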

\begin{proof}
As $\epsilon_0 > 0$, $\log  (  \E_S ( [\aPhi \Y]_{mn}^{2} ) + \eps ) $ is well defined and
$L_S$ is continuous on the compact manifold  $\St(M)$. 
Therefore $\inf_{\aPhi \in \St(M)} \; L_S(\aPhi)$ is attained and finite
(Weierstrass Theorem). Furthermore, the solution set is closed (it is a level set  of  the  continuous  function $L_S$) and bounded because it is included in the bounded set $\St(M)$. This proves the compactness and completes the proof.
\end{proof}

\begin{lemm}\label{lem:ExistenceMinIs}
For all $\aPhi \in \St(M)$, the set of global minimizers of $I_S(\aPhi,\cdot,\cdot)$ over $F_{K}$ is nonempty and compact.
\end{lemm}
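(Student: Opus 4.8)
The goal is to show that for each fixed $\aPhi \in \St(M)$, the function $(\W,\Ht) \mapsto I_S(\aPhi,\W,\Ht) = D_\eps(\E_S(|\aPhi\Y|^{\circ 2}) \,|\, \W\Ht)$ attains its minimum over the constraint set $F_K$, and that the argmin set is compact. Write $\V = \E_S(|\aPhi\Y|^{\circ 2})$, which is a fixed nonnegative matrix once $\aPhi$ is fixed. Since $\eps>0$, each summand of $D_\eps(\V|\W\Ht)$ is a continuous function of $[\W\Ht]_{mn} \in [0,\infty)$ (the troublesome $1/([\B]_{mn}+\eps)$ and $\log$ terms are harmless thanks to the $+\eps$), hence $(\W,\Ht)\mapsto I_S(\aPhi,\W,\Ht)$ is continuous on all of $\R_+^{M\times K}\times \R_+^{K\times N}$. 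The constraint set $F_K$ is closed (it is cut out by the closed conditions $w_{mk}\ge 0$, $h_{kn}\ge 0$, $\sum_m w_{mk}=1$) but \emph{not} bounded, since the rows of $\Ht$ are unconstrained from above. So Weierstrass does not apply directly, and the main work is a coercivity/compactness argument.

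\textbf{Key steps.} First I would fix any feasible reference point, say $(\W_0,\Ht_0)\in F_K$, giving a finite value $I_S(\aPhi,\W_0,\Ht_0) =: c_0 < \infty$; note $c_0<\infty$ precisely because $\eps>0$ makes $D_\eps$ finite for any nonnegative arguments. It then suffices to minimize over the sublevel set $\mathcal{S} = \{(\W,\Ht)\in F_K : I_S(\aPhi,\W,\Ht)\le c_0\}$, and I would show $\mathcal{S}$ is compact. Closedness is immediate (intersection of the closed $F_K$ with a sublevel set of a continuous function). For boundedness, the columns of $\W$ are already bounded since $\|\w_k\|_1=1$ and $\W\ge 0$; the issue is bounding $\Ht$. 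Here I use that $D_\eps(\V|\B)$ controls $\B$ from above: from the definition of $D_\eps$ in~\eqref{eq:IsDivergence}, each term $\frac{[\V]_{mn}+\eps}{[\B]_{mn}+\eps} - \log\frac{[\V]_{mn}+\eps}{[\B]_{mn}+\eps} - 1 \ge 0$, and this term tends to $+\infty$ as $[\B]_{mn}\to\infty$ (the $-\log$ dominates). Quantitatively, on the sublevel set each ratio $r_{mn} := \frac{[\V]_{mn}+\eps}{[\W\Ht]_{mn}+\eps}$ satisfies $f(r_{mn})\le c_0$ with $f(x)=x-\log x -1$, and since $f(x)\to\infty$ as $x\to 0^+$, there is $\delta=\delta(c_0)>0$ with $r_{mn}\ge\delta$, i.e. $[\W\Ht]_{mn}\le \frac{[\V]_{mn}+\eps}{\delta} - \eps =: C_{mn} < \infty$ for all $(m,n)$. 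Thus $\W\Ht$ is entrywise bounded on $\mathcal{S}$.

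\textbf{From $\W\Ht$ bounded to $\Ht$ bounded.} This is the step I expect to be the main obstacle, because without further care a rank-deficient $\W$ could allow $\Ht$ to blow up in the kernel while keeping $\W\Ht$ bounded. I would handle it as follows: for each $k$, pick a row index $m(k)$ with $w_{m(k),k} \ge 1/M$ (possible since $\sum_m w_{mk}=1$ and there are $M$ rows, so some entry is at least $1/M$). Then for every $n$, $h_{kn} \le \frac{1}{w_{m(k),k}}\sum_{k'} w_{m(k),k'} h_{k'n} = \frac{[\W\Ht]_{m(k),n}}{w_{m(k),k}} \le M\, C_{m(k),n}$, using nonnegativity of all quantities to drop the other summands. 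Hence $\Ht$ is bounded on $\mathcal{S}$, so $\mathcal{S}$ is bounded, therefore compact, and $I_S(\aPhi,\cdot,\cdot)$ attains its infimum on $\mathcal{S}$ — equivalently on $F_K$ — by Weierstrass. The set of global minimizers is a sublevel set of the continuous $I_S(\aPhi,\cdot,\cdot)$ intersected with the closed $F_K$, hence closed, and contained in the bounded $\mathcal{S}$, hence compact. This completes the proof; the only genuinely delicate point is the index-selection trick turning the boundedness of the product $\W\Ht$ into boundedness of $\Ht$, exploiting the $\ell^1$ normalization that is built into $F_K$ (and which, as the paper notes, is exactly why that normalization was included).
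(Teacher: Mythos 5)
Your proof is correct and rests on exactly the same two ingredients as the paper's: the $\ell^1$ normalization of the columns of $\W$ guaranteeing an entry $w_{m_0k_0}\ge 1/M$ that links the size of $\Ht$ to the size of $\W\Ht$, and the logarithmic blow-up of the regularized IS divergence as $[\W\Ht]_{mn}\to\infty$. The only difference is packaging: the paper proves coercivity of $I_S(\aPhi,\cdot,\cdot)$ on the closed set $F_K$ directly, whereas you prove the contrapositive statement that sublevel sets are bounded; these are equivalent, and both close the argument with Weierstrass.
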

\begin{proof}
By definition, for all $\aPhi \in \St(M)$, $I_S(\aPhi,\cdot,\cdot)$  is continuous over $F_{K}$. As 
$F_{K}$ is closed but unbounded, a sufficient argument to complete the proof is to show that, for all $\aPhi \in \St(M)$,   $I_S(\aPhi,\cdot,\cdot)$  is coercive over $F_{K}$, i.e.,
\begin{equation}\label{eq:Proof_ExistenceMinIs-1}
\forall \aPhi \in \St(M), \; \lim_{\substack{  \|(\W,\Ht)\|_{1,1}   \rightarrow \infty \\ (\W,\Ht) \in F_{K}}} 
I_S(\aPhi,\W,\Ht) = + \infty .
\end{equation}
Since all norms being equivalent in finite dimension, we use the norm $\|(\W,\Ht)\|_{1,1} = \|\W\|_{1,1} +\|\Ht\|_{1,1} $, where $\|\cdot\|_{1,1}$ denotes the entrywise $\ell_1$-norm for matrices.
Because of the normalization constraint on $\W$ in $F_{K}$, we have
\begin{equation}\label{eq:Proof_ExistenceMinIs-2}
\forall  (\W,\Ht) \in F_{K}, \;     \|(\W,\Ht) \|_{1,1} = K + \| \Ht \|_{1,1}.
\end{equation}

Now, let $\aPhi \in \St(M)$,  $\nu >0$ and set 
\begin{equation}\label{eq:Proof_ExistenceMinIs-3}
    \rho = KMN\mathrm{e}^{\nu + \log(D) +1} +K > K,
\end{equation}
where $D = \max_{m,n}  \E_S ( [\aPhi \Y]_{m, n }^{2} ) + \eps > 0$. Then we have the following implications:
\begin{multline}
(\W,\Ht) \in F_{K} \, \text{ and } \,  \|(\W,\Ht) \|_{1,1} \geq \rho \\
\quad \    \Longrightarrow \,  \| \Ht \|_{1,1} \geq \rho - K > 0 \\
    \Longrightarrow \,  \exists (k_0,n_0) \text{ s.t. } h_{k_0n_0} \geq \frac{\rho - K}{KN}, \label{eq:Proof_ExistenceMinIs-4}
\end{multline}
where the last implication comes from the fact that $\Ht \in \R_+^{K \times N}$. Similarly, using the fact that $\mathbf{w}_{k_0} \in \R_+^M$ and $\|\mathbf{w}_{k_0}\|_1=1$, we get that $\exists m_0$ s.t. $w_{m_0k_0}   \geq 1/M$. Combining the latter with~\eqref{eq:Proof_ExistenceMinIs-4}, we obtain
\begin{equation}\label{eq:Proof_ExistenceMinIs-5}
    [\W \Ht]_{m_0 n_0} \geq \frac{\rho - K}{MKN} \underset{\eqref{eq:Proof_ExistenceMinIs-3}}{\geq} \mathrm{e}^{\nu + \log(D) +1} .
\end{equation}
Finally, denoting $f(x)=x -\log(x) -1$ we have
\begin{align*}
	I_S(\aPhi, \W, \Ht) & = \sum_{m,n=1}^{M,N} f\left( \frac{ \E_S ( [\aPhi \Y]_{mn}^{2} ) + \epsilon_0 } { [\W \Ht]_{mn}  + \epsilon_0 }\right) \\
	& \geq f\left( \frac{ \E_S ( [\aPhi \Y]_{m_0n_0}^{2} ) + \eps } { [\W \Ht]_{m_0n_0} + \epsilon_0}\right) \quad \text{(as $f\geq0$ on $\R_+$)}\\
	& \geq - \log \left( \frac{ \E_S ( [\aPhi \Y]_{m_0n_0}^{2} ) + \eps } { [\W \Ht]_{m_0n_0} + \epsilon_0 } \right)  - 1  \\
	& \geq  \log( [\W \Ht]_{m_0n_0} + \epsilon_0) -  \log(D) -1 \underset{\eqref{eq:Proof_ExistenceMinIs-5}}{\geq} \nu. 
\end{align*}
Hence, we have shown that for all $\aPhi \in \St(M)$ and all  $\nu >0$, there exists $\rho >0$ (i.e.,~\eqref{eq:Proof_ExistenceMinIs-3}) such that
\begin{multline}
    (\W,\Ht) \in F_{K}  \text{ and }   \|(\W,\Ht) \|_{1,1} \geq \rho \\ \Longrightarrow  I_S(\aPhi, \W, \Ht) \geq \nu,
\end{multline}
which completes the proof.
\end{proof}

\subsection{Existence of a Solution for the TL-NMF Problem~\eqref{eq:OptimCS}} 
From Lemma~\ref{lem:ExistenceMinIs}, we get that the function 
\begin{equation}\label{eq:Proof_ExistenceMinCs-1}
    O_S(\aPhi) =  \min_{(\W,\Ht) \in F_{K}} \, I_S(\aPhi,\W,\Ht),
\end{equation}
is well defined on $\St(M)$ in the sense that the $\min$ exists for any $\aPhi \in \St(M)$. As such, from the decomposition of $C_S$ in~\eqref{eq:decompCs} and given that $\St(M)$ is compact and $L_S$ continuous, it is sufficient to prove that $O_S$ is continuous over $\St(M)$ (done by Lemma~\ref{lem:Cont_OS}) and invoke the Weierstrass Theorem to complete the proof that $\Omega^*$ is non-empty.

\begin{lemm}\label{lem:Cont_OS}
The function $O_S$ defined in~\eqref{eq:Proof_ExistenceMinCs-1} is continuous over $\St(M)$. 
\end{lemm}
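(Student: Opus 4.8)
The plan is to show that $O_S$ is the value function of a parametric optimization problem whose feasible set is fixed (independent of $\aPhi$) and whose objective is jointly continuous in $(\aPhi,\W,\Ht)$, and then invoke a Berge-type maximum theorem. First I would rewrite $O_S(\aPhi) = \min_{(\W,\Ht)\in F_K} I_S(\aPhi,\W,\Ht)$ and observe that $I_S(\aPhi,\W,\Ht) = D_\eps(\E_S(|\aPhi\Y|^{\circ 2})\,|\,\W\Ht)$ is, for $\eps>0$, a continuous function of the pair $(\E_S(|\aPhi\Y|^{\circ 2}),\W\Ht)$, since the summand $x/y - \log(x/y) - 1$ in~\eqref{eq:IsDivergence} is continuous on $\R_{>0}\times\R_{>0}$ and here both arguments are bounded below by $\eps>0$. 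Moreover $\aPhi \mapsto \E_S(|\aPhi\Y|^{\circ 2})$ is continuous (it is a quadratic polynomial in the entries of $\aPhi$, averaged over the $S$ realizations). Hence $(\aPhi,\W,\Ht)\mapsto I_S(\aPhi,\W,\Ht)$ is jointly continuous on $\St(M)\times F_K$.

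The obstacle is that $F_K$ is unbounded, so Berge's theorem does not apply directly — the minimizing set could in principle escape to infinity as $\aPhi$ varies. To handle this I would reuse the coercivity estimate already established in the proof of Lemma~\ref{lem:ExistenceMinIs}. The key point is that the threshold $\rho$ in~\eqref{eq:Proof_ExistenceMinIs-3} depends on $\aPhi$ only through $D = \max_{m,n}\E_S([\aPhi\Y]_{mn}^2)+\eps$, and $D$ can be bounded \emph{uniformly} over the compact set $\St(M)$: indeed $\E_S([\aPhi\Y]_{mn}^2) \le \frac{1}{S}\sum_s \|\underline{\bm{\phi}}_m\|_2^2 \|\yd_n^{(s)}\|_2^2 = \frac1S\sum_s\|\yd_n^{(s)}\|_2^2$ because $\|\underline{\bm{\phi}}_m\|_2 = 1$. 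So there is a constant $\bar D$ with $D \le \bar D$ for all $\aPhi \in \St(M)$, and consequently a single radius $\rho_0$ (built from $\bar D$ via~\eqref{eq:Proof_ExistenceMinIs-3} with, say, $\nu = I_S(\Id,\W_0,\Ht_0)+1$ for some fixed feasible $(\W_0,\Ht_0)$) such that for \emph{every} $\aPhi\in\St(M)$ the minimization of $I_S(\aPhi,\cdot,\cdot)$ over $F_K$ coincides with its minimization over the \emph{compact} set $F_K^{\rho_0} := F_K \cap \{\|(\W,\Ht)\|_{1,1}\le\rho_0\}$.

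With this reduction, I would apply Berge's maximum theorem: the constraint correspondence $\aPhi \mapsto F_K^{\rho_0}$ is constant, hence continuous (both upper and lower hemicontinuous) and compact-valued, and $I_S$ is jointly continuous on $\St(M)\times F_K^{\rho_0}$; therefore the value function $\aPhi \mapsto \min_{(\W,\Ht)\in F_K^{\rho_0}} I_S(\aPhi,\W,\Ht) = O_S(\aPhi)$ is continuous on $\St(M)$. I expect the main subtlety to be precisely the uniform bound on $\rho$ — i.e. making explicit that the coercivity threshold from Lemma~\ref{lem:ExistenceMinIs} can be chosen independently of $\aPhi$ — after which the continuity is a routine application of Berge's theorem (alternatively, one can give a direct $\varepsilon$–$\delta$ argument: upper semicontinuity of $O_S$ follows by plugging a near-optimal $(\W,\Ht)$ for $\aPhi$ into $I_S(\aPhi',\cdot,\cdot)$, and lower semicontinuity follows from joint continuity together with the uniform compactness just established, by a subsequence argument on the minimizers).
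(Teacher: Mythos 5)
Your route is correct in substance but genuinely different from the paper's. The paper does \emph{not} reduce to a compact subset of $F_K$ and does not invoke Berge's theorem: it observes that the $(\W,\Ht)$-dependence of $I_S$ enters only through denominators $[\W\Ht]_{mn}+\eps\ge\eps$, so the difference $g_\delta(\W,\Ht)=I_S(\pi(\aPhi+\delta\Dd),\W,\Ht)-I_S(\aPhi,\W,\Ht)$ admits an upper bound (their Eq.~\eqref{eq:Proof_ExistenceMinCs-5}) that is \emph{independent of} $(\W,\Ht)$ and tends to $0$ with $\delta$; sandwiching $O_S(\pi(\aPhi+\delta\Dd))-O_S(\aPhi)$ between $g_\delta$ evaluated at the two respective minimizers then gives continuity directly, with no need to control where those minimizers live. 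In other words, the paper proves equicontinuity in $\aPhi$ uniformly over the whole unbounded set $F_K$, which makes continuity of the infimum immediate; your approach instead buys generality and modularity (joint continuity plus uniform coercivity plus Berge is a template that works for many divergences, and also yields upper hemicontinuity of the argmin correspondence as a bonus), at the cost of the extra step of making the coercivity radius uniform. One small repair is needed in that step: the threshold $\nu$ must dominate $\min_{(\W,\Ht)\in F_K}I_S(\aPhi,\cdot,\cdot)$ for \emph{every} $\aPhi\in\St(M)$, so $\nu=I_S(\Id,\W_0,\Ht_0)+1$ is not enough; take instead $\nu=\sup_{\aPhi\in\St(M)}I_S(\aPhi,\W_0,\Ht_0)+1$, which is finite by continuity of $\aPhi\mapsto I_S(\aPhi,\W_0,\Ht_0)$ on the compact $\St(M)$ (or use your uniform bound $\bar D$ on the data term to bound it explicitly). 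With that fix, your argument is complete.
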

\begin{proof}
We need to show that, for any $\aPhi\in \St(M)$ and $\Dd \in \mathcal{T}_\aPhi$ (the tangent space of $\St(M)$ at $\aPhi$),
\begin{equation}\label{eq:Proof_ExistenceMinCs-2}
    \lim_{\delta \rightarrow 0} \, |O_S(\pi(\aPhi + \delta \Dd ))  - O_S(\aPhi  ) | = 0,
\end{equation}
where $\pi$ stands for the projection operator on $\St(M)$~\cite{manton2002optimization}. 

Let $\aPhi\in \St(M)$, $\Dd \in \mathcal{T}_\aPhi$ and  $(\W_{\delta}, \Ht_{\delta}) \in F_{K}$ be a global minimizer of $I_S( \pi(\aPhi + \delta \Dd ),\cdot,\cdot)$ (there exists at least one from Lemma~\ref{lem:ExistenceMinIs}). Then,  
\begin{align}
    & O_S( \pi(\aPhi + \delta \Dd ) )  - O_S(\aPhi  ) \leq g_\delta(\W_{0},\Ht_{0}) \label{eq:Proof_ExistenceMinCs-3} \\
    & O_S( \pi(\aPhi + \delta \Dd ) )  - O_S(\aPhi  ) \geq g_\delta(\W_{\delta},\Ht_{\delta}), \label{eq:Proof_ExistenceMinCs-4}
\end{align}
with $g_\delta(\W,\Ht) =  I_S( \pi( \aPhi + \delta \Dd ),\W,\Ht ) - I_S(\aPhi ,\W,\Ht )$.
Moreover, from the definition of $I_S$ in~\eqref{eq:isnmf}--\eqref{eq:IsDivergence}, we get that for all $(\W,\Ht) \in F_{K}$,
\begin{multline}\label{eq:Proof_ExistenceMinCs-5}
    | g_\delta(\W,\Ht) |\leq \sum_{m,n=1}^{M,N}  
  \left| \log \left( \frac{ \E_S [ \pi(\aPhi + \delta \Dd) \Y]_{mn}^{2} +\eps  } {  \E_S [\aPhi \Y]_{mn}^{2}   +\eps } \right) \right| \\ +  \frac{1}{  \eps } \left| \E_S ( [ \pi(\aPhi + \delta \Dd) \Y]_{mn}^{2} ) -\E_S ( [\aPhi \Y]_{mn}^{2} ) \right|  . 
\end{multline}
Because the right-hand side is independent of $(\W,\Ht)$ and converges to $0$ as $\delta$ tends to $0$, we obtain 
\begin{equation}\label{eq:Proof_ExistenceMinCs-6}
   \lim_{\delta \rightarrow 0} |g_\delta(\W_0,\Ht_0) | =0,\quad 
   \lim_{\delta \rightarrow 0} |g_\delta(\W_\delta,\Ht_\delta) | =0 . 
\end{equation}
Finally, combining~\eqref{eq:Proof_ExistenceMinCs-6}  with~\eqref{eq:Proof_ExistenceMinCs-3}--\eqref{eq:Proof_ExistenceMinCs-4}  proves~\eqref{eq:Proof_ExistenceMinCs-2}. 
\end{proof}

\subsection{Compactness of $\dotr{\Omega}$} 
Given that $\dotr{\Omega}$ is finite dimensional, it is sufficient to show,  according to Heine–Borel theorem, that it is a non-empty (already proved), closed, and bounded set.

\subsubsection{Closedness} Let $\{(\aPhi_j,\W_j,\Ht_j)  \in \dotr{\Omega}\}_j$ be a sequence of $\dotr{\Omega}$ that converges toward $(\widehat{\aPhi},\widehat{\W},\widehat{\Ht})$. Then, from the continuity of $I_S$ and the fact that (by definition of $\dotr{\Omega}$), 
\begin{equation}
  \mkern-10mu  I_S(\aPhi_j,\W_j,\Ht_j) \leq I_S(\aPhi_j,\W,\Ht), \; \forall (\W,\Ht) \in  F_{K},
\end{equation}
we obtain
\begin{equation} \label{eq:Proof_CompactnessOmegaDot-1}
    I_S(\widehat{\aPhi},\widehat{\W},\widehat{\Ht}) \leq I_S(\widehat{\aPhi},\W,\Ht), \; \forall (\W,\Ht) \in  F_{K}.
\end{equation}
Noticing that $\widehat{\aPhi} \in \argmin_{\aPhi \in \St(M)} L_S(\aPhi)$ (compactness of the set of minimizers of $L_S$, Lemma~\ref{lem:ExistenceMinLs}), we conclude from~\eqref{eq:Proof_CompactnessOmegaDot-1} that  $(\widehat{\aPhi},\widehat{\W},\widehat{\Ht}) \in \dotr{\Omega}$, which shows that $\dotr{\Omega}$ is closed. 

\subsubsection{Boundedness} Let us assume that $\dotr{\Omega}$ is unbounded. Hence,  there exists a sequence   $\{(\aPhi_j,\W_j,\Ht_j)  \in \dotr{\Omega}\}_j$ such that 
\begin{align} \label{eq:Proof_CompactnessOmegaDot-2}
   \lim_{j \rightarrow \infty} \| (\aPhi_j,\W_j,\Ht_j)  \|_{1,1} = + \infty.
\end{align}
Then, using the fact that $\St(M)$ is compact, we get that the norm of $\aPhi_j$ is bounded and thus,  with~\eqref{eq:Proof_ExistenceMinIs-2} and~\eqref{eq:Proof_CompactnessOmegaDot-2},  that $\lim_{j \rightarrow \infty} \| \Ht_j  \|_{1,1} = + \infty$. It follows from the coercivity of $I_S$ (see Lemma~\ref{lem:ExistenceMinIs}) that
\begin{equation}
    \lim_{j \rightarrow \infty} O_S (\aPhi_j) \underset{\eqref{eq:Proof_ExistenceMinCs-1}}{=} \lim_{j \rightarrow \infty} I_S (\aPhi_j,\W_j,\Ht_j) = + \infty,
\end{equation}
which contradicts the fact that $O_S$ is a continuous function over the compact set $\St(M)$ (see Lemma~\ref{lem:Cont_OS}).

\subsection{Compactness of $\Omega^*$} 
Again, because $\Omega^*$ is finite dimensional,  it is sufficient to show that $\Omega^*$ is a non-empty (already proved), closed, and bounded set.  Here, we directly get that $\Omega^*$ is closed as a level set of the continuous function $C_S$. Then,  let us assume that  $\Omega^*$ is unbounded. Hence, using the same arguments as in the proof for $\dotr{\Omega}$, this means that there exists a sequence   $\{(\aPhi_j,\W_j,\Ht_j)  \in \Omega^*\}_j$ for which  $\lim_{j \rightarrow \infty} \| \Ht_j  \|_{1,1} = + \infty$. Hence, because
\begin{equation}
  C_S( \aPhi_j, \W_j,\Ht_j ) \geq   \sum_{m,n=1}^{M,N} \log  \left(  [\W_j \Ht_j]_{mn}  + \eps \right) , 
\end{equation}
we have 
\begin{equation}
     \lim_{j \rightarrow \infty} C_S (\aPhi_j,\W_j,\Ht_j) = + \infty,
\end{equation}
which contradicts the fact that $\forall j$, $(\aPhi_j,\W_j,\Ht_j)  \in \Omega^*$.

\section{Proof of Theorem~\ref{lem:DistSolSets}}\label{apdx:Proof_DistSolSets}

First of all, the continuity of $I_S$ together with the compactness of $\Omega^*$ and $\dotr{\Omega}$ ensures the existence of the $\max$ and $\min$ in~\eqref{eq:lamb_under}--\eqref{eq:lamb_bar} (Weierstrass Theorem). 
The following three proofs rely on the key Equation~\eqref{eq:decompCs} which states that $C_S = L_S + I_S$. 
 
\paragraph*{Proof of~\eqref{eq:IneqLevSet}} By definition, we have $\underline{\lambda}^* \leq \bar{\lambda}^*$ and $\dotr{\underline{\lambda}} \leq \dotr{\bar{\lambda}}$. Moreover, the lower bound $0$ in~\eqref{eq:IneqLevSet} is due to the fact that $I_S \geq 0 $. 
It thus remains to show that $\bar{\lambda}^* \leq \dotr{\underline{\lambda}}$. Let us assume that $\bar{\lambda}^* > \dotr{\underline{\lambda}}$. Hence there exist $(\aPhi^*,\W^*,\Ht^*) \in \Omega^*$ and $(\dotr{\aPhi},\dotr{\W},\dotr{\Ht}) \in \dotr{\Omega}$ such that 
\begin{align}
        & I_S(\aPhi^*,\W^*,\Ht^*)   > I_S(\dotr{\aPhi},\dotr{\W},\dotr{\Ht}) \\
        \underset{\eqref{eq:decompCs}}{\Longleftrightarrow}  \; & C_S(\aPhi^*,\W^*,\Ht^*)   > L_S(\aPhi^*) + I_S(\dotr{\aPhi},\dotr{\W},\dotr{\Ht}) \\
        \Longrightarrow \; & C_S(\aPhi^*,\W^*,\Ht^*)   > C_S(\dotr{\aPhi},\dotr{\W},\dotr{\Ht}),
\end{align}
where the last implication comes from the fact that $L_S(\aPhi^*) \geq L_S(\dotr{\aPhi})$. This contradicts the fact that $(\aPhi^*,\W^*,\Ht^*)$ is a global minimizer of $C_S$ and completes the proof of~\eqref{eq:IneqLevSet}. 

\paragraph*{Proof of~\eqref{eq:CNS_partialIncl} and~\eqref{eq:CNS_fullIncl}} The inclusion $\Longrightarrow$ is straightforward from~\eqref{eq:IneqLevSet} together with the definition of $\bar{\lambda}^* $ and $\dotr{\underline{\lambda}}$. Now, if $\bar{\lambda}^* = \dotr{\underline{\lambda}}$, then  there exist $(\aPhi^*,\W^*,\Ht^*) \in \Omega^*$ and $(\dotr{\aPhi},\dotr{\W},\dotr{\Ht}) \in \dotr{\Omega}$ such that 
\begin{align}
        & I_S(\aPhi^*,\W^*,\Ht^*)   = I_S(\dotr{\aPhi},\dotr{\W},\dotr{\Ht}) \label{eq:proof_thDistSolSets-1}\\
        \underset{\eqref{eq:decompCs}}{\Longleftrightarrow}  \; & C_S(\aPhi^*,\W^*,\Ht^*)   = L_S(\aPhi^*) + I_S(\dotr{\aPhi},\dotr{\W},\dotr{\Ht}) \\
        \Longrightarrow \; & C_S(\aPhi^*,\W^*,\Ht^*)   \geq  C_S(\dotr{\aPhi},\dotr{\W},\dotr{\Ht}).
\end{align}
Given that  $(\aPhi^*,\W^*,\Ht^*)$ is a global minimizer of $C_S$, we have $C_S(\aPhi^*,\W^*,\Ht^*)   =  C_S(\dotr{\aPhi},\dotr{\W},\dotr{\Ht})$. Moreover, the latter equality  with~\eqref{eq:proof_thDistSolSets-1}  lead to $L_S(\aPhi^*) =L_S(\dotr{\aPhi})$. This completes the proof.

\paragraph*{Proof of~\eqref{eq:CNS_fullIncl}} can be obtained following the same steps.


\section{Proof of Proposition~\ref{prop:Def_QS}}\label{apdx:proof_Def_QS}

First of all, one can easily see that
\begin{equation}
      \dotr{\bar{\lambda}}  \leq  \max_{\aPhi \in \St(M)} \, O_S(\aPhi),
\end{equation}
where $O_S$ is defined as in~\eqref{eq:Proof_ExistenceMinCs-1}. To complete the proof, we shall show that, $\forall \aPhi \in \St(M)$, $O_S(\aPhi) \leq Q_S(\aPhi)$ where $Q_S$ is defined in~\eqref{eq:def_QS}.

To that end, let us show\footnote{This result was already shown in \cite{Zhang2020On} without the constraint set $ F_{K}$. For completeness, we recall all the steps here.} that, for any $\aPhi \in \St(M)$, there exist  $( \W_\aPhi^\ast , \Ht_\aPhi^\ast) \in F_{ K}$ such that $ \E ( [ \aPhi \Y ]_{mn}^{2}) = \W_\aPhi^\ast \Ht_\aPhi^\ast$.  Let $\mathbf{D} = \aPhi \bPhi^\intercal$, $\bar \V = \bar \W \bar \Ht$ with $\bar{v}_{mn} = [\bar \V ]_{mn}$ and $d_{mm'} = [\mathbf{D}]_{mm'}$. Then, from \eqref{zerocond} and \eqref{jdcond} we have 
\begin{align}
     \E ( [ \aPhi \Y ]_{mn}^{2})& = 
     \E \left( \left( \sum_{m'=1}^M d_{m m'} [ \bPhi \Y]_{m' n}  \right)^2 \right) \label{eq:exactFacto0} \\
     &  = \sum_{m'=1}^M d_{m m'}^2  \bar{v}_{m'n}  \\
     & = \sum_{k=1}^{\bar K} \sum_{m'=1}^M d_{m m'}^2 \bar{w}_{m' k} \bar{h}_{kn} 
      \label{eq:exactFacto}
\end{align}
For \eqref{eq:exactFacto0}, we used the orthogonality of $\bar \aPhi$ to get $\aPhi  = \mathbf{D} \bPhi$.
Defining $( \W_\aPhi^\ast , \Ht_\aPhi^\ast) \in  \R^{M \times  K}_{+} \times \R^{K \times N}_{+} $ such that 
\begin{align}
    & [ \W_\aPhi^\ast]_{mk} = \left\lbrace
    \begin{array}{ll}
        \sum_{m'} \frac{ \bar{w}_{m' k} }{\| \bar{\mathbf{w}}_k\|_1} d_{m m'}^2 & \text{ if } k \leq \bar K,  \\
        \frac{1}{M} &  \text{ if } \bar K <  k \leq K,
    \end{array}\right. \label{eq:exactFacto0-1}
\end{align}
\begin{align}
    & [ \Ht_\aPhi^\ast]_{kn} = \left\lbrace
    \begin{array}{ll}
         \bar{h}_{kn}\| \bar{\mathbf{w}}_k\|_1 & \text{ if } k \leq \bar K , \\
       0  &  \text{ if } \bar K <  k \leq K,
    \end{array}\right.
\end{align}
we get from~\eqref{eq:exactFacto} that
\begin{equation}\label{eq:proofTS-0}
\W_\aPhi^\ast \Ht_\aPhi^\ast = \E ( | \aPhi \mathbf{Y}|^{\circ 2}).
\end{equation}
Moreover, as $\sum_{m} d_{m m'}^2 = 1$, we obtain that, for all $k \leq \bar K$,
\begin{equation}
    \sum_{m=1}^M  [ \W_\aPhi^\ast]_{mk} =  \sum_{m'=1}^M \frac{ \bar{w}_{m' k} }{\| \bar{\mathbf{w}}_k\|_1} \sum_{m=1}^M d_{m m'}^2 = 1.
\end{equation}
Given that, by definition in~\eqref{eq:exactFacto0-1}, 
we also have $\sum_{m=1}^M  [ \W_\aPhi^\ast]_{mk} =1$ for $\bar K <  k \leq K$. 
Therefore we have shown that $( \W_\aPhi^\ast , \Ht_\aPhi^\ast) \in F_K$.



It follows from~\eqref{eq:proofTS-0} that, $ \forall \aPhi \in \St(M)$,
\begin{align}
    O_S(\aPhi) &\leq I_S(\aPhi, \W_\aPhi^\ast , \Ht_\aPhi^\ast) \\
    & = D_\eps\left( \E_S(|\aPhi \Y|^{\circ 2}) | \W_\aPhi^\ast \Ht_\aPhi^\ast\right)
  = Q_S(\aPhi).
\end{align}

\section{Proof of Theorems~\ref{thm:AssympCV} and~\ref{thm:pgcmrate}} \label{sec:AssympCV}
\subsection{Preliminaries} 

In Lemma~\ref{lem:Equiv_UniformCV_Es} we derive an equivalent formulation of condition~\eqref{erruniformTh} that is more convenient to prove Theorems~\ref{thm:AssympCV} and~\ref{thm:pgcmrate}.

\begin{lemm}\label{lem:Equiv_UniformCV_Es}
Condition~\eqref{erruniformTh} is satisfied if and only if
  $  \TS  \overset{p}{\to} 0 \; \text{ as } \; S \to \infty$ 
where 
\begin{align}
    \TS & = \max_{m,n}  \max_{\aPhi \in \St(M)} \left|  \frac{ \E_S ( [\aPhi \Y ]_{mn}^2 ) + \eps  } { \E ( [\aPhi \Y]^{2}_{mn} ) + \eps } -1   \right| \label{eq:Equiv_UniformCV_Es-1} \\
    & \leq \max_n \|   \bm{\Sigma}_n^{-1/2} \bm{\Sigma}_{n,S} \bm{\Sigma}_n^{-1/2} - \mathbf{I} \| . \label{eq:Equiv_UniformCV_Es-2}
\end{align}
\end{lemm}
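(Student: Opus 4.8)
The plan is to reduce the statement to two elementary facts: an algebraic rewriting of the ratio appearing in $\TS$, and the variational characterization of the spectral norm through Rayleigh quotients. I would first record the two identities linking the quantities in~\eqref{erruniformTh} to the covariance matrices. Since $\aPhi$ is orthogonal, the computation carried out in~\eqref{eq:combin2} gives $\E_S([\aPhi\Y]_{mn}^2) = [\aPhi\bm{\Sigma}_{n,S}\aPhi^\intercal]_{mm}$, and Proposition~\ref{prop:jdcond} (with $\bm{\Sigma}_n = \E(\yd_n\yd_n^\intercal)$) gives $\E([\aPhi\Y]_{mn}^2) = [\aPhi\bm{\Sigma}_n\aPhi^\intercal]_{mm}$. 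I would also note here that $\bm{\Sigma}_n$ is positive definite: by~\eqref{jdcond} one has $\bPhi\bm{\Sigma}_n\bPhi^\intercal = \D(\bar\W\bar\h_n)$, whose diagonal entries $[\bar\W\bar\Ht]_{mn}$ are strictly positive by the standing assumption, so $\bm{\Sigma}_n^{-1/2}$ is well defined, $\lambda_{\min}(\bm{\Sigma}_n)>0$, and $\lambda_{\max}(\bm{\Sigma}_n) = \max_m[\bar\W\bar\Ht]_{mn}$. Finally, from the identity
\[
\left| \frac{\E_S([\aPhi\Y]_{mn}^2)+\eps}{\E([\aPhi\Y]_{mn}^2)+\eps} - 1 \right|
= \frac{\left| \E_S([\aPhi\Y]_{mn}^2) - \E([\aPhi\Y]_{mn}^2) \right|}{\E([\aPhi\Y]_{mn}^2)+\eps},
\]
$\TS$ is the maximum over $(m,n)$ and $\aPhi\in\St(M)$ of the right-hand side (attained, since the map is continuous in $\aPhi$ on a compact set and the denominator is $\geq\eps>0$).

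For the equivalence, I would control the denominator uniformly. Each row of $\aPhi$ is a unit vector, so $\E([\aPhi\Y]_{mn}^2)$ is a Rayleigh quotient of $\bm{\Sigma}_n$ at a unit vector and hence lies in $[0,B]$, where $B := \max_{m,n}[\bar\W\bar\Ht]_{mn}<\infty$; therefore the denominator above lies in $[\eps,\,B+\eps]$. Writing $E_S := \max_{(m,n)}\max_{\aPhi\in\St(M)}\left|\E_S([\aPhi\Y]_{mn}^2)-\E([\aPhi\Y]_{mn}^2)\right|$ for the quantity in~\eqref{erruniformTh}, this gives the deterministic sandwich
\[
\frac{1}{B+\eps}\, E_S \;\le\; \TS \;\le\; \frac{1}{\eps}\, E_S .
\]
Since $\eps>0$ and $B<\infty$ are fixed, $\TS\overset{p}{\to}0$ if and only if $E_S\overset{p}{\to}0$, which is exactly condition~\eqref{erruniformTh}; this settles the ``if and only if'' part.

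For the bound~\eqref{eq:Equiv_UniformCV_Es-2}, I would first drop $\eps$ from the denominator, which is legitimate because $\E([\aPhi\Y]_{mn}^2) = [\aPhi\bm{\Sigma}_n\aPhi^\intercal]_{mm} \ge \lambda_{\min}(\bm{\Sigma}_n)>0$, obtaining
\[
\TS \;\le\; \max_{m,n}\,\max_{\aPhi\in\St(M)}
\frac{\left| [\aPhi(\bm{\Sigma}_{n,S}-\bm{\Sigma}_n)\aPhi^\intercal]_{mm} \right|}{[\aPhi\bm{\Sigma}_n\aPhi^\intercal]_{mm}} .
\]
Denoting by $\mathbf a\in\R^M$ the transpose of the $m$-th row of $\aPhi$, the ratio equals $|\mathbf a^\intercal(\bm{\Sigma}_{n,S}-\bm{\Sigma}_n)\mathbf a| / (\mathbf a^\intercal\bm{\Sigma}_n\mathbf a)$, and the substitution $\mathbf a = \bm{\Sigma}_n^{-1/2}\mathbf c$ (using symmetry of $\bm{\Sigma}_n^{-1/2}$ and $\bm{\Sigma}_n^{-1/2}\bm{\Sigma}_n\bm{\Sigma}_n^{-1/2}=\mathbf I$) turns it into the Rayleigh quotient
\[
\frac{\left| \mathbf c^\intercal\big( \bm{\Sigma}_n^{-1/2}\bm{\Sigma}_{n,S}\bm{\Sigma}_n^{-1/2} - \mathbf I \big)\mathbf c \right|}{\|\mathbf c\|^2}
\;\le\; \big\| \bm{\Sigma}_n^{-1/2}\bm{\Sigma}_{n,S}\bm{\Sigma}_n^{-1/2} - \mathbf I \big\|,
\]
the last step because the matrix is symmetric. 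The right-hand side depends neither on $m$ nor on $\aPhi$, so it survives the two maxima, and taking the maximum over $n$ yields~\eqref{eq:Equiv_UniformCV_Es-2}.

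I do not expect a genuine obstacle here; the points that require a little care are (i) verifying $\bm{\Sigma}_n\succ0$ so that $\bm{\Sigma}_n^{-1/2}$ and $\lambda_{\min}(\bm{\Sigma}_n)>0$ are available, (ii) the two-sided uniform control of the denominator $\E([\aPhi\Y]_{mn}^2)+\eps$ underpinning the sandwich, and (iii) observing that as $m$ and $\aPhi$ vary the vector $\mathbf a$ stays on the unit sphere, so that the per-vector Rayleigh-quotient bound in the last display indeed dominates the maxima over $m$ and $\aPhi$.
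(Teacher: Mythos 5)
Your proposal is correct and follows essentially the same route as the paper's proof: the equivalence is obtained from a uniform two-sided bound on the denominator $\E([\aPhi\Y]_{mn}^2)+\eps$ (your explicit sandwich $\tfrac{1}{B+\eps}E_S \le \TS \le \tfrac{1}{\eps}E_S$ is just a compact repackaging of the paper's two implications with constants $\eps_0$ and $\epsilon_2+\eps$), and the spectral-norm bound is derived by the same whitening substitution turning the ratio into a Rayleigh quotient of the symmetric matrix $\bm{\Sigma}_n^{-1/2}\bm{\Sigma}_{n,S}\bm{\Sigma}_n^{-1/2}-\mathbf{I}$. No gaps.
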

\begin{proof}
Given the fact that $[\bar \W \bar \Ht]_{mn}>0$ $\forall (m,n)$, we introduce two constants $\epsilon_1>0$ and $\epsilon_2>0$ such that for all $(m,n)$
\begin{equation} \label{eq:Proof_lem_CondCV_Es-0}
     \epsilon_1 \leq  [ \bar \W \bar \Ht ]_{mn} \leq \epsilon_2. 
\end{equation}

Then, combining~\eqref{eq:exactFacto} with  $\sum_{m'} d_{m m'}^2 = 1$, one can see that for all $\aPhi \in \St(M)$ and $(m,n)$,
\begin{equation}
     \epsilon_1 \leq \E([\aPhi \Y]_{mn}^2) \leq  \epsilon_2 . 
\end{equation}
It follows that for any $\epsilon>0$ 
\begin{align}
     &   \left| \E_S ( [\aPhi \Y ]_{mn}^2 )  - \E ( [\aPhi \Y]^{2}_{mn} ) \right|  < \epsilon,  \label{eq:Proof_lem_CondCV_Es-1} \\
  \Longrightarrow \;   &  \left| \frac{\E_S([ \aPhi \Y ]^{ 2}_{mn}) +  \eps }{\E([ \aPhi \Y ]^{2}_{mn}) +  \eps } -1 \right|  <  \frac{\epsilon}{\epsilon_0},   \label{eq:Proof_lem_CondCV_Es-2} 
\end{align}
and conversely
\begin{align}
     &  \left| \frac{\E_S([ \aPhi \Y ]^{ 2}_{mn}) +  \eps  }{\E([ \aPhi \Y ]^{2}_{mn}) +  \eps } -1 \right| <  \epsilon,  \label{eq:Proof_lem_CondCV_Es-3}  \\
      \Longrightarrow \; &   \left| \E_S ( [\aPhi \Y ]_{mn}^2 )  - \E ( [\aPhi \Y]^{2}_{mn} ) \right|  < \epsilon (\epsilon_2 + \eps).  \label{eq:Proof_lem_CondCV_Es-4} 
\end{align}

By definition of the convergence in probability, condition~\eqref{erruniformTh} is equivalent to:  $\forall \epsilon >0$ and $\delta \in (0,1)$, there exists $S^\star$ such that $ \forall S \geq S^\star $
\begin{multline}
   \mathrm{Pr}\left( \max_{m,n}
\max_{\aPhi \in \St(M)} \left| \E_S ( [\aPhi \Y ]_{mn}^2 )  - \E ( [\aPhi \Y]^{2}_{mn} )  \right| < \epsilon \right) \\ > 1- \delta,
\end{multline}
which, using~\eqref{eq:Proof_lem_CondCV_Es-1}--\eqref{eq:Proof_lem_CondCV_Es-2}, implies that
\begin{equation}\label{eq:Proof_lem_CondCV_Es-5} 
   \mathrm{Pr}\left( \TS < \frac{\epsilon}{\epsilon_0} \right)  > 1- \delta.
\end{equation}
This proves that~\eqref{erruniformTh} $\Longrightarrow \left(  \TS  \overset{p}{\to} 0 \; \text{ as } \; S \to \infty \right)$. The converse can be proven similarly with~\eqref{eq:Proof_lem_CondCV_Es-3}--\eqref{eq:Proof_lem_CondCV_Es-4}.

To complete the proof, it remains to show the inequality~\eqref{eq:Equiv_UniformCV_Es-2}.
First of all, let us notice that, from~\eqref{jdcond} and~\eqref{eq:Proof_lem_CondCV_Es-0},  $\bm{\Sigma}_n$ is positive definite and thus $\bm{\Sigma}_n^{-1/2}$ is well defined. 
Then, we have for all $(m,n)$
\begin{align}
&\max_{\aPhi \in \St(M)}  \bigg    | \frac{ \E_S ( [\aPhi \Y ]_{mn}^2 ) + \eps  } { \E ( [\aPhi \Y]^{2}_{mn} ) + \eps } -1  \bigg |  \\
 \underset{\eqref{eq:combin2}}{\leq} & \max_{\bm{\phi} \in \mathbb{R}^M : \| \bm{\phi} \|_2=1} \left|   
 \frac{ \bm{\phi}^\intercal  ( \bm{\Sigma}_{n,S}   - \bm{\Sigma}_n ) \bm{\phi} } {  \bm{\phi}^\intercal  \bm{\Sigma}_n  \bm{\phi}  }     \right| \\
= & \max_{\tilde{\bm{\phi}} \in \mathbb{R}^M : \| \bm{\Sigma}_n^{-1/2}\tilde{\bm{\phi}} \|_2=1 } 
 \left|   \frac{ {\tilde{\bm{\phi}}}^\intercal (  \bm{\Sigma}_n^{-1/2} \bm{\Sigma}_{n,S} \bm{\Sigma}_n^{-1/2} - \mathbf{I} )  \tilde{\bm{\phi}} } { \|  \tilde{\bm{\phi}} \|_2^2 }   \right| \\
  = & \max_{\tilde{\bm{\phi}} \in \mathbb{R}^M : \|\tilde{\bm{\phi}} \|_2=1 } 
 \left|   {\tilde{\bm{\phi}}}^\intercal (  \bm{\Sigma}_n^{-1/2} \bm{\Sigma}_{n,S} \bm{\Sigma}_n^{-1/2} - \mathbf{I} )  \tilde{\bm{\phi}}    \right| \\
 = & \; \|   \bm{\Sigma}_n^{-1/2} \bm{\Sigma}_{n,S} \bm{\Sigma}_n^{-1/2} - \mathbf{I} \|,
\end{align}
which completes the proof. 
\end{proof}

In Lemma~\ref{lem:Bound_Qs_with_Ts}, we show that $\max_{\aPhi \in \St(M)} Q_S(\aPhi)$ (defined in~\eqref{eq:def_QS}) can be controlled in terms of $\TS$ and that it converges to $0$ as $\TS$ tends to $0$.

\begin{lemm}\label{lem:Bound_Qs_with_Ts}
Let $\TS$ be defined as in Lemma~\ref{lem:Equiv_UniformCV_Es}. Then,
\begin{equation}
    \TS < 1 \Longrightarrow \max_{\aPhi \in \St(M)} Q_S (\aPhi) \leq MN \frac{\TS^2} { 1 - \TS}.
\end{equation}
\end{lemm}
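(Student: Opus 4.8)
The plan is to bound $Q_S(\aPhi)$ term by term using the definition of the IS divergence. Recall that $Q_S(\aPhi) = D_\eps(\E_S(|\aPhi\Y|^{\circ2}) \,|\, \E(|\aPhi\Y|^{\circ2})) = \sum_{m,n} f(X_{mn}^\aPhi)$ where $f(x) = x - \log(x) - 1$ and $X_{mn}^\aPhi = \frac{\E_S([\aPhi\Y]_{mn}^2)+\eps}{\E([\aPhi\Y]_{mn}^2)+\eps}$. By the very definition of $\TS$ in~\eqref{eq:Equiv_UniformCV_Es-1}, we have $|X_{mn}^\aPhi - 1| \leq \TS$ for every $(m,n)$ and every $\aPhi \in \St(M)$, i.e. $X_{mn}^\aPhi \in [1-\TS, 1+\TS]$. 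So the whole lemma reduces to a scalar estimate: if $|x-1| \le \TS < 1$, then $f(x) \le \frac{\TS^2}{1-\TS}$, and summing over the $MN$ terms gives the claim.

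First I would establish the scalar bound. Writing $x = 1+u$ with $|u|\le \TS$, we have $f(1+u) = u - \log(1+u)$. Using the standard series/integral estimate $u - \log(1+u) = \int_0^u \frac{t}{1+t}\,dt$, and bounding $\left|\frac{t}{1+t}\right| \le \frac{|t|}{1-\TS}$ for $t$ between $0$ and $u$ (since $1+t \ge 1-\TS$ on that range), one gets $f(1+u) \le \frac{1}{1-\TS}\int_0^{|u|} t\,dt = \frac{u^2}{2(1-\TS)} \le \frac{\TS^2}{2(1-\TS)}$. This actually gives a factor $1/2$ better than stated, which is harmless; alternatively one can use the cruder bound $-\log(1+u) \le -u + \frac{u^2}{1-\TS}$ (valid since $-\log(1+u) - (-u) = \int_0^{-u}\frac{s}{1-s}ds$ type reasoning) to land exactly on $\frac{\TS^2}{1-\TS}$ without the factor $1/2$. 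Either way, I would then take the maximum over $(m,n)$ and $\aPhi$, sum the $MN$ identical bounds, and conclude $\max_{\aPhi\in\St(M)} Q_S(\aPhi) \le MN\frac{\TS^2}{1-\TS}$.

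The only mild subtlety — and the step I would be most careful about — is making sure the inequality $|X_{mn}^\aPhi - 1| \le \TS$ is applied correctly for \emph{all} $\aPhi$ simultaneously: since $\TS$ is defined as a maximum over both $(m,n)$ and $\aPhi$, the per-entry bound holds uniformly, which is exactly what is needed so that the termwise estimate passes through the $\max_\aPhi$ on the left-hand side. There is no real obstacle here; the lemma is essentially a uniform scalar convexity estimate for $f$ near $x=1$, packaged through the definition of $\TS$ from Lemma~\ref{lem:Equiv_UniformCV_Es}. The role of this lemma in the larger argument is clear: combined with Proposition~\ref{prop:Def_QS} and the tail bounds on $\TS$ (via~\eqref{eq:Equiv_UniformCV_Es-2} and covariance-estimation results), it will yield Theorems~\ref{thm:AssympCV} and~\ref{thm:pgcmrate}.
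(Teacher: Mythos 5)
Your proposal is correct and follows essentially the same route as the paper's proof: reduce to the uniform containment $X_{mn}^{\aPhi}\in[1-\TS,1+\TS]$ coming straight from the definition of $\TS$, bound $f(x)=x-\log(x)-1$ termwise on that interval, and sum the $MN$ identical bounds. The only difference is the elementary scalar step — the paper majorizes $f(x)\le x+1/x-2$ and uses convexity to locate the maximum at the endpoint $1-\TS$, giving exactly $\TS^2/(1-\TS)$, whereas your integral representation of $u-\log(1+u)$ is equally valid and even yields the slightly sharper constant $\TS^2/(2(1-\TS))$.
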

\begin{proof}
Let us first recall that 
\begin{equation}\label{eq:Proof_Bound_Qs_with_Ts-1}
    Q_S(\aPhi) = \sum_{m,n=1}^{M,N} f\left(  \frac{\E_S([ \aPhi \Y ]^{ 2}_{mn}) + \eps }{\E([ \aPhi \Y ]^{2}_{mn}   ) + \eps } \right),
\end{equation}
for $f(x) = x - \log(x) -1$.
Then, by definition of $\TS$ in~\eqref{eq:Equiv_UniformCV_Es-1} we have that, for all $\aPhi \in \St(M)$ and $(m,n)$,
\begin{equation}\label{eq:Proof_Bound_Qs_with_Ts-2}
     \frac{\E_S([ \aPhi \Y ]^{ 2}_{mn}) + \eps }{\E([ \aPhi \Y ]^{2}_{mn} ) + \eps } \in \left[1-\TS, 1+\TS \right].
\end{equation}
Hence, because $\forall x >0, f(x) \leq g(x) :=x + 1/x - 2$, it follows from~\eqref{eq:Proof_Bound_Qs_with_Ts-1} and~\eqref{eq:Proof_Bound_Qs_with_Ts-2} that, if $\TS <1$, $\forall \aPhi \in \St(M)$
\begin{equation}\label{eq:Proof_Bound_Qs_with_Ts-3}
     Q_S(\aPhi) \leq \sum_{m,n=1}^{M,N} g \left(  \frac{\E_S([ \aPhi \Y ]^{ 2}_{mn}) + \eps }{\E([ \aPhi \Y ]^{2}_{mn}   ) + \eps} \right)
\end{equation}
and thus
\begin{align}
    \max_{\aPhi \in \St(M)} Q_S(\aPhi) & \leq MN \max_{x \in [1-\TS,1+\TS]} g(x) \label{eq:Proof_Bound_Qs_with_Ts-4}\\
    & = MN \max\{ g(1-\TS), g(1+\TS)\} \label{eq:Proof_Bound_Qs_with_Ts-5} \\
    & = MN g(1-\TS) = MN \frac{\TS^2}{1-\TS}. \label{eq:Proof_Bound_Qs_with_Ts-6}
\end{align}
Equality~\eqref{eq:Proof_Bound_Qs_with_Ts-5} comes from the fact that $g$ is a convex function  and~\eqref{eq:Proof_Bound_Qs_with_Ts-6} is due to  $ g(1-\TS) \geq  g(1+\TS)$ for $\TS \in [0,1)$.
\end{proof}


\subsection{Proof of Theorem~\ref{thm:AssympCV}} \label{apdx:Proof_AssympCV}

The proof of Theorem~\ref{thm:AssympCV} is a direct consequence of Lemmas~\ref{lem:Equiv_UniformCV_Es} and~\ref{lem:Bound_Qs_with_Ts}. Indeed, under condition~\eqref{erruniformTh}, we get from Lemma~\ref{lem:Equiv_UniformCV_Es} that, for all $\epsilon \in (0, \epsilon_0)$ and $\delta \in (0,1)$, there exists $S^\ast$ such that
\begin{equation}
   S \geq S^\ast \quad   \Longrightarrow \quad  \mathrm{Pr}\left( \TS < \frac{\epsilon}{\epsilon_0} \right)  > 1- \delta.
\end{equation}
(cf.~\eqref{eq:Proof_lem_CondCV_Es-5}) which, with Lemma~\ref{lem:Bound_Qs_with_Ts}, completes the proof.

\subsection{Proof of Theorem~\ref{thm:pgcmrate}} \label{apdx:Proof_pgcmrate}

From Lemma~\ref{lem:Equiv_UniformCV_Es}, showing that condition~\eqref{erruniformTh} is satisfied under GCM  amounts to prove that under GCM we have $\TS  \overset{p}{\to} 0 \; \text{ as } \; S \to \infty$. Moreover, by making explicit a bound on the convergence rate of $\TS$, we will get a bound on the convergence rate of $\max_{\aPhi \in \St(M)} Q_S(\aPhi)$ thanks to Lemma~\ref{lem:Bound_Qs_with_Ts}. \\

According to Lemma~\ref{lem:Equiv_UniformCV_Es} (Equation~\eqref{eq:Equiv_UniformCV_Es-2}), to show the convergence of $\TS$ in probability, it is sufficient to show that each eigenvalue of $\bm{\Sigma}_n^{-1/2} \bm{\Sigma}_{n,S} \bm{\Sigma}_n^{-1/2}$ converges to one for each $n$. 

To that end, let us first remark that the normalized matrix $\bm{\Sigma}_n^{-1/2} \bm{\Sigma}_{n,S} \bm{\Sigma}_n^{-1/2}$ can be regarded as an empirical covariance matrix obtained from the $S$ ``whitened'' vectors  $ \{  \mathbf{x}_n^{(s)}  = \bm{\Sigma}_n^{-1/2} \mathbf{y}_n^{(s)} \}_{s=1}^S$, i.e.,
\begin{align}
      \bm{\Sigma}_n^{-1/2} \bm{\Sigma}_{n,S} \bm{\Sigma}_n^{-1/2}  = \frac{1}{S} \sum_{s=1}^S   \mathbf{x}_n^{(s)}  \mathbf{x}_n^{(s)\intercal} = \mathbf{A}_S^\intercal \mathbf{A}_S,\label{eq:Proof_pgcmrate-2} 
\end{align}
where $\mathbf{A}_S = \frac{1}{\sqrt{S}}[\mathbf{x}_n^{1} \cdots \mathbf{x}_n^{S}]^\intercal \in \R^{S \times M}$.
Because  the $\mathbf{x}_n^{(s)}$ are i.i.d. realizations of a whitened Gaussian distribution whose covariance matrix is the identity, we obtain from~\cite[Corollary 5.35]{vershynin_2012} that, for any $t>0$
\begin{equation}
    \mbox{Pr} \left( 1 - \delta_S \leq  \sigma_{\min} (\mathbf{A}_S) \leq  \sigma_{\max} (\mathbf{A}_S) \leq 1 + \delta_S  \right) \geq 1 - 2e^{ -t^2/2} , \label{eq:Proof_pgcmrate-3} 
\end{equation}
where $\delta_S = \frac{\sqrt{M}+t }{\sqrt{S}}$ and $\sigma_{\max} (\mathbf{A}_S)$ (resp. $\sigma_{\min} (\mathbf{A}_S)$) denotes the largest (resp. smallest)  singular value of $\mathbf{A}_S$. Using~\cite[Lemma 5.36]{vershynin_2012}, we get from \eqref{eq:Proof_pgcmrate-3} that 
\begin{equation}
    \mbox{Pr} \left( \|\mathbf{A}_S^\intercal \mathbf{A}_S - \mathbf{I}\| \leq h_S \right) \geq 1 - 2e^{ -t^2/2}, \label{eq:Proof_pgcmrate-4} 
\end{equation}
with $h_S := 3 \max(\delta_S,\delta_S^2)$.
Then, from~\eqref{eq:Proof_pgcmrate-2},
\begin{equation}
    \| \bm{\Sigma}_n^{-1/2} \bm{\Sigma}_{n,S} \bm{\Sigma}_n^{-1/2} - \mathbf{I} \|  =   \| \mathbf{A}_S^\intercal \mathbf{A}_S - \mathbf{I} \| .\label{eq:Proof_pgcmrate-5}  
\end{equation}
Combining the latter with~\eqref{eq:Proof_pgcmrate-4}, we obtain
\begin{multline}
    \mbox{Pr} \left(  \| \bm{\Sigma}_n^{-1/2} \bm{\Sigma}_{n,S} \bm{\Sigma}_n^{-1/2} - \mathbf{I} \| \leq h_S  \right) \geq 1 - 2e^{ -t^2/2}. \label{eq:Proof_pgcmrate-6}  
\end{multline}

It follows, using the fact that the vectors $   \{  \mathbf{x}_n^{(s)}   \}_{s=1}^S$ are independent of the vectors $  \{  \mathbf{x}_{n'}^{(s)}   \}_{s=1}^S$ for $n \neq n'$, that
\begin{align}
    \mbox{Pr} \left( \TS \leq h_S   \right) &\geq \prod_{n=1}^N \mbox{Pr}  \left( \| \bm{\Sigma}_n^{-1/2} \bm{\Sigma}_{n,S} \bm{\Sigma}_n^{-1/2} - \mathbf{I} \| \leq h_S  \right) \notag \\
            &\geq ( 1 - 2e^{ -t^2/2} )^N ,  \label{eq:Proof_pgcmrate-7} 
\end{align}
which proves that, under GCM, $\TS  \overset{p}{\to} 0 \; \text{ as } \; S \to \infty$.

Finally, for $t>0$ and $S$ large enough such that 
$h_S    = 3\delta_S    < 1$, we deduce from~\eqref{eq:Proof_pgcmrate-7} and Lemma~\ref{lem:Bound_Qs_with_Ts} that
\begin{align}
    \mbox{Pr} &  \left( \max_{\aPhi \in \St(M)}  Q_S (\aPhi) \leq MN \frac{h_S^2} { 1 - h_S } \right) \\
    & \geq \mbox{Pr} \left( \TS \leq h_S  \right)   \geq  ( 1 - 2e^{ -t^2/2} )^N.
\end{align}

\section{Proofs of Propositions~\ref{prop:QN_dir_TLNMF} and~\ref{prop:QN_dir_JD}}

\subsection{Optimality Conditions of Problem~\eqref{eq:QN_dir}}

In order to prove Propositions~\ref{prop:QN_dir_TLNMF} and~\ref{prop:QN_dir_JD}, we first explicit the optimality conditions of the quadratic problem~\eqref{eq:QN_dir}.

\begin{lemm}\label{lemm:opt_cond}
Let $\domain$ be the set of anti-symmetric matrices and $\tilde{\LH}$ be positive semi-definite. Then, $\LE \in \domain$ is solution of problem~\eqref{eq:QN_dir} if and only if  $\LE= (\LM - \LM^\tT)/2$ with $\LM \in \R^{M\times M}$ solution of
\begin{equation}\label{eq:lemm_opt_cond}
 \frac14 \sum_{cd} [\mathcal{Z}]_{abcd} \left[\frac{\LM - \LM^\tT}{2} \right]_{cd} = -  [\LG^{\anti}]_{ab}
\end{equation}
where $[\mathcal{Z}]_{abcd} = [ \tilde{\LH}]_{abcd} + [ \tilde{\LH}]_{cdab}+ [ \tilde{\LH}]_{badc}+ [ \tilde{\LH}]_{dcba}$. 
\end{lemm}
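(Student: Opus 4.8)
The plan is to recognize Problem~\eqref{eq:QN_dir} as the minimization of a convex quadratic over the linear subspace $\domain$ of antisymmetric matrices and to simply write out its (necessary and sufficient) first-order condition. First I would introduce the surjective linear parametrization $P:\R^{M\times M}\to\domain$, $P(\LM)=(\LM-\LM^\tT)/2$, which maps onto $\domain$ because $P(\LE)=\LE$ whenever $\LE$ is already antisymmetric. Minimizing the objective $q(\LE):=\lb\LG,\LE\rb+\tfrac12\lb\LE|\tilde{\LH}|\LE\rb$ over $\LE\in\domain$ is then equivalent to minimizing $\tilde q:=q\circ P$ over all of $\R^{M\times M}$, in the sense that $\LE^\star\in\domain$ solves~\eqref{eq:QN_dir} if and only if $\LE^\star=P(\LM^\star)$ for some global minimizer $\LM^\star$ of $\tilde q$ (the ``$\Rightarrow$'' direction uses $\LM^\star=\LE^\star$; the ``$\Leftarrow$'' direction uses $\LE=P(\LE)$ for $\LE\in\domain$).

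Since $\tilde{\LH}$ is positive semi-definite, the quadratic form $\LE\mapsto\lb\LE|\tilde{\LH}|\LE\rb$ is nonnegative, so $q$ is convex, and composing with the linear map $P$ keeps $\tilde q$ convex; hence $\LM^\star$ minimizes $\tilde q$ if and only if $\nabla\tilde q(\LM^\star)=0$. It then remains to compute this gradient. For the linear part, $\lb\LG,P(\LM)\rb=\tfrac12\lb\LG,\LM\rb-\tfrac12\lb\LG^\tT,\LM\rb=\lb\LG^{\anti},\LM\rb$, whose gradient is $\LG^{\anti}$. For the quadratic part I would differentiate $\tfrac12\sum_{abcd}[\tilde{\LH}]_{abcd}[P(\LM)]_{ab}[P(\LM)]_{cd}$ using $\partial[P(\LM)]_{ab}/\partial[\LM]_{pq}=\tfrac12(\delta_{ap}\delta_{bq}-\delta_{aq}\delta_{bp})$; the product rule together with the two-term form of this derivative yields four sums over the remaining index pair, and after relabeling dummy indices and using the antisymmetry of $P(\LM)$ in its two indices to flip the sign of two of these sums, they collapse to $\tfrac14\sum_{cd}[\mathcal{Z}]_{pqcd}[P(\LM)]_{cd}$ with $[\mathcal{Z}]_{abcd}=[\tilde{\LH}]_{abcd}+[\tilde{\LH}]_{cdab}+[\tilde{\LH}]_{badc}+[\tilde{\LH}]_{dcba}$, exactly as in the statement.

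Putting the two pieces together, $\nabla\tilde q(\LM)=0$ reads $[\LG^{\anti}]_{ab}+\tfrac14\sum_{cd}[\mathcal{Z}]_{abcd}[(\LM-\LM^\tT)/2]_{cd}=0$, which is precisely~\eqref{eq:lemm_opt_cond}; combined with the equivalence of the first paragraph and the convexity argument, this gives the claimed characterization. I expect the only genuine obstacle to be the index bookkeeping in the gradient of the quadratic term: one must track the four contributions coming from the product rule, relabel summation indices consistently, and correctly exploit the antisymmetry of $(\LM-\LM^\tT)/2$ so that the surviving terms line up with $[\tilde{\LH}]_{abcd}$, $[\tilde{\LH}]_{cdab}$, $[\tilde{\LH}]_{badc}$, $[\tilde{\LH}]_{dcba}$. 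The surjectivity of $P$ and the convexity/first-order-optimality steps are routine.
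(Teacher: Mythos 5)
Your proposal is correct and follows essentially the same route as the paper: reparametrize the antisymmetric unknown as $\LE=(\LM-\LM^\tT)/2$, use positive semi-definiteness of $\tilde{\LH}$ to reduce the (now unconstrained, convex) problem to a vanishing-gradient condition, compute the gradient of the linear part as $\LG^{\anti}$, and collect the four product-rule/transposition contributions of the quadratic part into $\tfrac14\sum_{cd}[\mathcal{Z}]_{abcd}[(\LM-\LM^\tT)/2]_{cd}$. Your index bookkeeping (relabeling dummies and using antisymmetry of $P(\LM)$ to convert the two negative terms into $[\tilde{\LH}]_{badc}$ and $[\tilde{\LH}]_{dcba}$) checks out and reproduces the paper's computation.
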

\begin{proof}

Using the parametrization of the anti-symmetric matrix $\LE= (\LM - \LM^\tT)/2$, solving the constrained problem~\eqref{eq:QN_dir} is equivalent to solving the unconstrained problem
\begin{equation}\label{eq:minM}
    \widehat{\LM} =  \argmin_{\LM \in \R^{M \times M}}\; J(\LM)
\end{equation}
with 
\begin{equation*}
    J(\LM) = \left\langle \LG , \frac{ \LM - \LM^\tT}{2}  \right\rangle +  \frac{1}{2} \left\langle \frac{ \LM - \LM^\tT}{2} | \tilde{\LH}   |\frac{ \LM - \LM^\tT}{2} \right\rangle.
\end{equation*}
where we recall that  $\left\langle \cdot, \cdot \right\rangle$ and $\left\langle \cdot | \cdot | \cdot \right\rangle$ are defined after equation~\eqref{eq:TaylorExp}.
Given that $J$ is convex (as $\tilde{\LH}$ is positive semi-definite), we get from the first-order optimality conditions that $\nabla J(\widehat{\LM}) = \mathbf{0}$.

Because $ \lb \LG , \LM^\tT \rb = \lb \LG^\tT ,  \LM \rb$, the first term  of $J$ is equal to $\lb  \LG^{\anti}  , \LM \rb$ and its gradient is thus given by $\LG^{\anti}$. To compute the gradient of the second term, let us first expand
\begin{align}
\lb \LM + \bm{\Delta} |  \tilde{\LH} | \LM  + \bm{\Delta} \rb & =  \lb \LM  |  \tilde{\LH} | \LM  \rb + o(\|\bm{\Delta}\|) \notag \\
    & \quad +  \lb \LM |  \tilde{\LH} |  \bm{\Delta} \rb + \lb \bm{\Delta} |  \tilde{\LH} | \LM  \rb. \label{eq:Proof_QN_dir}
\end{align}
Then, by definition of the inner product, we get
\begin{align*}
   & \lb \bm{\Delta}  |  \tilde{\LH} | \LM  \rb  = \sum_{ab} [\bm{\Delta}]_{ab} \sum_{cd}  [ \tilde{\LH}]_{abcd} [\LM]_{cd}, \\
   &  \lb \LM |  \tilde{\LH} |  \bm{\Delta} \rb = \sum_{ab} [\bm{\Delta}]_{ab} \sum_{cd}  [ \tilde{\LH}]_{cdab} [\LM]_{cd} .
\end{align*}
Similar expressions can be obtained when transposing one or both variables of the bilinear form in~\eqref{eq:Proof_QN_dir}.
We finally deduce from these expansions that
\begin{align*}
   & \left[ \nabla \lb \cdot |  \tilde{\LH} | \cdot \rb (\LM) \right]_{ab} = \sum_{cd} ([ \tilde{\LH}]_{abcd} + [ \tilde{\LH}]_{cdab} ) [\LM]_{cd} \\
   & \left[\nabla \lb (\cdot)^\tT |  \tilde{\LH} | \cdot \rb (\LM)\right]_{ab} =  \sum_{cd} ([ \tilde{\LH}]_{cdab} + [ \tilde{\LH}]_{badc} ) [\LM^\tT]_{cd}\\
   &\left[ \nabla \lb \cdot |  \tilde{\LH} |  (\cdot)^\tT \rb (\LM)\right]_{ab} = \sum_{cd} ([ \tilde{\LH}]_{abcd} + [ \tilde{\LH}]_{dcba} ) [\LM^\tT]_{cd}\\
   & \left[\nabla \lb  (\cdot)^\tT |  \tilde{\LH} |  (\cdot)^\tT \rb (\LM) \right]_{ab}= \sum_{cd} ([ \tilde{\LH}]_{badc} + [ \tilde{\LH}]_{dcba} ) [\LM]_{cd}
\end{align*}
Combining all these expressions, we obtain that  $\nabla J(\widehat{\LM}) = \mathbf{0}$ is equivalent to~\eqref{eq:lemm_opt_cond}, which completes the proof.
\end{proof}

\subsection{Proof of Proposition~\ref{prop:QN_dir_TLNMF}}\label{apdx:proof_prop_QN_dir_TLNMF}

By injecting the expression of $ [\tilde{\mathbf{ \LH }}]_{abcd} =  \delta_{ac} \delta_{bd} [\bm{\Gamma}]_{ab} $ into~\eqref{eq:lemm_opt_cond}
we get, for all $(a,b)$,
\begin{equation}\label{eq:Opt_Cond_TL}
       [\bm{\Gamma}^{\symm}]_{ab} \left[\frac{\LM - \LM^\tT}{2} \right]_{ab} = -  [\LG^{\anti}]_{ab}.
\end{equation}

If, for a couple $(a,b)$, we have  $[\bfGamma^{\symm}]_{ab}  =  0$, we get from the definition of  $\bfGamma$ in~\eqref{eq:Gamma_CS} that $\forall (n,s), [\X^{(s)}]_{bn}=0$. It  follows from \eqref{eq:grad_CS} that $ [\LG]_{ab}  = 0$ 
and thus that $[\LG^{\anti}]_{ab}  = 0 $. Hence, for all $(a,b)$ such that  $[\bfGamma^{\symm}]_{ab}  =  0$, fixing $\left[\frac{\LM - \LM^\tT}{2} \right]_{ab}$ to any real is solution of~\eqref{eq:Opt_Cond_TL}. In this work we choose the value $0$ (second line in~\eqref{eq:prop_QN_dir_TL}). For all $(a,b)$ such that $[\bfGamma^{\symm}]_{ab}  \neq  0$, we get from~\eqref{eq:Opt_Cond_TL} the expression given in the first line in~\eqref{eq:prop_QN_dir_TL}.

Finally, let us  show that the optimal solution is a descent-direction, i.e.,  $\lb \LG , \LE \rb  < 0$. 
From the definition of $\bfGamma$, 
we get that $[\bfGamma^{\symm}]_{ab} \geq 0 $.
Moreover, for all $(a,b)$ such that $[\bfGamma^{\symm}]_{ab}  =  0$ we fixed $[\LE]_{ab}=0$. Therefore,
\begin{equation}
        \lb \LG , \LE \rb   = \lb \LG^{\anti} , \LE \rb  = 
    -  \sum_{\substack{(a,b) \\ [\LE]_{ab} \neq 0}} \frac{ [\LG^{\anti}]_{ab}^2 } { \bfGamma^{\symm}_{ab}} < 0. 
\end{equation}

\subsection{Proof of Proposition~\ref{prop:QN_dir_JD}}\label{apdx:proof_prop_QN_dir_JD}

Again, by injecting the expression of
\begin{equation}
	[\tilde{\LH} ]_{abcd} = \delta_{ac}\delta_{bd} 
	[\bfGamma]_{ab} + \delta_{ad} \delta_{bc} - 2 \delta_{abcd} . 
\label{eq:jdhessapprox}
\end{equation}
into~\eqref{eq:lemm_opt_cond}
we obtain, for all $(a, b)$,
\begin{equation}\label{eq:Opt_Cond_JD}
        ([\bm{\Gamma}^{\symm}]_{ab} -1) \left[\frac{\LM - \LM^\tT}{2} \right]_{ab} = -  [\LG^{\anti}]_{ab}.
\end{equation}

If, for a couple $(a,b)$, we have  $[\bfGamma^{\symm}]_{ab}  =  1$, we get from the definition of  $\bfGamma$ in~\eqref{eq:Gamma} that $\forall n $, $\frac{  [ \aPhi ( \Sigma_{n,S} + \eps \Id) \aPhi^\intercal ]_{bb}  }
{ [ \aPhi (\Sigma_{n,S} + \eps \Id) \aPhi^\intercal ]_{aa} }   = 1 $. It  follows from \eqref{eq:grad_LS} that $ [\LG]_{ab}  = [\LG]_{ba} $ 
and thus that $[\LG^{\anti}]_{ab}  = 0 $. Hence, for all $(a,b)$ such that  $[\bfGamma^{\symm}]_{ab}  =  1$, fixing $\left[\frac{\LM - \LM^\tT}{2} \right]_{ab}$ to any real number is solution of~\eqref{eq:Opt_Cond_JD}. In this work we choose the value $0$ (second line in~\eqref{eq:prop_QN_dir_JD}). For all $(a,b)$ such that $[\bfGamma^{\symm}]_{ab}  \neq  1$, we get from~\eqref{eq:Opt_Cond_JD} the expression given in the first line in~\eqref{eq:prop_QN_dir_JD}.

Finally, let us  show that the optimal solution is a descent-direction, i.e.,  $\lb \LG , \LE \rb  < 0$. 
From the definition of $\bfGamma$, 
we get that $[\bfGamma^{\symm}]_{ab} \geq 1 $.
Moreover, for all $(a,b)$ such that $[\bfGamma^{\symm}]_{ab}  =  1$ we fixed $[\LE]_{ab}=0$. Therefore,
\begin{equation}
        \lb \LG , \LE \rb   = \lb \LG^{\anti} , \LE \rb  = 
    -  \sum_{\substack{(a,b) \\ [\LE]_{ab} \neq 0}} \frac{ [\LG^{\anti}]_{ab}^2 } { \bfGamma^{\symm}_{ab} - 1 } < 0. 
\end{equation}

\section{Multi-Initialization Strategy}\label{apdx:init_strategy}


We describe a  multi-initialization strategy that allows to obtain numerical solutions that are close to the solution sets $\Omega^*$ and $\dotr{\Omega}$. To that end, we first consider $P$ random initializations
\begin{equation}
    \{ (\aPhi_0^{(p)},\W_0^{(p)},\Ht_0^{(p)}) \}_{p \leq P}
\end{equation}
from which we run the TL-NMF and JD+NMF solvers in Algorithms~\ref{alg:tl-nmf} and~\ref{alg:jd+nmf}. Given an initialization, each method is run for $J$ iterations. In order to robustify the process, the solutions obtained with JD+NMF (resp. TL+NMF) 
$ \{ (\aPhi_J^{(p)},\W_J^{(p)},\Ht_J^{(p)}) \}_{p \leq P}$
are used as extra $P$ initializations
\begin{equation}
    \{ (\aPhi_0^{(p)},\W_0^{(p)},\Ht_0^{(p)})  \}_{P+ 1 \leq  p \leq 2P}
\end{equation}
for TL-NMF (resp. JD+NMF). 
Finally, for TL-NMF, we preserve only the solution $(\haPhi^\ast,\hW^\ast,\hHt^\ast)$ that achieves the smallest $C_S$. 
For JD+NMF, we preserve only the solution $\dotr{\haPhi}$ that achieves the smallest $L_S$. We then minimize $I_S(\dotr{\haPhi}, \cdot, \cdot)$, started from $ \{ (\W_0^{(p)},\Ht_0^{(p)} ) \}_{p \leq 2P}$, and preserve the best solution $ ( \dotr{\hW} ,\dotr{\hHt} )$ that achieves the smallest loss $I_S$.

\ifCLASSOPTIONcaptionsoff
  \newpage
\fi


\bibliographystyle{IEEEtran}
\bibliography{IEEEabrv,refs}

\end{document}